\newtheorem{theorem}{Theorem}
\newtheorem{lemma}{Lemma}
\newtheorem{definition}{Definition}
\newtheorem{game}{Game}
\newtheorem{corollary}{Corollary}
\newtheorem{proposition}{Proposition}
\begin{document}
\title{Enabling Long-Term Cooperation in Cross-Silo Federated Learning: A Repeated Game Perspective}

\author{Ning~Zhang,~\IEEEmembership{Student~Member,~IEEE,}
        Qian~Ma,~\IEEEmembership{Member,~IEEE,}
        and~Xu~Chen,~\IEEEmembership{Senior~Member,~IEEE}
\IEEEcompsocitemizethanks{
\IEEEcompsocthanksitem This work was supported by the National Natural Science Foundation of China (No. 62002399, No. U20A20159, No. U1711265, No. 61972432), the Program for Guangdong Introducing Innovative and Entrepreneurial Teams (No.2017ZT07X355), and the Pearl River Talent Recruitment Program (No.2017GC010465). (Corresponding author: Qian Ma.)
\IEEEcompsocthanksitem N. Zhang and Q. Ma are with the School of Intelligent Systems Engineering, Sun Yat-sen University, Guangzhou 510006, China.\\
E-mail: zhangn87@mail2.sysu.edu.cn, maqian25@mail.sysu.edu.cn.
\IEEEcompsocthanksitem X. Chen is with the School of Computer Science and Engineering, Sun Yat-sen University, Guangzhou, 510006, China.\\
E-mail: chenxu35@mail.sysu.edu.cn.

}
}

\IEEEtitleabstractindextext{%
\begin{abstract}
Cross-silo federated learning (FL) is a distributed learning approach where clients of the same interest train a global model cooperatively while keeping their local data private. The success of a cross-silo FL process requires active participation of many clients. Different from cross-device FL, clients in cross-silo FL are usually organizations or companies which may execute multiple cross-silo FL processes repeatedly due to their time-varying local data sets, and aim to optimize their long-term benefits by selfishly choosing their participation levels. While there has been some work on incentivizing clients to join FL, the analysis of clients' long-term selfish participation behaviors in cross-silo FL remains largely unexplored. In this paper, we analyze the selfish participation behaviors of heterogeneous clients in cross-silo FL. Specifically, we model clients' long-term selfish participation behaviors as an infinitely repeated game, with the stage game being a selfish participation game in one cross-silo FL process (SPFL). For the stage game SPFL, we derive the unique Nash equilibrium (NE), and propose a distributed algorithm for each client to calculate its equilibrium participation strategy. We show that at the NE, clients fall into at most three categories: (i) \emph{free riders} who do not perform local model training, (ii) a unique \emph{partial contributor} (if exists) who performs model training with part of its local data, and (iii) \emph{contributors} who perform model training with all their local data. The existence of free riders has a detrimental effect on achieving a good global model and sustaining other clients' long-term participation. For the long-term interactions among clients, we derive a cooperative strategy for clients which minimizes the number of free riders while increasing the amount of local data for model training. We show that enforced by a punishment strategy, such a cooperative strategy is a subgame perfect Nash equilibrium (SPNE) of the infinitely repeated game, under which some clients who are free riders at the NE of the stage game choose to be (partial) contributors. We further propose an algorithm to calculate the optimal SPNE which minimizes the number of free riders while maximizing the amount of local data for model training. Simulation results show that our derived optimal SPNE can effectively reduce the number of free riders by up to $99.3\%$ and increase the amount of local data for model training by up to $82.3\%$. 
\end{abstract}

\begin{IEEEkeywords}
Cross-silo federated learning, selfish participation, free rider, long-term cooperation
\end{IEEEkeywords}}

\maketitle

\IEEEdisplaynontitleabstractindextext

\IEEEpeerreviewmaketitle

\IEEEraisesectionheading{\section{Introduction}\label{sec:introduction}}
\subsection{Background and Motivations}\label{BM}

\IEEEPARstart{T}{he} rapid development of 5G and Internet of Things (IoT) technologies accelerates the generation of massive amount of user data \cite{5G}, such as users' finance data in banks and patients' clinical data in hospitals. User data is of paramount importance for artificial intelligence (AI). With sufficient user data, banks can develop AI models for customized financial advice and risk control services, and hospitals can train AI-assisted diagnosis and treatment models. Traditional machine learning approaches usually collect large amounts of raw data and train AI models on a central server, which may lead to privacy leakage. In order to protect data privacy, Google proposed federated learning (FL) \cite{google}, which is a distributed learning approach without sharing raw data. 

In FL, a central server coordinates the model training of many clients \cite{FL}. In the local training steps, each client trains the model by using its local data, and then sends the updated model to the central server. In the global aggregation steps, the central server aggregates the updates and sends the aggregated global model back to clients for the next iteration. The iterations stop until a predefined stopping criterion is satisfied.

\begin{figure}[!t]
\centering
\subfigure[cross-device FL]{
\begin{minipage}[c]{1\linewidth}\label{crod}
\centering
\includegraphics[width=3.5in]{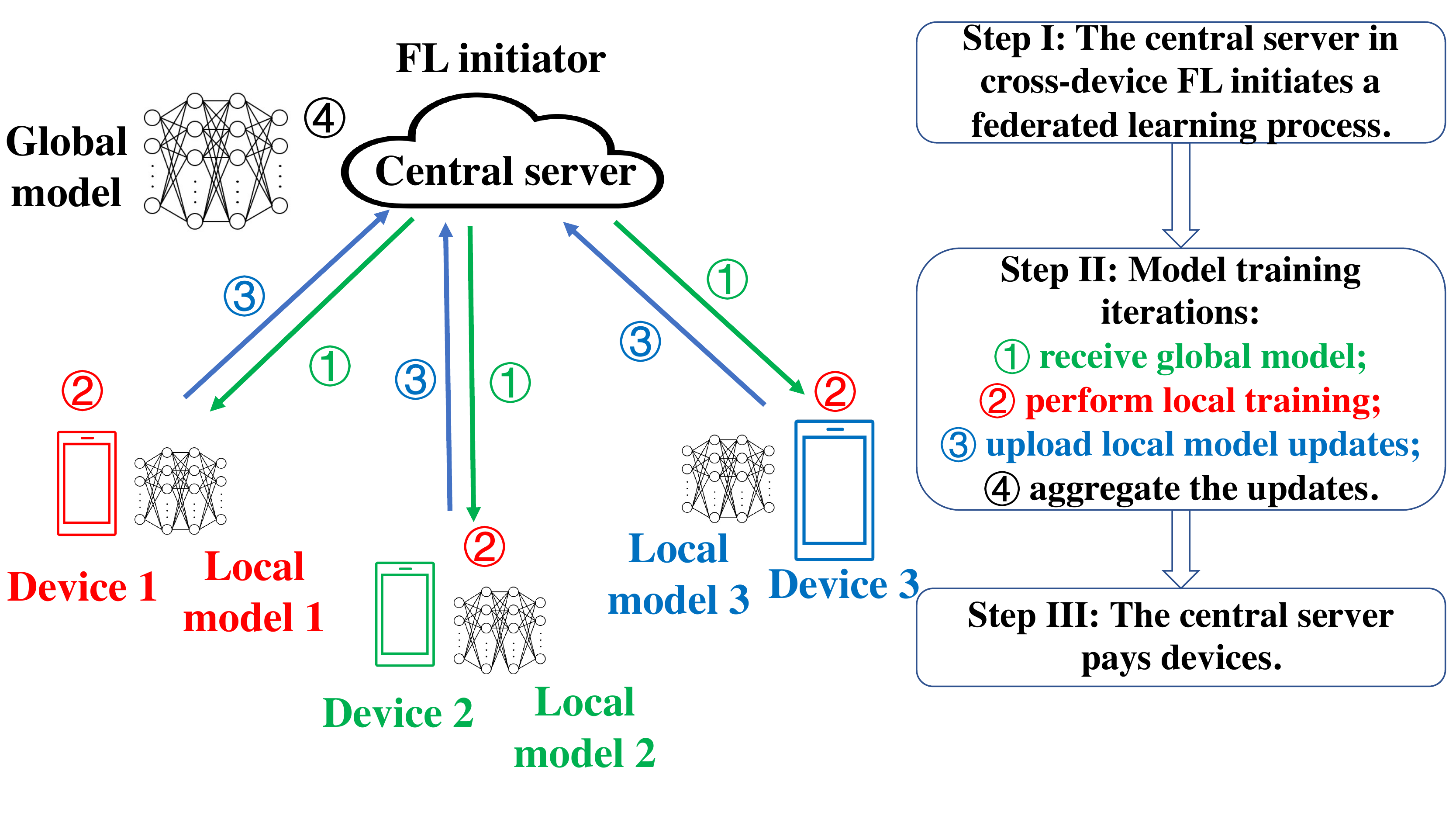}
\end{minipage}
}

\centering
\subfigure[cross-silo FL]{
\begin{minipage}[c]{1\linewidth}\label{cros}
\centering
\includegraphics[width=3.5in]{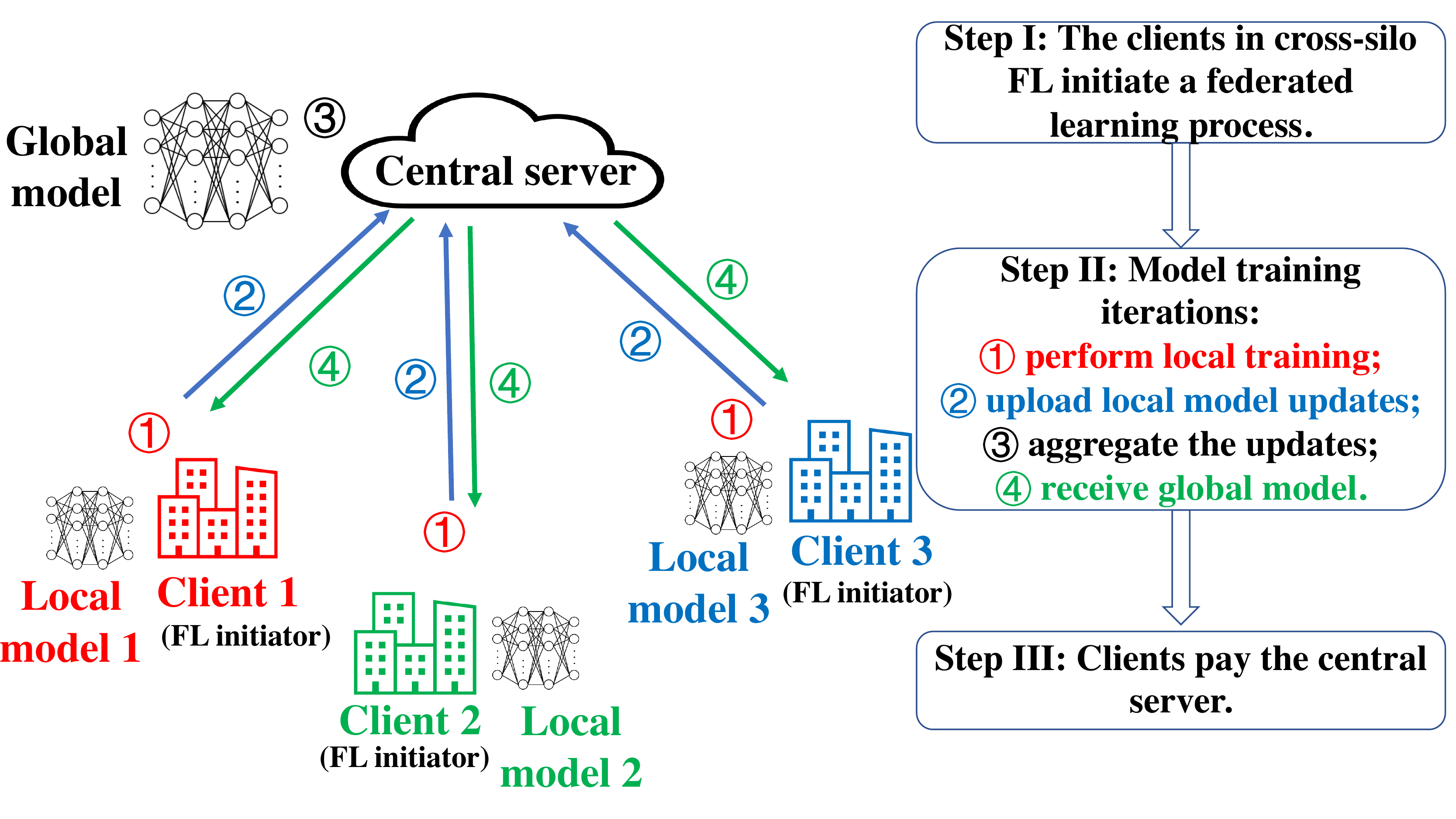}
\end{minipage}
}
\caption{An illustration of (a) cross-device FL and (b) cross-silo FL}
\end{figure}

Depending on the initiator of the FL process and the type of clients, FL is classified into two types: cross-device FL and cross-silo FL \cite{div}. In cross-device FL, as shown in Fig. \ref{crod}, the central server initiates the FL process, and clients are devices who perform local training using their local data. The central server pays each client a reward as the incentive for local training. In cross-silo FL, as shown in Fig. \ref{cros}, clients who are usually companies or organizations of the same interest, initiate the FL process and pay the central server for global aggregation.

In this paper, we study cross-silo FL, which has a wide range of practical applications. For example, WeBank and Swiss Re cooperatively perform data analysis for finance and insurance services \cite{webank}. NVIDIA Clara helps hospitals with different data sets train AI models by FL for mammogram assessment \cite{hospi}. MELLODDY uses cross-silo FL to speed up drug research \cite{mello}. In cross-silo FL, many clients who have the same interest (e.g., training models for finance service or medical diagnosis) train a global model cooperatively, and hence usually can achieve a better model compared with the ones trained only locally \cite{div}. Despite the popular applications of cross-silo FL seen in practice, there is little work analyzing clients' participation behaviors in cross-silo FL theoretically, which is the focus of this paper. 

In cross-silo FL, since clients usually belong to different entities (e.g., companies or organizations), each client may behave selfishly to maximize its own benefit. Specifically, each client selfishly chooses its participation level (i.e., the amount of local data) for the local training steps. The total amount of local data chosen by all clients will affect the global model accuracy, while the model training leads to costs to clients. In practice, clients have heterogeneous valuations for global model accuracy and incur different computation costs \cite{incen2}. Therefore, clients will make a tradeoff between the achieved global model accuracy and the incurred cost. For example, clients with high valuations for global model accuracy and low computation costs may choose a high participation level to achieve a good global model accuracy. On the other hand, clients with low valuations for global model accuracy and high computation costs may choose not to perform local training to reduce the cost, and we call such clients as \emph{free riders}. This motivates us to study clients' selfish participation behaviors in cross-silo FL through a game-theoretic approach. In this paper, we aim at addressing the following fundamental question in cross-silo FL.

\textbf{Key Question 1:} 
\emph{How do heterogeneous clients selfishly choose their participation levels in cross-silo FL?} 

Different from cross-device FL, cross-silo FL usually involves long-term repeated interactions among clients. For example, the MELLODDY project is a long-term project which involves repeated interactions among 17 partners. One reason is that clients' local data may change over time, and hence clients will perform multiple cross-silo FL processes repeatedly to adapt the global model to the time-varying local data sets. For example, hospitals constantly admit new patients and collect treatment data of these cases. Furthermore, different from cross-device FL where the central server chooses different devices to perform different FL tasks and devices aim to maximize their short-term benefits, in cross-silo FL, the same set of clients initiate FL processes repeatedly and train the global model cooperatively. Moreover, clients (e.g., organizations or companies) are usually far-sighted and aim to maximize their long-term benefits. Therefore, it is necessary to analyze clients' long-term interactions in cross-silo FL. However, the existence of free riders is detrimental to the global model for the following two reasons. First, when many clients choose to be free riders, the amount of local data for model training is small, which leads to a bad global model accuracy. Second, the behavior of free riders is unfair to the clients who use their local data to perform local trainings, which will disrupt clients' long-term cooperation. To sustain clients' long-term cooperation in cross-silo FL, we need to reduce the number of free riders. This motivates us to address the second fundamental question in cross-silo FL.

\textbf{Key Question 2:}
\emph{How to minimize the number of free riders to sustain clients' long-term cooperation in cross-silo FL?}

\begin{figure}[t]
\centering
\includegraphics[width=3.5in]{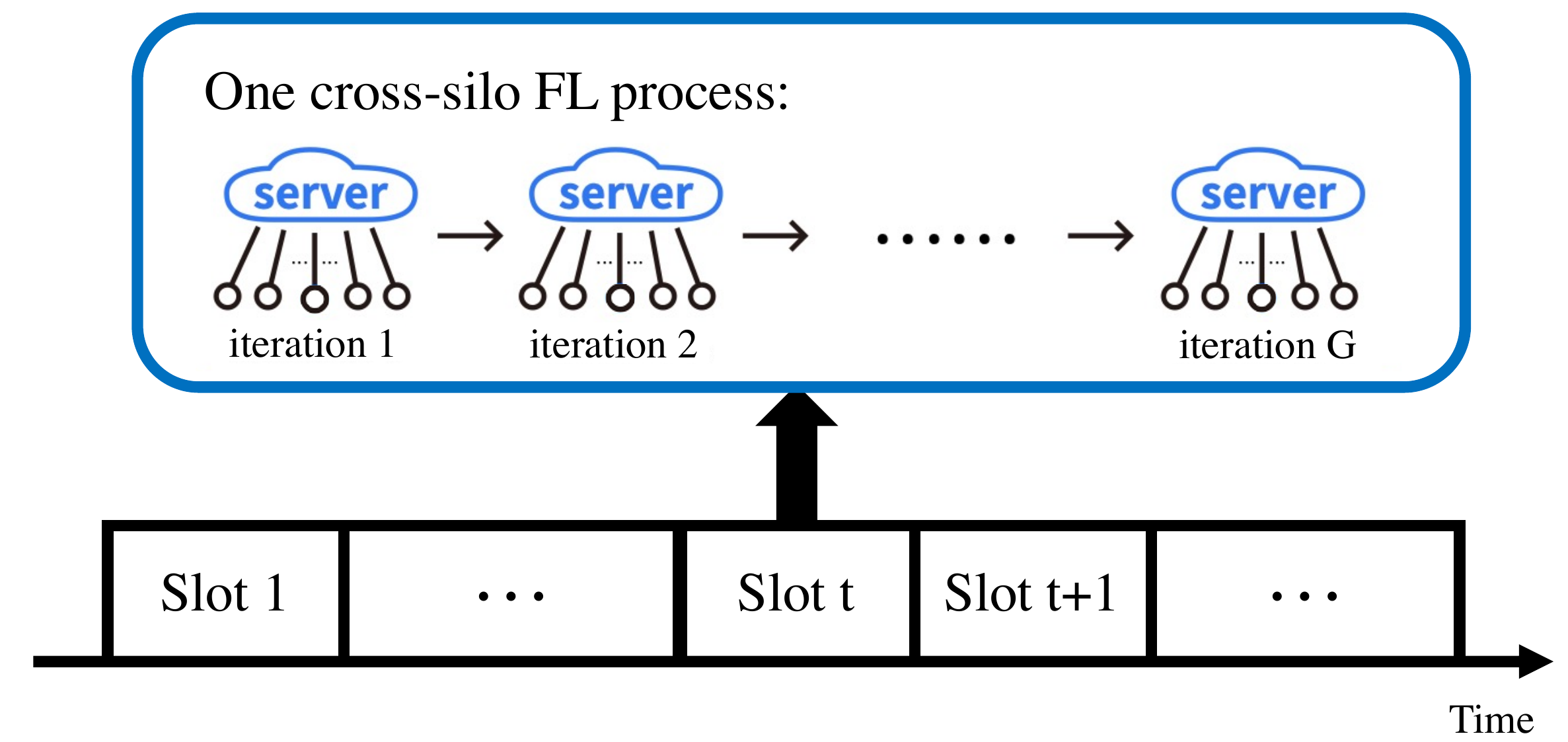}
\centering
\caption{The cross-silo FL processes in an infinite time horizon}\label{fig2}
\end{figure}

\subsection{Contributions}

In this paper, we analyze clients' participation behaviors in cross-silo FL through a game-theoretic approach. We consider an infinite time horizon which is divided into many time slots, as shown in Fig. \ref{fig2}. We model the interactions among clients in the infinite time horizon as a repeated game with the stage game being the selfish participation game in one cross-silo FL process (SPFL) in each time slot. Specifically, in each time slot, clients perform one cross-silo FL process which proceeds in several iterations of local training steps and global aggregation steps. Each time slot can be one month for banks \cite{month} or one week for hospitals \cite{weekly}. Since the local data may change dynamically over time (e.g., banks update the finance data sets monthly or even weekly \cite{month}, and hospitals update the clinical data sets weekly \cite{weekly}), clients need to repeatedly perform cross-silo FL processes to adapt the global model to the dynamic local data. We summarize our main contributions as follows.

\begin{itemize}

\item \emph{Novel Game Analysis in Cross-Silo FL: }To the best of our knowledge, this is the first paper that provides a comprehensive game-theoretic analysis of the long-term participation behaviors of heterogeneous clients in cross-silo FL from the repeated game perspective.
\item \emph{Selfish Participation Game in One Cross-Silo FL Process (SPFL): }For the stage game SPFL, we derive the unique Nash equilibrium (NE), and propose a distributed algorithm for each client to calculate its equilibrium participation strategy. We show that at the NE, clients fall into at most three categories: (i) clients with low valuation-computation ratios being \emph{free riders} who do not perform local model training, (ii) a unique client (if exists) being the \emph{partial contributor} who performs model training with part of its local data, and (iii) clients with high valuation-computation ratios being \emph{contributors} who perform model training with all their local data. 
\item \emph{Infinitely Repeated Game in the Long-Term Cross-Silo FL Processes: }In the infinitely repeated game, we derive a subgame perfect Nash equilibrium (SPNE), i.e., a cooperative strategy enforced by a punishment strategy, which achieves the minimum number of free riders while increasing the amount of local data for model training.  
Deriving the cooperative strategy is challenging since clients are heterogeneous and their decisions are coupled in a highly non-linear manner. We further propose an algorithm to calculate the optimal SPNE which minimizes the number of free riders while maximizing the amount of local data for model training.
\item \emph{Simulation Results: } We conduct extensive numerical evaluations and derive useful insights. First, at the NE of the stage game, the number of free riders increases with the number of clients and the size of the local data set. Our proposed optimal SPNE can effectively reduce the number of free riders by up to $99.3\%$, and increase the amount of local data for model training by up to $82.3\%$. The optimal SPNE performs well even when the number of clients and the size of local data set are large, which shows its scalability. Second, compared with other methods, our proposed optimal SPNE has distinct advantages in the number of contributors and the total amount of training data.
\end{itemize}

\subsection{Related Work}

FL has drawn researchers' attention in recent years. McMahan \emph{et al.} propose and illustrate the effectiveness of FL \cite{first}. In the following, we discuss related work regarding FL optimization, incentive mechanism design, the free rider issue, and cross-silo FL.

\textbf{FL Optimization}: Some papers aim to optimize different objectives in FL, such as the learning time, the model accuracy, and the energy consumption of local devices. Zhong \emph{et al.} propose the \emph{P-Fedavg} algorithm for the Parallel FL system with provable convergence rate \cite{newadd2}. Wang \emph{et al.} propose a GCN-based algorithm to derive the optimal device sampling strategy considering D2D offloading to maximize the training accuracy \cite{newadd4}. Mo \emph{et al.} optimize the energy efficiency by balancing the energy tradeoff between communication and computation in FL \cite{opti4}. Luo \emph{et al.} propose a sampling based algorithm to minimize the total cost with marginal overhead \cite{opti5}.

\textbf{Incentive Mechanism Design}: There are many papers focusing on the incentive mechanism design for FL. For example, Ding \emph{et al.} in \cite{incen1} consider contract design for the central server in cross-device FL under multi-dimensional private information. Sun \emph{et al.} in \cite{newadd1} propose a contract-based incentive mechanism to customize the payment for each participating worker considering personalized privacy preference. Zhan \emph{et al.} in \cite{res-c10} propose a DRL-based incentive mechanism to capture the tradeoff between payment and training time in FL without prior knowledge of edge nodes. Zeng \emph{et al.} in \cite{res-c11} propose an auction-based lightweight incentive mechanism considering multi-dimensional and dynamic edge resources to guide the server to effectively choose clients to participate in FL. Deng \emph{et al.} in \cite{newadd3} propose an auction-based quality-aware system called FAIR to estimate clients’ learning qualities and encourage high-quality clients' participation. Zhan \emph{et al.} in \cite{res-c12} summarize the papers on incentive mechanism design for FL from three aspects: clients’ data contribution, clients’ reputation, and resource allocation. However, none of these papers solves the free-rider problem in federated learning. Our work complements the research in this area.

\textbf{Free Rider Issue}: Free riders are those who benefit from resources or public goods but refuse to pay for them \cite{free1}, which have been extensively studied in peer-to-peer systems \cite{free4}. Several papers study how to identify free riders in FL. Lin \emph{et al.} reveal the free-riding attack in FL and propose a novel anomaly detection technique using autoencoders to identify free riders \cite{free5}. Huang \emph{et al.} design a protection mechanism named Gradient Auditing to detect and punish the free-riding behavior \cite{prox2}. Fraboni \emph{et al.} propose to establish a routine practice to inspect clients' distribution for the detection of free-rider attacks in FL \cite{prox3}. Although these papers notice the existence of free riders in FL, they only focus on identifying free riders rather than addressing the problem effectively. 

\textbf{Cross-silo FL}: There have been some researches analyzing cross-silo FL. Chen \emph{et al.} propose FOCUS for cross-silo FL to solve the problem of noisy local labels \cite{csr1}. Heikkila \emph{et al.} combine additively homomorphic secure summation protocols with differential privacy in cross-silo FL to learn complex models while guaranteeing privacy of local data \cite{csr2}. Majeed \emph{et al.} build a cross-silo horizontal federated model for traffic classification \cite{csr4}. Zhang \emph{et al.} present a system solution BatchCrypt for cross-silo FL to reduce the encryption and communication overhead caused by additively homomorphic encryption \cite{silo1}. Tang \emph{et al.} propose an incentive mechanism considering the public goods feature of cross-silo FL \cite{incen2}. However, none of the above papers analyze the long-term selfish participation behaviors of clients in cross-silo FL.

In summary, existing papers mainly focus on cross-device FL, and none of these papers studies clients' long-term selfish participation behaviors in cross-silo FL. As far as we know, this is the first work that analyzes the long-term participation behaviors of heterogeneous clients in cross-silo FL and proposes a strategy to reduce the number of free riders through the game-theoretic approach. 

We organize our paper as follows. In Section \ref{ii}, we present the system model. Then we analyze the stage game SPFL in Section \ref{iii}. We analyze the infinitely repeated game in Section \ref{iiii}. We present simulation results in Section \ref{iiiii} and conclude in Section \ref{iiiiii}. Due to space constraints, we relegate all the proofs to the supplementary material.

\section{System Model}\label{ii}

We consider a set $\mathcal{N}=\{1,2,\ldots, N\}$ of clients (e.g., companies or organizations) with the same interest participating in the cross-silo FL processes. Each client has some local data which may change over time, and clients will perform multiple cross-silo FL processes repeatedly to adapt the global model to the time-varying local data sets. Since clients are far-sighted and aim to optimize their long-term benefits, we study clients' long-term interactions in cross-silo FL processes in the infinite time horizon which is divided into time slots. In the following, we first describe a cross-silo FL process in one time slot, then model the cost of clients in a cross-silo FL process, and finally show the interactions among clients in two time scales.

\subsection{A Cross-Silo FL Process in One Time Slot}

As shown in Fig. \ref{fig2}, each time slot corresponds to one cross-silo FL process which proceeds in training iterations. We first introduce the objective of cross-silo FL, and then describe the iteration process. 

In a cross-silo FL process, clients of the same interest cooperatively train a global model represented by a parameter vector $\boldsymbol{w}$. Each client $n \in \mathcal{N}$ has a local data set $\mathcal{D}_n$, where the number of local data samples is $D_n \triangleq |\mathcal{D}_n|$.\footnote{Although different clients have different local data, we assume that the data is i.i.d. across all clients \cite{wiopt}.} Each client can choose a subset $\mathcal{X}_n \subseteq \mathcal{D}_n$ of local data for model training. Let $x_n$ denote the size of the chosen subset of local data, i.e., $x_n \triangleq |\mathcal{X}_n|$, and let $\boldsymbol{d}_{ni}$ denote the $i$-th data sample in set $\mathcal{X}_n$. Cross-silo FL aims to find the optimal global model $\boldsymbol{w}^\ast$ that minimizes the global loss function \cite{div}: 
\begin{equation}
L(\boldsymbol{w})=\sum_{n\in\mathcal{N}}\frac{x_n}{\sum_{n'\in\mathcal{N}}x_{n'}}L_n(\boldsymbol{w}).
\end{equation}
Here $L_n(\boldsymbol{w})$ is the local loss function of client $n$:
\begin{equation}
L_n(\boldsymbol{w})=\frac{1}{x_n}\sum_{\boldsymbol{d}_{ni}\in \mathcal{X}_n} l(\boldsymbol{w};\boldsymbol{d}_{ni}).
\end{equation}
Here $l(\boldsymbol{w};\boldsymbol{d}_{ni})$ is the loss function for data sample $\boldsymbol{d}_{ni}$ under $\boldsymbol{w}$. 

To achieve the optimal global model $\boldsymbol{w}^\ast$, cross-silo FL proceeds in training iterations. One widely adopted algorithm to derive $\boldsymbol{w}^\ast$ is the FedAvg algorithm \cite{first}.\footnote{In cross-silo FL, we assume that all clients participate in training iterations, and clients perform synchronous update scheme \cite{div}.} In each training iteration $r$, clients perform local model trainings over the previous global model $\boldsymbol{w}^{r-1}$ with the chosen subsets of local data by using the mini-batch stochastic gradient descent (SGD) method \cite{minibatch}. Clients derive the updated local models $\boldsymbol{w}^r_n, \forall n\in\mathcal{N}$, and send them\footnote{We assume that clients will truthfully report their local models to the central server and the central server can verify clients' contributions (e.g., by the Trusted Execution Environments proposed in \cite{bingo}).} to the central server for global aggregation. The central server derives the updated global model $\boldsymbol{w}^r=\sum_{n\in\mathcal{N}}\frac{x_n}{\sum_{n'\in\mathcal{N}}x_{n'}}\boldsymbol{w}^r_n$ \cite{first}. The above cross-silo FL process causes costs to clients, which we will introduce in detail next.

\subsection{Cost of Clients in a Cross-Silo FL Process}

In cross-silo FL, clients incur the following costs: the model accuracy loss, the computation cost, the communication cost, and the payment to the central server.

\subsubsection{Model Accuracy Loss}

The purpose of clients to perform cross-silo FL is to achieve a global model with good accuracy, i.e., a global model with a small accuracy loss \cite{wiopt}. The global model accuracy loss can be calculated as $L(\boldsymbol{w}^G) - L(\boldsymbol{w}^\ast)$, where $L(\boldsymbol{w}^G)$ and $L(\boldsymbol{w}^\ast)$ are the global losses under parameters $\boldsymbol{w}^G$ and $\boldsymbol{w}^\ast$, respectively, and $G$ is the number of training iterations in a cross-silo FL process. According to \cite{BD}, the expected global model accuracy loss is bounded by $O(1 / \sqrt{BG} + 1/G)$, where $B$ is the total batch size that clients use for model trainings, i.e., $B=\sum_{n\in\mathcal{N}}x_n$.\footnote{Note that in FL, the central server knows clients' chosen amounts of local data for model trainings, i.e., the values of $x_n, \forall n\in\mathcal{N}$, which are the weights in the global aggregation steps \cite{incen2}\cite{wiopt}.} Hence, the expected global model accuracy loss decreases with the amount of local data $\sum_{n\in\mathcal{N}} x_n$ for model trainings and the number of training iterations $G$. In this paper, we denote the model accuracy loss of each client $n\in\mathcal{N}$ as $A(x_n, \boldsymbol{x}_{-n})$, which depends on all clients' chosen amounts of local data for model trainings and can be calculated as follows:\footnote{In this paper, we assume a complete information setting where each client $n\in\mathcal{N}$ knows other clients' chosen amounts of local data, i.e., the values of $x_{n'}, \forall n'\in\mathcal{N}, n' \neq n$. This information can be announced by the central server to clients. Note that our proposed algorithms later in the paper which calculate the equilibrium strategy for each client do not require such information. The analysis in the complete information setting provides useful insights for practical cross-silo FL systems. For the analysis of the incomplete information setting, we will leave it as future work.}
\begin{equation}
A(x_n, \boldsymbol{x}_{-n}) = \frac{1}{\sqrt{(x_n + \sum_{n'\in\mathcal{N},n'\neq n}x_{n'})G}} + \frac{1}{G},
\end{equation}
where $\boldsymbol{x}_{-n} = \{x_1, x_2, \ldots, x_{n-1}, x_{n+1}, \ldots, x_N\}$.

\subsubsection{Computation Cost}

The local model training steps consume local computation resources. In a cross-silo FL process, the CPU energy consumption of each client $n\in\mathcal{N}$, which depends on its chosen amount of local data $x_n$ for model training, can be calculated as \cite{com0}
\begin{equation}
\mathcal{E}(x_n) = \frac{\varsigma_n}{2} \mu_n \vartheta_n^2 x_n,
\end{equation}
where $\varsigma_n$ is a coefficient depending on the client's computing chip architecture, $\mu_n$ is the number of CPU cycles of client $n$ to perform the model training on one local data sample, and $\vartheta_n$ is the CPU processing speed (in cycles per second) of client $n$. We denote the \emph{computation cost coefficient} of client $n$ as $E_n=\frac{\varsigma_n}{2} \mu_n \vartheta_n^2$, and the computation cost of client $n$ can be calculated as $\mathcal{E}(x_n)=E_nx_n$.


\subsubsection{Communication Cost}

In a cross-silo FL process, clients send their local models $\boldsymbol{w}_n^r, \forall n\in\mathcal{N}$, to the central server for aggregation, and receive the updated global model $\boldsymbol{w}^r$ for the next training iteration, for all $r=1,2, \ldots, G$. Since clients in cross-silo FL are companies or organizations, the data transmission between clients and the central server can be through either wired networks (e.g., through the high-speed wired connections \cite{wired}) or wireless networks (e.g., using transmission protocols TDMA, OFDMA or NOMA \cite{res-a1}). When transmitting model updates, clients usually have different data transmission rates which depend on their transmission power and channel coefficients in wireless networks \cite{res-a2}. In this case, clients will incur heterogeneous communication costs (e.g., different transmission delays). We denote the communication cost of each client $n\in\mathcal{N}$ as $C_n$. Since clients' model updates have the same size, $C_n$ does not depend on the participation strategy $x_n$ and is a constant in our paper.

\subsubsection{Payment to the Central Server} 

Clients need to pay the central server for global model aggregation. We assume that the payment of each client is $p$. 

In summary, we define the total cost of each client $n\in \mathcal{N}$ as follows, 
\begin{equation}\label{Fn}
F_n(x_n, \boldsymbol{x}_{-n}) = \rho_nA(x_n, \boldsymbol{x}_{-n}) + \mathcal{E}(x_n) + C_n + p.
\end{equation} 
Here $\rho _n$ denotes client $n$'s valuation for the model accuracy, which describes how important the model accuracy is to the client \cite{incen2}. For example, $\rho_n$ can be a bank's customer churn per unit of model accuracy loss when using the risk control service trained by cross-silo FL, or a pharmaceutical company's unit revenue loss when using the drug research model trained by cross-silo FL.

\subsection{Interactions among Clients in Two Time Scales}\label{DD}

In this subsection, we consider the interactions among clients in two time scales as shown in Fig. 2. Specifically, we describe the behaviors of clients in each time slot and in the infinite time horizon, respectively.

We first analyze the interactions among clients in one time slot. As in \cite{month}\cite{weekly}, the length of a time slot can be a month for banks or a week for hospitals. One time slot corresponds to one cross-silo FL. At the beginning of the time slot, each client $n$ chooses the amount of local data $x_n$ for model training (i.e., its participation level in the cross-silo FL process), to minimize its total cost calculated in \eqref{Fn}, considering the participation behaviors of other clients. When clients are myopic and only care about their costs in the current time slot, we model their participation behaviors as a selfish participation game in one cross-silo FL process (SPFL), to be introduced in detail in Section 3.

We then analyze the interactions among clients in the infinite time horizon which is divided into many time slots. In the infinite time horizon, each client's local data set changes over time, i.e., $\mathcal{D}_n^{t-1} \neq \mathcal{D}_n^t, \forall n\in\mathcal{N}$, where $\mathcal{D}_n^{t-1}$ and $\mathcal{D}_n^t$ are the local data sets of client $n$ in time slot $t-1$ and time slot $t$, respectively. We assume that the number of local data samples remains unchanged, i.e., $\vert \mathcal{D}_n^{t-1} \vert = \vert \mathcal{D}_n^t \vert, \forall n\in\mathcal{N}$.\footnote{For banks or hospitals, the number of users does not change much from month to month, so we assume that the number of local data samples remains the same.} To adapt the global model to the time-varying data sets, clients perform cross-silo FL processes repeatedly. Specifically, at the beginning of each time slot $t$, clients receive the global model $\boldsymbol{w}^{t-1}$ in the previous time slot, and then perform a cross-silo FL process using their local data (as described in the selfish participation game SPFL) to derive the global model $\boldsymbol{w}^t$. In the infinite time horizon, clients are far-sighted, and each client $n$ chooses the amount of local data for model training to minimize its long-term discounted total cost. We model clients' long-term selfish participation behaviors as an infinitely repeated game with the stage game being SPFL, which will be introduced in detail in Section 4.

\section{Stage Game Analysis}\label{iii}

In this section, we analyze the stage game, i.e., the selfish participation game in one cross-silo FL process (SPFL). We derive the unique Nash equilibrium of the stage game SPFL, and design a distributed algorithm for each client to compute its equilibrium participation strategy. 

\subsection{Stage Game Modeling}\label{subsec1}

In a cross-silo FL process, clients selfishly choose their participation levels to minimize their own costs. We model the behaviors of clients in each time slot as a selfish participation game as follows:

\begin{game}[Selfish Participation Game in a Cross-Silo FL Process (SPFL)]\label{game11}
$ $
\begin{itemize}
 \item Players: the set $\mathcal{N}$ of clients.
 \item Strategies: each client $n\in\mathcal{N}$ chooses the amount of local data $x_n \in [0, D_n]$ for model training.
 \item Objectives: each client $n\in\mathcal{N}$ aims to minimize its total cost $F_n(x_n, \boldsymbol{x}_{-n})$ defined in \eqref{Fn}.
\end{itemize}
\end{game}

Here $D_n$ is the total number of data samples in client $n$'s local data set $\mathcal{D}_n$, i.e., $D_n=|\mathcal{D}_n|$.

\subsection{Nash Equilibrium of SPFL}

Next we define the Nash equilibrium (NE) of SPFL and derive the unique NE later. 

\begin{definition}[Nash Equilibrium]

A Nash equilibrium of Game \ref{game11} is a strategy profile $\boldsymbol{x}^\ast=\{x_n^\ast: \forall n\in\mathcal{N}\}$ such that for each client $n\in\mathcal{N}$,
\begin{equation*}
F_n(x_n^\ast,\boldsymbol{x}_{-n}^\ast) \leq F_n(x_n,\boldsymbol{x}_{-n}^\ast), \mbox{ for all } x_n\in [0,D_n]. 
\end{equation*}

\end{definition}

At the NE, each client's strategy is the best response to other clients' strategies. In Lemma \ref{lem1}, we characterize the best response of each client $n\in\mathcal{N}$ that minimizes its total cost given all other clients' strategies $\boldsymbol{x}_{-n}$. 

\begin{lemma}\label{lem1}

The best response of each client $n\in\mathcal{N}$ in Game \ref{game11} is 
\begin{equation}
\begin{aligned}
&x_n^{\rm BR}(\boldsymbol{x}_{-n})= \\
&\min \left\{  D_n, \max \left\{ \sqrt[3]{\frac{\rho_n^2}{4GE_n^2}} - \sum_{n'\in\mathcal{N},n'\neq n} x_{n'}, 0 \right\}  \right\}.
\end{aligned}
\end{equation}
\end{lemma}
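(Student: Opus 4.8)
The plan is to treat the best-response problem as a one-dimensional convex minimization. Fix the opponents' profile $\boldsymbol{x}_{-n}$ and write $S_{-n} \triangleq \sum_{n'\in\mathcal{N},n'\neq n} x_{n'}$. Substituting the expressions for the model accuracy loss $A$ and the computation cost $\mathcal{E}$ into \eqref{Fn}, the objective as a function of $x_n$ alone becomes
\begin{equation*}
F_n(x_n,\boldsymbol{x}_{-n}) = \frac{\rho_n}{\sqrt{G}}\,(x_n+S_{-n})^{-1/2} + E_n x_n + \frac{\rho_n}{G} + C_n + p,
\end{equation*}
so only the first two terms depend on $x_n$ and everything else is a constant that can be ignored for the minimization over the interval $[0,D_n]$.

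First I would establish strict convexity. Differentiating twice in $x_n$ gives $\partial^2 F_n/\partial x_n^2 = \tfrac{3\rho_n}{4\sqrt{G}}(x_n+S_{-n})^{-5/2} > 0$ wherever $x_n+S_{-n}>0$, so $F_n(\cdot,\boldsymbol{x}_{-n})$ is strictly convex on the relevant domain and has at most one minimizer. Next I would compute the stationary point from the first-order condition $\partial F_n/\partial x_n = -\tfrac{\rho_n}{2\sqrt{G}}(x_n+S_{-n})^{-3/2} + E_n = 0$, which rearranges to $x_n+S_{-n} = \bigl(\tfrac{\rho_n}{2\sqrt{G}E_n}\bigr)^{2/3}$; using $4^{1/3}=2^{2/3}$ this equals $\sqrt[3]{\rho_n^2/(4GE_n^2)}$, matching the stated form. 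Hence the unconstrained minimizer is $\widetilde{x}_n \triangleq \sqrt[3]{\rho_n^2/(4GE_n^2)} - S_{-n}$.

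Finally, by strict convexity the minimizer of $F_n(\cdot,\boldsymbol{x}_{-n})$ subject to the box constraint $x_n\in[0,D_n]$ is the projection of $\widetilde{x}_n$ onto $[0,D_n]$, namely $\min\{D_n,\max\{\widetilde{x}_n,0\}\}$, which is exactly the claimed $x_n^{\mathrm{BR}}(\boldsymbol{x}_{-n})$. The only point requiring care is the degenerate case $S_{-n}=0$, where $A(x_n,\boldsymbol{x}_{-n})\to\infty$ as $x_n\to 0^+$ and $F_n$ is only finite on $(0,D_n]$; here I would observe that $\sqrt[3]{\rho_n^2/(4GE_n^2)}>0$ strictly (since $\rho_n,E_n,G>0$), so the stationary point is interior and positive, the clamp at $0$ is inactive, and the formula still gives the correct minimizer. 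Symmetrically, whenever $\widetilde{x}_n\le 0$ one necessarily has $S_{-n}>0$, so $x_n=0$ is feasible with finite cost and is the minimizer by convexity. Apart from this boundary bookkeeping, the argument is routine calculus, so I do not anticipate a real obstacle.
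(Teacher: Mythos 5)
Your proposal is correct and follows essentially the same route as the paper's proof: compute the stationary point from the first-order condition, verify convexity via the positive second derivative, and clamp the unconstrained minimizer to $[0,D_n]$. Your extra remark on the degenerate case $S_{-n}=0$ is a nice bit of care the paper omits, but it does not change the argument.
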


\begin{proof}
See Appendix A in the supplementary material. 
\end{proof}

Client $n$'s best response $x_n^{\rm BR}(\boldsymbol{x}_{-n})$ increases with its valuation $\rho_n$ for model accuracy, while decreases with its computation cost coefficient $E_n$, the number of iterations $G$, and the total amount of local data $\sum_{n'\in\mathcal{N},n'\neq n} x_{n'}$ that other clients choose for model training. Intuitively, when client $n$ has a high valuation for the global model accuracy, client $n$ may choose a large amount of local data for model training to achieve a good global model. On the contrary, when client $n$ has a large computation cost coefficient $E_n$, client $n$ may choose a small amount of local data for training to reduce the computation cost. Furthermore, when the number of iterations $G$ is large, clients can achieve a good global model with a small amount of local data as shown in (3). Similarly, when the total amount of local data from other clients is large, the accuracy loss in (3) is small and client $n$ will choose a small amount of local data for model training to reduce the computation cost.

Next we derive the NE of Game 1. For simplicity of analysis, we assume that $D_n=D,\forall n\in\mathcal{N}$. For each client $n\in\mathcal{N}$, we define $h_n \triangleq \sqrt[3]{\frac{\rho_n^2}{4GE_n^2}}$, which depends on $\frac{\rho_n}{E_n}$, the \emph{valuation-computation ratio} (i.e., the ratio of the valuation parameter to the computation cost coefficient). Without loss of generality, we assume that $\frac{\rho_1}{E_1} \leq \frac{\rho_2}{E_2} \leq \cdots \leq \frac{\rho_N}{E_N}$. 

\begin{theorem}\label{tho1}
A Nash equilibrium $\boldsymbol{x}^\ast$ always exists in Game \ref{game11}, and falls into one of the following two cases. 
\begin{itemize}
\item Case I: If there exists a critical client $k\in\mathcal{N}$ that satisfies $(N-k)D \leq h_k \leq (N+1-k)D$, then the NE $\boldsymbol{x}^\ast=\{x_n^\ast: \forall n \in \mathcal{N}\}$ is: 
\begin{equation}
x_n^\ast=
\left\{
\begin{aligned}
& 0, & \mbox{ if } n < k;\\
& h_k - (N-k)D, & \mbox{ if } n = k;\\
& D, & \mbox{ if } n>k.
\end{aligned}
\right.
\end{equation}
\item Case II: If the critical client $k$ in Case I does not exist, then there must exist a client $m \in \mathcal{N}$ that satisfies $h_m < (N-m)D < h_{m+1}$, and in this case, the NE $\boldsymbol{x}^\ast=\{x_n^\ast: \forall n \in \mathcal{N}\}$ is: 
\begin{equation}
x_n^\ast=
\left\{
\begin{aligned}
& 0, & \mbox{ if } n \leq m;\\
& D, & \mbox{ if } n>m.
\end{aligned}
\right.
\end{equation}
\end{itemize}
\end{theorem}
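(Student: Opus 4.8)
The plan is to exploit the aggregative structure of Game~\ref{game11}: by Lemma~\ref{lem1}, client $n$'s best response depends on $\boldsymbol{x}_{-n}$ only through the aggregate $\sum_{n'\neq n}x_{n'}=S-x_n$, where $S=\sum_{n'\in\mathcal{N}}x_{n'}$ is the total amount of training data. Existence of a Nash equilibrium is then immediate from Brouwer's fixed-point theorem, since $\boldsymbol{x}\mapsto(x_1^{\rm BR}(\boldsymbol{x}_{-1}),\dots,x_N^{\rm BR}(\boldsymbol{x}_{-N}))$ is a continuous self-map of the compact convex cube $\prod_{n\in\mathcal{N}}[0,D]$ (each $x_n^{\rm BR}$ being a composition of $\min$, $\max$, and affine maps); moreover any NE has total $S^\ast>0$, since otherwise the highest-ratio client $N$ could profitably deviate to $\min\{D,h_N\}>0$. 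The substance of the theorem is the characterization.

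I would next prove a structural lemma. At an NE $\boldsymbol{x}^\ast$ with total $S^\ast$, the best-response identity reads $x_n^\ast=\min\{D,\max\{h_n-(S^\ast-x_n^\ast),0\}\}$; resolving the three branches of the clipping shows that $h_n<S^\ast\Rightarrow x_n^\ast=0$, that $h_n>S^\ast\Rightarrow x_n^\ast=D$, and that $0<x_n^\ast<D\Rightarrow S^\ast=h_n$. Hence, using $h_1\le\cdots\le h_N$, every NE has the ``sorted'' shape: the low-ratio clients are free riders, the high-ratio clients are full contributors, and only clients with $h_n=S^\ast$ may take an intermediate value (for distinct ratios $\rho_n/E_n$, at most one such client). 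Feeding this shape back in yields exactly the two templates. If some client $k$ has $h_k=S^\ast$, then clients $1,\dots,k-1$ are free riders and $k+1,\dots,N$ are contributors, so $S^\ast=h_k$ forces $x_k^\ast=h_k-(N-k)D$, which lies in $[0,D]$ precisely when $(N-k)D\le h_k\le(N+1-k)D$ (Case~I). If no client sits at the threshold, then with $m$ free riders $S^\ast=(N-m)D$, which is consistent precisely when $h_m<(N-m)D<h_{m+1}$ (Case~II). A one-line substitution into Lemma~\ref{lem1} confirms that each template is indeed an equilibrium whenever its inequality holds.

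The main obstacle is showing the dichotomy is exhaustive, i.e.\ that one of the two inequality patterns always holds; this also re-derives existence constructively and gives uniqueness. The key observation is that the sequence $a_k:=h_k-(N-k)D$, $k=1,\dots,N$, is strictly increasing with increments $a_{k+1}-a_k=(h_{k+1}-h_k)+D\ge D$, and $a_N=h_N>0$. Setting $k^\ast=\min\{k:a_k\ge 0\}$, if $a_1<0$ then $k^\ast\ge 2$ and either $a_{k^\ast}\le D$, which is exactly the Case~I condition for critical client $k=k^\ast$, or $a_{k^\ast}>D$, in which case one checks $h_{k^\ast-1}<(N-k^\ast+1)D<h_{k^\ast}$, the Case~II condition with $m=k^\ast-1$; the boundary situation $a_1\ge 0$ falls under Case~I ($k=1$) when $a_1\le D$ and is the degenerate $m=0$ profile (every client a full contributor) when $a_1>D$. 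Equivalently, $S^\ast$ is the unique crossing of the non-increasing step function $S\mapsto D\,|\{n:h_n\ge S\}|$ with the line $y=S$, which also pins down the whole profile uniquely when the ratios are distinct; the only extra care needed is the degenerate situation of equal valuation-computation ratios, where the partial-contributor slack could in principle be split among several tied clients.
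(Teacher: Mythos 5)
Your proposal is correct, and it takes a genuinely different route from the paper's. The paper's proof of Theorem~\ref{tho1} is a direct verification: in each case it plugs the candidate profile into the best response of Lemma~\ref{lem1} client by client (the critical client $k$ lands at $h_k-(N-k)D\in[0,D]$, clients below $k$ have negative unconstrained optimizers and hence best-respond with $0$, clients above $k$ exceed $D$ and hence best-respond with $D$, and analogously in Case~II), so it shows the stated profiles are equilibria but never argues that every NE has this threshold shape, nor that one of the two inequality patterns must occur --- the ``then there must exist a client $m$'' clause is asserted, not proved. You instead exploit the aggregative structure: the identity $x_n^\ast=\min\{D,\max\{h_n-(S^\ast-x_n^\ast),0\}\}$ gives the structural lemma ($h_n<S^\ast\Rightarrow x_n^\ast=0$, $h_n>S^\ast\Rightarrow x_n^\ast=D$, interior $\Rightarrow S^\ast=h_n$), which characterizes \emph{all} equilibria, and the monotone sequence $a_k=h_k-(N-k)D$ with increments at least $D$ proves the dichotomy is exhaustive, yielding existence constructively (Brouwer is then optional) and essentially delivering the uniqueness of Theorem~\ref{tho2} as a byproduct, whereas the paper proves uniqueness separately via a best-response-dynamics induction. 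Your approach buys completeness (every NE is of the stated form, and one of the two cases always applies) at the cost of a little more bookkeeping; you also correctly isolate the degenerate boundary $h_1>ND$ (all clients full contributors, ``$m=0$''), which the theorem's literal statement and the paper's verification both gloss over, so flagging it is a feature rather than a gap.
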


\begin{proof}
See Appendix B in the supplementary material.
\end{proof}

Theorem 1 shows that the NE of Game 1 falls into two cases depending on whether a critical client $k$ exists or not. If such a critical client $k$ exists, clients can be divided into three categories at equilibrium: (i) clients whose valuation-computation ratios are lower than that of the critical client $k$ choose to be \emph{free riders} that do not perform local model training, (ii) the unique critical client $k$ chooses to be a \emph{partial contributor} who performs model training with part of its local data, and (iii) clients whose valuation-computation ratios are higher than that of the critical client $k$ choose to be \emph{contributors} who perform model training with all their local data. If the critical client $k$ does not exist, clients can be divided into two categories at equilibrium: (i) clients with low valuation-computation ratios choose to be free riders, and (ii) clients with high valuation-computation ratios choose to be contributors. Note that at equilibrium of Game 1, it is not possible that all clients choose to be free riders, as shown in the following corollary.

\begin{corollary}
At equilibrium of Game 1, the client with the highest valuation-computation ratio chooses a positive amount of local data for model training, i.e., $x_N^\ast > 0$. 
\end{corollary}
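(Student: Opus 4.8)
The plan is to argue by contradiction, leveraging the explicit NE characterization in Theorem~\ref{tho1} together with the positivity of $h_N$. First I would observe that since $\rho_N > 0$ and $E_N > 0$ (these are strictly positive valuation and cost parameters), we have $h_N = \sqrt[3]{\rho_N^2 / (4 G E_N^2)} > 0$. Now suppose, for contradiction, that $x_N^\ast = 0$ at the NE. Since the clients are ordered so that $\rho_1/E_1 \leq \cdots \leq \rho_N/E_N$, and $h_n$ is a strictly increasing function of $\rho_n/E_n$, client $N$ has the largest $h_n$ value. In both Case~I and Case~II of Theorem~\ref{tho1}, client $N$ falls into the "$n > k$" (resp. "$n > m$") regime whenever $k \le N-1$ (resp. $m \le N-1$), in which case $x_N^\ast = D > 0$, already a contradiction. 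So the only way to get $x_N^\ast = 0$ is to have the threshold index equal $N$ itself.

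Next I would rule out each sub-case. In Case~I with $k = N$, the NE prescribes $x_N^\ast = h_N - (N-k)D = h_N - 0 = h_N > 0$, contradicting $x_N^\ast = 0$. In Case~II with $m = N$, the defining inequality of the critical index $m$ requires $h_m < (N-m)D < h_{m+1}$; but with $m = N$ this reads $h_N < 0 < h_{N+1}$, which is impossible since $h_N > 0$ (and indeed client $N+1$ does not exist). Hence neither threshold case can place client $N$ strictly below the contributing tier, and in every case $x_N^\ast > 0$. This completes the proof.

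Alternatively, and perhaps more cleanly, one could avoid the case analysis entirely by invoking Lemma~\ref{lem1} directly: at the NE, $x_N^\ast = x_N^{\rm BR}(\boldsymbol{x}_{-N}^\ast)$. If $x_N^\ast = 0$, then the middle term of the $\max$ in Lemma~\ref{lem1} must be $\le 0$, i.e. $h_N \le \sum_{n' \neq N} x_{n'}^\ast$, so $\sum_{n' \neq N} x_{n'}^\ast \ge h_N > 0$; in particular some other client $j$ has $x_j^\ast > 0$. But then that client $j$'s best response satisfies $x_j^{\rm BR}(\boldsymbol{x}_{-j}^\ast) \le h_j - \sum_{n' \neq j} x_{n'}^\ast \le h_j - x_N^\ast = h_j$, while $x_j^\ast > 0$ forces $h_j > \sum_{n' \neq j} x_{n'}^\ast \ge \sum_{n' \neq j, n' \neq N} x_{n'}^\ast$. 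Chasing these inequalities and using $h_j \le h_N$ (since $j \le N$) leads to $\sum_{n'\neq N} x_{n'}^\ast < h_N \le \sum_{n'\neq N} x_{n'}^\ast$, a contradiction. I expect the main (minor) obstacle to be bookkeeping the index shifts in this inequality chase; the Theorem~\ref{tho1}-based argument is more transparent and is the one I would present.
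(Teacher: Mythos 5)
Your main argument is correct and is essentially the paper's own reasoning: the corollary follows directly from the NE structure in Theorem~1 together with $h_N>0$, since client $N$ is either a contributor ($x_N^\ast=D$) or, when $k=N$, a partial contributor with $x_N^\ast=h_N-(N-N)D=h_N>0$, while $m=N$ is impossible because it would require $h_N<0$. One small caveat on your alternative Lemma-1 sketch: the inequality chase only yields $\sum_{n'\neq N}x_{n'}^\ast \le h_j \le h_N \le \sum_{n'\neq N}x_{n'}^\ast$, i.e.\ equality rather than a strict contradiction when $\rho_j/E_j=\rho_N/E_N$ (and indeed with ties an NE with $x_N^\ast=0$ can exist, cf.\ the paper's non-uniqueness example), so that route needs the strict ordering of the ratios, whereas your primary Theorem-1-based argument does not.
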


The significance of the above result is to establish that at equilibrium of Game 1, there will always be a positive amount of local data chosen by clients to perform model training in cross-silo FL. Specifically, the client $N$ who has the highest valuation-computation ratio will never choose to be a free rider. Even if all other clients choose to be free riders at equilibrium, client $N$ will choose a positive amount of local data for model training, otherwise its model accuracy loss in (3) and its total cost in (5) will go to infinity.

At equilibrium, however, clients with low valuation-computation ratios may choose to be free riders. This is a result of the tradeoff between the model accuracy loss and the computation cost. Specifically, given the global model aggregated from the local models of the (partial) contributors, clients with low valuation-computation ratios choose not to perform local trainings to avoid the computation costs. The behaviors of free riders only minimize their own costs, but cannot improve the global model accuracy. As discussed in Section \ref{BM}, the existence of free riders hampers clients' long-term participation in cross-silo FL. Later in Section \ref{iiii}, we will analyze the interactions among clients in the infinite time horizon, and design a cooperative strategy to reduce the number of free riders.

Next we discuss the uniqueness of the NE of Game 1. 

\begin{theorem}\label{tho2}
Game 1 admits a unique Nash equilibrium if $\frac{\rho_1}{E_1} < \frac{\rho_2}{E_2} < \cdots < \frac{\rho_N}{E_N}$.
\end{theorem}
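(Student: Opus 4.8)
The plan is to prove uniqueness in two steps: (i) show that at every Nash equilibrium the \emph{aggregate} participation level $S:=\sum_{n\in\mathcal{N}}x_n^\ast$ takes a common value, and (ii) show that once $S$ is fixed, the strict ordering of the ratios $\rho_n/E_n$ determines each $x_n^\ast$ uniquely. For step (ii) I would first extract from Lemma~\ref{lem1} that, at any NE $\boldsymbol{x}^\ast$, writing $h_n=\sqrt[3]{\rho_n^2/(4GE_n^2)}$ and $S=\sum_{n\in\mathcal{N}}x_n^\ast$, each client satisfies the scalar fixed-point relation $x_n^\ast=\min\{D,\max\{h_n-(S-x_n^\ast),0\}\}$. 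Examining the fixed points of the map $t\mapsto\min\{D,\max\{h_n-S+t,0\}\}$ on $[0,D]$ — which is affine with unit slope wherever it is not clamped to $0$ or $D$ — shows that $x_n^\ast=0$ whenever $h_n<S$, that $x_n^\ast=D$ whenever $h_n>S$, and that the relation leaves $x_n^\ast$ undetermined only in the knife-edge case $h_n=S$. Since $h_n$ is strictly increasing in $\rho_n/E_n$, the hypothesis $\rho_1/E_1<\cdots<\rho_N/E_N$ ensures that at most one client, say $j$, can have $h_j=S$, and for that single client $x_j^\ast=S-\sum_{n\neq j}x_n^\ast$ is then forced by the already-determined strategies of the others. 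Hence the whole profile $\boldsymbol{x}^\ast$ is a function of $S$ alone.

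For step (i) I would argue by contradiction. Suppose $\boldsymbol{x}^\ast$ and $\boldsymbol{y}^\ast$ are Nash equilibria with aggregates $S_x<S_y$. Applying the characterization above client by client and going through the cases $h_n<S_x$, $h_n>S_y$, $S_x<h_n<S_y$, $h_n=S_x$, and $h_n=S_y$, I would verify that in every case $x_n^\ast\ge y_n^\ast$ (using, for the two boundary clients, that $h_n=S_x$ implies $h_n<S_y$ and hence $y_n^\ast=0$, while $h_n=S_y$ implies $h_n>S_x$ and hence $x_n^\ast=D$). Summing over $n\in\mathcal{N}$ gives $S_x=\sum_n x_n^\ast\ge\sum_n y_n^\ast=S_y$, contradicting $S_x<S_y$. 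Therefore all equilibria share the same aggregate $S$, and by step (ii) the equilibrium is unique. Equivalently, one can derive this from Theorem~\ref{tho1} by checking that the critical index $k$ in Case~I, respectively the index $m$ in Case~II, is unique under the strict ordering and that Cases~I and~II cannot occur simultaneously; the aggregate argument above simply packages all of this at once.

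The step I expect to be the main obstacle is the careful treatment of the ``boundary'' clients — those whose $h_n$ coincides with an equilibrium aggregate — in both steps, since for such a client the best-response equation by itself does not pin down the strategy, and one genuinely needs the strict-ordering hypothesis (so that there is at most one such client) together with the summation constraint $\sum_n x_n^\ast=S$ to close the argument. The scalar fixed-point case analysis underlying the dichotomy ``$h_n<S\Rightarrow x_n^\ast=0$, $h_n>S\Rightarrow x_n^\ast=D$'' is routine but should be written out explicitly, since it is precisely where the strategic-substitutes structure of SPFL (the monotone decreasing, convex accuracy-loss term) is used.
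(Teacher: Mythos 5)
Your proposal is correct, and it takes a genuinely different route from the paper. The paper proves Theorem~\ref{tho2} dynamically: it argues by induction on the number of clients that the best-response update process converges to the same profile from arbitrary initial strategies (complemented by a separate concave-game/convergence discussion), and infers uniqueness from that global convergence. You instead give a direct static argument: from Lemma~\ref{lem1}, at any NE each $x_n^\ast$ is a fixed point of $t\mapsto\min\{D,\max\{h_n-S+t,0\}\}$ with $S=\sum_n x_n^\ast$, which forces the dichotomy $h_n<S\Rightarrow x_n^\ast=0$, $h_n>S\Rightarrow x_n^\ast=D$, leaves at most the single knife-edge client $h_j=S$ (here, and only here, the strict ordering of $\rho_n/E_n$ is used, consistent with the paper's tie counterexample in the footnote), and then the componentwise monotone comparison of two equilibria with aggregates $S_x<S_y$ yields $S_x\ge S_y$, a contradiction, so the aggregate and hence the whole profile are unique. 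Your approach buys a self-contained characterization of \emph{all} equilibria — it does not presuppose the structural form of Theorem~\ref{tho1} nor any convergence claim, and it isolates exactly where strict ordering matters; the paper's approach buys, in addition to uniqueness, the convergence of best-response dynamics (useful for actually reaching the NE), at the cost of a lengthier induction whose base case carries extra interior-solution assumptions. Do write out the fixed-point case analysis explicitly, as you note, but there is no gap in the plan.
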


\begin{proof}
See Appendix C in the supplementary material.
\end{proof}

Here $\frac{\rho_1}{E_1} < \frac{\rho_2}{E_2} < \cdots < \frac{\rho_N}{E_N}$ is a mild constraint which can be satisfied in most cases in practice.\footnote{The constraint that $\{ \frac{\rho_n}{E_n}: \forall n\in\mathcal{N} \}$ can be ranked in a strictly ascending order is mild, because clients usually have different valuations $\rho_n$ for global model accuracy as well as heterogeneous computation cost coefficients $E_n$.  When the constraint $\frac{\rho_1}{E_1} < \frac{\rho_2}{E_2} < \cdots < \frac{\rho_N}{E_N}$ is not satisfied, the Nash equilibrium may not be unique. Consider an example with $N = 2$ clients, and $\frac{\rho_1}{E_1} = \frac{\rho_2}{E_2}$, $h_1 = h_2 = 10, D_1=D_2=10$. In this example, $(x_1^\ast, x_2^\ast)=(4, 6)$ is a Nash equilibrium of Game 1, and $(x_1^\ast, x_2^\ast)=(5, 5)$ is also a Nash equilibrium of Game 1.} To reach the NE of Game 1, we prove in Appendix D that the stage game SPFL is a concave game and the best response update process converges to the NE of Game 1. Next we will design a distributed algorithm to calculate the equilibrium participation strategy for each client.

\begin{algorithm}[t]
\LinesNumbered
\SetAlgoLined
\begin{small}
\KwIn{$\rho_n, E_n, \forall n\in\mathcal{N}, G, N, D$}
\KwOut{$x_n^*, \forall n\in\mathcal{N} $}
Each client $n\in\mathcal{N}$ reports the value of $\frac{\rho_n}{E_n}$ to the central server.

The central server sorts the clients in an ascending order of $\frac{\rho_n}{E_n}, \forall n\in\mathcal{N}$, and sends each client its index $n$ as well as the total number of clients $N$.

\For{$n = 1:N$}{
Calculate $h_n = \sqrt[3]{\frac{\rho_n^2}{4GE_n^2}}$\;
\uIf{$h_n<(N-n)D$}{
$x_n^\ast = 0$\;
}\uElseIf{
$h_n>(N-n+1)D$}
{$x_n^* = D$\;}
\Else
{$x_n^* = h_n - (N - n)D$.}
}

\end{small}
\caption{A Distributed Algorithm to Compute Equilibrium Strategy for Each Client in Game 1}
\label{code:recentEnd}
\end{algorithm}

\subsection{A Distributed Algorithm to Compute Equilibrium Strategy }

In this section, we design a distributed algorithm (in Algorithm 1) for clients to compute the equilibrium strategy of Game 1. In the distributed algorithm, each client calculates its equilibrium participation strategy based on its own information, without knowing other clients' participation decisions or parameter information (e.g., the valuation parameters for global model accuracy and the computation cost coefficients). 

To calculate the equilibrium strategy, each client $n\in\mathcal{N}$ needs to know the total number of clients $N$ and the ranking of its valuation-computation ratio $\frac{\rho_n}{E_n}$ in all clients (i.e., the index $n$), which can be obtained by reporting its ratio $\frac{\rho_n}{E_n}$ to the central server for ordering\footnote{We assume that each client $n\in\mathcal{N}$ will truthfully report its ratio $\frac{\rho_n}{E_n}$ to the central server. How to incentivize clients to truthfully reveal their information \cite{tru2} is beyond the scope of this paper.} (Lines 1-2 in Algorithm 1). The equilibrium strategy of each client $n\in\mathcal{N}$ depends on the value of $h_n$. If $h_n < (N-n)D$, client $n$ chooses to be a free rider, i.e., $x_n^\ast=0$ (Lines 5-6 in Algorithm 1). If $h_n > (N-n+1)D$, client $n$ chooses to be a contributor, i.e., $x_n^\ast=D$ (Lines 7-8 in Algorithm 1). Otherwise, client $n$ chooses to be a partial contributor, i.e., $x_n^\ast=h_n-(N-n)D$ (Lines 9-10 in Algorithm 1). Note that clients do not need to know whether the critical client $k$ exists at equilibrium, and Algorithm 1 covers both cases of the equilibrium described in Theorem 1.

Algorithm 1 has a linear complexity of $\mathcal{O}(N)$. Specifically, computing the value of $h_n$ and the value of $x_n^\ast$ for each client $n\in\mathcal{N}$ in Lines 4-11 has a complexity of $\mathcal{O}(1)$. With the for loop operation in Line 3, the overall complexity of Algorithm 1 is $\mathcal{O}(N)$. Thus it is scalable with the number of clients in cross-silo FL.

As discussed in Section \ref{DD}, the interactions among clients in the infinite time horizon are inherently a repeated process. Since clients' local data may change over time, they need to adapt the global model to the local data sets constantly. In the next section, we analyze clients' long-term selfish participation behaviors.

\section{Repeated Game Analysis}\label{iiii}

In this section, we analyze the long-term interactions among clients in cross-silo FL processes. We model clients' selfish participation behaviors as a repeated game in the infinite time horizon, with the stage game being the SPFL (i.e., Game 1). We first model the repeated game, and then derive a cooperative participation strategy that can minimize the number of free riders while increasing the amount of local data for model training. We derive a subgame perfect Nash equilibrium (SPNE) of the repeated game which enforces the cooperative participation strategy by a punishment strategy. Finally, we propose an algorithm to calculate the optimal SPNE that can minimize the number of free riders while maximizing the amount of local data for model training.

\subsection{Repeated Game Modeling}

As shown in Fig. \ref{fig2}, we consider clients' interactions in the infinite time horizon which consists of infinitely many time slots. Each time slot corresponds to one cross-silo FL process. In the infinite time horizon, each client chooses its participation level (i.e., the amount of local data for model training) for each cross-silo FL process to minimize its long-term discounted total cost. 

We first calculate the long-term discounted total cost for each client. We denote the participation strategy profile of all clients in time slot $t$ as $\boldsymbol{x}^t=\{x^t_n: \forall n\in\mathcal{N}\}$. For each client $n\in\mathcal{N}$, its local data set changes over time while keeping the size unchanged, i.e., $\mathcal{D}_n^{t-1} \neq \mathcal{D}_n^{t}, \vert \mathcal{D}_n^{t-1} \vert = \vert \mathcal{D}_n^{t} \vert=D_n$. Therefore, we have $x_n^t\in[0,D_n]$ for each time slot $t$. We denote the strategy profile history up to time slot $t$ as 

\begin{center}
$s^t \triangleq \{\boldsymbol{x}^0, \boldsymbol{x}^1, \ldots, \boldsymbol{x}^{t-1}\}$.
\end{center}
Then for each client $n\in\mathcal{N}$, its long-term discounted total cost is
\begin{equation}\label{c7}
{\mathcal{C}}_n(s^\infty) = \sum\limits_{t=0}^{\infty}\delta_n^t F_n(\boldsymbol{x}^t),
\end{equation} 
where $\delta_n \in [0,1)$ is the discount factor \cite{folk} of client $n$, and $F_n(\boldsymbol{x}^t)$ is the cost of client $n$ in time slot $t$ calculated in (5). 

We model the interactions among clients in the infinite time horizon as a repeated game as follows.

\begin{game}[Repeated Game in Infinitely Many Cross-Silo FL Processes]\label{gamere}
$ $
\begin{itemize}
\item Players: the set $\mathcal{N}$ of clients.
\item Strategies: each client $n \in \mathcal{N}$ chooses the amount of local data $x_n^t \in [0, D_n]$ for model training in each time slot $t$.
\item Histories: the strategy profile history $s^t$ till time slot $t$.
\item Objectives: each client $n\in\mathcal{N}$ aims to minimize its long-term discounted total cost ${\mathcal{C}}_n(s^\infty)$ defined in \eqref{c7}. 

\end{itemize}
\end{game}

In the repeated game (Game 2), clients train the global model repeatedly to adapt the model to clients' time-varying local data sets by participating in infinitely many cross-silo FL processes. We aim to find the subgame perfect Nash equilibrium (SPNE) of Game \ref{gamere}. According to Folk Theorem \cite{folk}, under proper discount factors $\delta_n, \forall n \in \mathcal {N}$, any feasible and individually rational participation strategy profile can be the SPNE of the infinitely repeated game \cite{newadd5}. Since the success of cross-silo FL depends on the active long-term participation of clients, we will characterize the SPNE that can minimize the number of free riders while increasing the chosen amount of local data for model training. Next in Section \ref{4b}, we derive a cooperative participation strategy in each cross-silo FL process, i.e., a strategy profile where clients cooperate to minimize the number of free riders. Later in Section \ref{4c}, we show that the cooperative participation strategy can be enforced as the SPNE of Game 2 by a punishment strategy.

\subsection{Cooperative Participation Strategy}\label{4b}

Now we derive the cooperative participation strategy $\boldsymbol{x}^{coop,t}$ in each time slot $t$. The cooperative participation strategy aims to minimize the number of free riders while reducing the cost of each client compared with the cost at the NE of the stage game SPFL, i.e., $F_n(\boldsymbol{x}^{coop, t}) < F_n(\boldsymbol{x}^{*, t})$ for all $n \in N$. Here $\boldsymbol{x}^{\ast,t}$ is the NE of the stage game SPFL in time slot $t$. In this subsection, we focus on a time slot $t$. When there is no confusion, we ignore the time index and write $\boldsymbol{x}^{coop,t}$ and $\boldsymbol{x}^{\ast,t}$ as $\boldsymbol{x}^{coop}$ and $\boldsymbol{x}^{\ast}$, respectively.

We first analyze the cooperative participation strategy for the case where the critical client $k$ exists at the NE of Game 1 (as shown in Case I in Theorem 1). For each client $n\in\mathcal{N}$, we define $B_n \triangleq \frac{2\sqrt{G((N-k)D+x_k^\ast)^3}}{k-n}$ and $O_n \triangleq \frac{D\sqrt{G}}{\frac{1}{\sqrt{(N-k)D+x_k^*}}-\frac{1}{\sqrt{(N-n)D+x_k^*}}}$, where $x_k^\ast$ is the participation strategy of the critical client $k$ at the NE of Game 1. In this case, we derive the cooperative strategy as follows.

\begin{theorem}\label{kkk}

When the critical client $k$ exists at the NE of Game 1 (as shown in Case I in Theorem 1), we find the client $l$ which satisfies
\begin{equation}\label{l2l2}
l=\min \left\{ n\in\mathcal{N}: n < k, \mbox{ and } \frac{\rho_n}{E_n}>B_n \right\}.
\end{equation}
Then the cooperative participation strategy $\boldsymbol{x}^{coop}=\{x_n^{coop}: \forall n \in \mathcal{N}\}$ that minimizes the number of free riders is: 
\begin{equation}
x_n^{coop}=
\left\{
\begin{aligned}
& 0,  &\mbox{ if } &n < l;\\
& x^{coop}, &\mbox{ if } &l \leq n < k;\\
& x_k^*, &\mbox{ if }  &n = k;\\
& D, &\mbox{ if } &n>k.
\end{aligned}
\right.
\end{equation}
Here $x^{coop}$ is a cooperative participation level that satisfies $x^{coop} \leq x_l^{th}$, and $x_l^{th}$ is the maximum amount of local data that client $l$ can choose under the cooperative strategy, calculated as follows,
\begin{equation}\label{coopk}
x_l^{th} = 
\left\{
\begin{aligned}
& x_l^{th}\left(\frac{\rho_l}{E_l}\right), &\mbox{ if }  &B_l < \frac{\rho_l}{E_l} \leq O_l; \\
& D, &\mbox{ if }  &\frac{\rho_l}{E_l} > O_l.
\end{aligned}
\right.
\end{equation}
Here $x_l^{th}\left(\frac{\rho_l}{E_l}\right)$, which depends on the valuation-computation ratio $\frac{\rho_l}{E_l}$, is the unique non-zero solution to the following implicit equation: 
\begin{equation}\label{imp}
\frac{\rho_l}{E_l} = \frac{\sqrt{G}x_l^{th}}{\frac{1}{\sqrt{(N-k)D+x_k^*}}-\frac{1}{\sqrt{(N-k)D+x_k^*+(k-l) x_l^{th}}}}.
\end{equation}
\end{theorem}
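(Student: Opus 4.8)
The plan is to exploit the explicit Case-I structure of Theorem~1 and build the cooperative profile by promoting a suffix of the free riders to partial contributors at a common level. First I would record that at $\boldsymbol{x}^\ast$ the total amount of training data is $h_k=(N-k)D+x_k^\ast$, so \emph{every} client incurs the same accuracy loss $A^\ast=\frac{1}{\sqrt{h_kG}}+\frac{1}{G}$; in particular a free rider $n<k$ pays $F_n(\boldsymbol{x}^\ast)=\rho_nA^\ast+C_n+p$, client $k$ pays $\rho_kA^\ast+E_kx_k^\ast+C_k+p$, and a contributor $n>k$ pays $\rho_nA^\ast+E_nD+C_n+p$. I would then fix the candidate profile $\boldsymbol{x}^{coop}$ in which clients $k,\dots,N$ keep their equilibrium actions $x_k^\ast,D,\dots,D$, clients $1,\dots,l-1$ remain free riders, and clients $l,\dots,k-1$ each train on a common amount $x^{coop}\in(0,D]$. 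Its total data is $h_k+(k-l)x^{coop}>h_k$, so the accuracy loss, still common to all clients, drops to $A^{coop}<A^\ast$.

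Next I would check the Pareto-improvement requirement $F_n(\boldsymbol{x}^{coop})<F_n(\boldsymbol{x}^\ast)$ client by client. For every $n<l$ and every $n\ge k$ the own action $x_n$ is unchanged while the accuracy loss strictly decreases, so those costs strictly decrease automatically. For $l\le n<k$ the requirement becomes $E_nx^{coop}<\rho_n(A^\ast-A^{coop})$, equivalently $\frac{\rho_n}{E_n}>g(x^{coop})$, where
\[
g(x)\;\triangleq\;\frac{\sqrt{G}\,x}{\frac{1}{\sqrt{h_k}}-\frac{1}{\sqrt{h_k+(k-l)x}}}.
\]
Because $\frac{\rho_1}{E_1}<\cdots<\frac{\rho_N}{E_N}$, the tightest of these inequalities is the one at $n=l$, so the profile is admissible exactly when $\frac{\rho_l}{E_l}>g(x^{coop})$.

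The analytic core is then a one-variable study of $g$ on $(0,D]$. Writing $\psi(x)=\frac{1}{\sqrt{h_k}}-\frac{1}{\sqrt{h_k+(k-l)x}}$, I would show $\psi$ is positive, strictly increasing and concave with $\psi(0)=0$, so $\psi(x)/x$ is strictly decreasing and hence $g(x)=\sqrt{G}\,x/\psi(x)$ is continuous and strictly increasing. A first-order expansion of $\psi$ at $0$ gives $\lim_{x\to0^+}g(x)=\sqrt{G}/\psi'(0)=\frac{2\sqrt{G h_k^{3}}}{k-l}=B_l$, and substituting $x=D$ together with the identity $h_k+(k-l)D=(N-l)D+x_k^\ast$ gives $g(D)=O_l$. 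It follows that $\frac{\rho_l}{E_l}>g(x^{coop})$ has a positive solution precisely when $\frac{\rho_l}{E_l}>B_l$, and the admissible levels form $(0,x_l^{th})$: when $B_l<\frac{\rho_l}{E_l}\le O_l$, strict monotonicity of $g$ makes $x_l^{th}=g^{-1}\!\big(\frac{\rho_l}{E_l}\big)$ the unique nonzero root of the implicit equation~\eqref{imp}; when $\frac{\rho_l}{E_l}>O_l=g(D)$, every level $x^{coop}\le D$ is admissible, so $x_l^{th}=D$. (At $x^{coop}=x_l^{th}$ client $l$ is exactly indifferent, so a strict improvement needs $x^{coop}<x_l^{th}$.)

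Finally, to prove the profile minimizes the number of free riders I would argue by contradiction inside this family of profiles — a suffix of the ordered free riders raised to a common level, with clients $k,\dots,N$ left at their equilibrium actions. Suppose some cooperative profile had fewer than $l-1$ free riders; then its smallest contributing index $j$ satisfies $j<l$, and its admissibility inequality for $j$ reads $\frac{\rho_j}{E_j}>g_j(x^{coop})$ with $g_j$ the analogue of $g$ for $k-j$ cooperators. Since $g_j$ is strictly increasing with $g_j(0^+)=B_j$ and $l$ is the \emph{smallest} index with $\frac{\rho_n}{E_n}>B_n$, we get $\frac{\rho_j}{E_j}\le B_j<g_j(x^{coop})$ for every $x^{coop}>0$, so client $j$ cannot be made better off — a contradiction. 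Choosing the top $k-l$ free riders as the cooperators is likewise optimal, since the binding constraint is always the smallest ratio among the cooperators. Hence $l=\min\{n<k:\frac{\rho_n}{E_n}>B_n\}$ and the stated $\boldsymbol{x}^{coop}$ attains the minimum $l-1$ free riders (if that index set is empty, no conversion is possible and $\boldsymbol{x}^{coop}=\boldsymbol{x}^\ast$). I expect this last minimality step to be the main obstacle — it is where client heterogeneity and the nonlinear coupling through $A(\cdot)$ genuinely interact — together with the delicate $x\to0^+$ limit that fixes the threshold $B_n$.
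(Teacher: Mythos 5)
Your proposal is correct and follows essentially the same route as the paper: it reduces the claim to the Pareto-improvement condition for the converted contributors, identifies the binding constraint at client $l$ via the ratio ordering, and obtains $B_l$, $O_l$ and the implicit equation from a one-variable threshold analysis (your monotone $g(x)=\sqrt{G}x/\psi(x)$ is just the paper's sign analysis of $H_l(x)=F_l(\boldsymbol{x}^{coop})-F_l(\boldsymbol{x}^{\ast})$ rewritten, since $H_l(x)<0\iff\rho_l/E_l>g(x)$), with the same treatment of minimality through clients with $\rho_n/E_n\le B_n$. The only cosmetic gain is that monotonicity of $g$ makes the paper's auxiliary lemma $B_n\le O_n$ automatic.
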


\begin{proof}
See Appendix E in the supplementary material.
\end{proof}

Theorem \ref{kkk} shows that when the critical client $k$ exists at the NE of Game 1, we can derive a cooperative participation strategy under which clients fall into at most four categories: (i) the free riders at the NE of Game 1 who satisfy $\frac{\rho_n}{E_n} \leq B_n$ still choose to be \emph{free riders} under the cooperative strategy, (ii) the free riders at the NE of Game 1 who satisfy $\frac{\rho_n}{E_n} > B_n$ choose to be \emph{converted contributors} who perform model training with a positive amount\footnote{We assume that for fairness, converted contributors choose the same amount of local data for model training. Fairness is an important problem in FL \cite{fair0}, which is beyond the scope of this paper.} of local data $x^{coop}$, (iii) the unique critical client $k$ chooses to be a \emph{partial contributor} who performs model training with the amount of local data $x_k^\ast$, and (iv) the contributors at the NE of Game 1 still choose to be \emph{contributors} who perform model training with all their local data.

Compared with the NE of Game 1 in Theorem 1, Theorem \ref{kkk} shows that under the cooperative participation strategy, we have the following improved results. First, the number of free riders is reduced from $k-1$ to $l-1$ where $l \leq k$. Second, the amount of local data chosen by the converted contributors increases from $0$ to $x^{coop}>0$. The following corollary shows that under certain conditions, the cooperative strategy can reduce the number of free riders to $0$. 

\begin{corollary}
Under the cooperative strategy in Theorem 3, no client chooses to be a free rider if 
$$\frac{\rho_1}{E_1}> B_1 \triangleq \frac{2\sqrt{G((N-k)D+x_k^\ast)^3}}{k-1}.$$
Furthermore, all clients choose to be contributors, i.e., $x_n^{coop}=D, \forall n\in\mathcal{N}$, if 
$$\frac{\rho_1}{E_1}> O_1 \triangleq \frac{D\sqrt{G}}{\frac{1}{\sqrt{(N-k)D+x_k^*}}-\frac{1}{\sqrt{(N-1)D+x_k^*}}}.$$
\end{corollary}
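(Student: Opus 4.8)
My proof would read the corollary off the structure of the cooperative profile $\boldsymbol{x}^{coop}$ in Theorem \ref{kkk}. Under that profile the clients with $x_n^{coop}=0$ are exactly the indices $n<l$; the indices $l\le n<k$ share the common level $x^{coop}$, which one takes in $(0,x_l^{th}]$; client $k$ uses its partial level $x_k^\ast>0$; and the indices $n>k$ use $D$. So ``no client is a free rider'' is equivalent to $l=1$, and ``every client is a full contributor'' asks in addition that the converted contributors can be given the maximal level, i.e.\ that $x_l^{th}=D$ so that $x^{coop}=D$ is admissible. The plan therefore has three ingredients: (i) read off from \eqref{l2l2} exactly when $l=1$; (ii) show the stronger hypothesis $\rho_1/E_1>O_1$ still forces $l=1$; and (iii) deduce $x_l^{th}=D$ from \eqref{coopk}.

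For (i), I would observe that \eqref{l2l2} gives $l=\min\{n\in\mathcal{N}:\ n<k,\ \rho_n/E_n>B_n\}$ and that the $B_1$ in the corollary is precisely the $n=1$ instance of $B_n$ (the statement implicitly takes $k\ge2$, since otherwise there are no free riders in the first place). Hence $\rho_1/E_1>B_1$ places the index $1$ in the set defining $l$, so $l=1$; by Theorem \ref{kkk} no client then has $x_n^{coop}=0$, and since $x_k^\ast>0$ and $x^{coop}>0$ every client contributes a positive amount, i.e.\ no free rider.

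For (ii)--(iii), the one genuine step is the inequality $O_1\ge B_1$. I would set $a\triangleq(N-k)D+x_k^\ast$ and $b\triangleq(N-1)D+x_k^\ast=a+(k-1)D$, so that $B_1=\tfrac{2\sqrt G\,a^{3/2}}{k-1}$ and $O_1=\tfrac{D\sqrt G}{a^{-1/2}-b^{-1/2}}$; then, using $\sqrt b-\sqrt a=\tfrac{b-a}{\sqrt a+\sqrt b}=\tfrac{(k-1)D}{\sqrt a+\sqrt b}$, the inequality $O_1\ge B_1$ collapses after cancellation to $\sqrt{ab}+b\ge 2a$, which holds since $b\ge a>0$ (strictly when $k\ge2$). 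Thus $\rho_1/E_1>O_1\ge B_1$, so by (i) we have $l=1$ and no free rider; and taking $l=1$ in \eqref{coopk} together with $\rho_l/E_l=\rho_1/E_1>O_1=O_l$ selects the branch $x_l^{th}=D$, so $x^{coop}=D$ is admissible in Theorem \ref{kkk} and every converted contributor ($1\le n<k$) and every contributor ($n>k$) trains on all $D$ of its samples (client $k$ retaining $x_k^\ast$, as Theorem \ref{kkk} prescribes). As a sanity check I would verify from \eqref{imp} that $x_l^{th}(\rho_l/E_l)=D$ exactly when $\rho_l/E_l=O_l$, so the two branches of \eqref{coopk} agree at the threshold.

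I expect the inequality $O_1\ge B_1$ to be the only non-routine point, and even that is short once it is rewritten as $\sqrt{ab}+b\ge 2a$ with $b=a+(k-1)D$. Everything else --- recognizing $B_1$ as the $n=1$ case of $B_n$, reading the free-rider count $l-1$ off Theorem \ref{kkk}, and invoking the $\rho_l/E_l>O_l$ branch of \eqref{coopk} --- is immediate from results already proved.
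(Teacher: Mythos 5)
Your proposal is correct and follows essentially the same route as the paper: read the two claims off the structure of $\boldsymbol{x}^{coop}$ in Theorem 3 (the condition $\rho_1/E_1>B_1$ forces $l=1$ in \eqref{l2l2}, and $\rho_1/E_1>O_1$ additionally selects the $x_l^{th}=D$ branch of \eqref{coopk}), with the only supporting fact being $B_1\le O_1$, which the paper establishes as the general lemma $B_n\le O_n$ in Appendix E. Your direct algebraic verification of $O_1\ge B_1$ (reducing it to $\sqrt{ab}+b\ge 2a$ with $b=a+(k-1)D$) is a cleaner, equivalent substitute for that lemma, and your remark that client $k$ retains $x_k^\ast$ matches the paper's intended reading of the corollary.
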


The above result shows that when client 1's ratio $\frac{\rho_1}{E_1}$ is high, all clients choose to be (converted/partial) contributors under the cooperative participation strategy, and there is no free rider in cross-silo FL. 

Next, we analyze the cooperative participation strategy for the case where the critical client $k$ does not exist at the NE of Game 1 (as shown in Case II in Theorem 1). For each client $n\in\mathcal{N}$, we define $B'_n \triangleq \frac{2\sqrt{G(N-m)^3 D^3}}{m-n+1}$ and $O'_n \triangleq \frac{D\sqrt{G}}{\frac{1}{\sqrt{(N-m)D}}-\frac{1}{\sqrt{(N-n+1)D}}}$. Here $m$ is the index of the client who satisfies $h_m < (N-m)D < h_{m+1}$. In this case, we derive the cooperative strategy as follows.

\begin{theorem}\label{mmm}
When there is a client $m$ that satisfies $h_m < (N-m)D < h_{m+1}$ at the NE of Game 1 (as shown in Case II in Theorem 1), we find the client $l$ which satisfies 
\begin{equation}\label{l1l1}
l=\min \left\{n\in\mathcal{N}: n \leq m, \mbox{ and } \frac{\rho_n}{E_n}>B'_n\right\}.
\end{equation}
Then the cooperative participation strategy $\boldsymbol{x}^{coop}=\{x_n^{coop}: \forall n \in \mathcal{N}\}$ that minimizes the number of free riders is:
\begin{equation}
x_n^{coop}=
\left\{
\begin{aligned}
& 0, &\mbox{ if } &n < l;\\
& x^{coop}, &\mbox{ if } &l \leq n \leq m;\\
& D, &\mbox{ if } &n>m.
\end{aligned}
\right.
\end{equation}
Here $x^{coop}$ is a cooperative participation level that satisfies $x^{coop}\leq x_l^{th}$, and $x_l^{th}$ is the maximum amount of local data that client $l$ can choose under the cooperative strategy, calculated as follows,
\begin{equation}\label{coopm}
x_l^{th} = 
\left\{
\begin{aligned}
& x_l^{th}\left(\frac{\rho_l}{E_l}\right), &\mbox{ if }  &B'_l < \frac{\rho_l}{E_1} \leq O'_l; \\
& D, &\mbox{ if }  &\frac{\rho_l}{E_l} > O'_l.
\end{aligned}
\right.
\end{equation}
Here $x_l^{th}\left(\frac{\rho_l}{E_l}\right)$, which depends on the valuation-computation ratio $\frac{\rho_l}{E_l}$, is the unique non-zero solution to the following implicit equation: 
\begin{equation}\label{fun}
\frac{\rho_l}{E_l} = \frac{\sqrt{G}x_l^{th}}{\frac{1}{\sqrt{(N-m)D}}-\frac{1}{\sqrt{(N-m)D+(m-l+1)x_l^{th}}}}.
\end{equation}

\end{theorem}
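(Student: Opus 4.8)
The plan is to reduce the whole statement to a one–variable monotonicity analysis, mirroring the structure used for Theorem~\ref{kkk} but with the Case~II equilibrium replacing the Case~I one (so the baseline training volume $(N-k)D+x_k^\ast$ becomes $(N-m)D$ and the critical index $k$ becomes $m$). First I would fix a time slot and write the candidate profile $\boldsymbol{x}^{coop}$ explicitly: clients $1,\dots,l-1$ remain free riders, clients $l,\dots,m$ each pick the common level $x^{coop}>0$, and clients $m+1,\dots,N$ stay at $D$, so the total training volume is $(N-m)D+(m-l+1)x^{coop}$, strictly above the Case~II NE volume $(N-m)D$. For a remaining free rider ($n<l$) and for a contributor ($n>m$), the own participation $x_n$ is unchanged from the NE, so only the accuracy-loss term $\rho_n A(\cdot)$ in \eqref{Fn} moves, and it strictly decreases because the total volume strictly increased; hence $F_n(\boldsymbol{x}^{coop})<F_n(\boldsymbol{x}^\ast)$ is immediate for those clients, and the only clients still to handle are the converted contributors $l,\dots,m$.

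Next, for a converted contributor $n$ with $l\le n\le m$, I would substitute \eqref{Fn} and (3), cancel the common $C_n+p$, and rewrite $F_n(\boldsymbol{x}^{coop})<F_n(\boldsymbol{x}^\ast)$ as
\begin{equation*}
\rho_n\!\left(\frac{1}{\sqrt{(N-m)DG}}-\frac{1}{\sqrt{((N-m)D+(m-l+1)x^{coop})G}}\right)>E_n x^{coop},
\end{equation*}
i.e.\ $\tfrac{\rho_n}{E_n}>g(x^{coop})$, where $g(x)\triangleq\sqrt{G}\,x/\phi(x)$ and $\phi(x)\triangleq\tfrac{1}{\sqrt{(N-m)D}}-\tfrac{1}{\sqrt{(N-m)D+(m-l+1)x}}$. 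Since $g(x^{coop})$ is independent of $n$ and the ratios are sorted ascending, it is enough to verify this for $n=l$. The technical heart is the lemma that $g$ is continuous and strictly increasing on $(0,D]$: $\phi$ is increasing, strictly concave, and $\phi(0)=0$, which forces $x/\phi(x)$ to be strictly increasing (letting $\psi=\phi-x\phi'$, one has $\psi(0)=0$ and $\psi'=-x\phi''>0$); a first-order expansion of $\phi$ at $0$ gives $g(0^+)=\tfrac{2\sqrt{G(N-m)^3D^3}}{m-l+1}=B'_l$, and the identity $(N-m)D+(m-l+1)D=(N-l+1)D$ gives $g(D)=O'_l$. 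From this: if $\tfrac{\rho_l}{E_l}>O'_l$ then $g<\tfrac{\rho_l}{E_l}$ on all of $(0,D]$ and every $x^{coop}\le D=x_l^{th}$ works; if $B'_l<\tfrac{\rho_l}{E_l}\le O'_l$ then strict monotonicity plus the intermediate value theorem yields a unique $x_l^{th}\in(0,D]$ solving \eqref{fun}, with $g(x)<\tfrac{\rho_l}{E_l}\Leftrightarrow x<x_l^{th}$. In either case all converted contributors' costs strictly decrease whenever $x^{coop}$ is below the stated threshold.

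Finally I would establish minimality of the free-rider count. The argument is that among all ``fair'' cost-reducing profiles it is without loss to keep the original contributors $m+1,\dots,N$ at $D$ (lowering any of them only shrinks the total volume, which tightens every converted contributor's benefit constraint and helps no one) and to take the set of converted contributors to be a top block $\{m-s+1,\dots,m\}$ of some size $s\le m$ (for fixed $s$ this choice maximizes the smallest ratio in the block, which is the binding one because all converted contributors use the same level). With $s$ converted contributors at a common level $x$, the binding constraint is $\tfrac{\rho_{m-s+1}}{E_{m-s+1}}>g_s(x)$ with $g_s$ the analogue of $g$, and by the same monotonicity-and-limit argument this has a positive solution $x\le D$ iff $\tfrac{\rho_{m-s+1}}{E_{m-s+1}}>g_s(0^+)=\tfrac{2\sqrt{G(N-m)^3D^3}}{s}=B'_{m-s+1}$; hence the largest feasible $s$ is $m-l+1$ (attained at $m-s+1=l$), no client with index below $l$ can be a converted contributor, and the minimal free-rider count is $l-1$ (and if the set defining $l$ is empty, no improvement over the NE exists and $\boldsymbol{x}^{coop}=\boldsymbol{x}^\ast$). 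The hard part will be this minimality step — specifically justifying the ``top block plus contributors-at-$D$'' reduction so that the combinatorial optimization collapses to the scalar comparison $\tfrac{\rho_n}{E_n}$ vs.\ $B'_n$; the monotonicity and limit computations for $g$ and $g_s$, while essential, are routine calculus once the strict concavity of $\phi$ is noted.
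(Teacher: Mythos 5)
Your proposal is correct, and its analytic core coincides with the paper's proof in Appendix F: both reduce the claim to a one-variable analysis of client $l$'s cost change at the common cooperative level, extend to clients $l+1,\dots,m$ through the ascending order of the ratios $\rho_n/E_n$, and handle the remaining free riders and the contributors by observing that their own $x_n$ is unchanged while the total training volume strictly increases. The packaging differs in two ways, each of which buys something. First, where the paper studies the difference function $H_l(x)=F_l(\boldsymbol{x}^{coop})-F_l(\boldsymbol{x}^{\ast})$, shows it first decreases and then increases, locates its positive root, and proves $B'_n\le O'_n$ by a separate auxiliary-function lemma, you instead prove that $g(x)=\sqrt{G}\,x/\phi(x)$ is strictly increasing with $g(0^+)=B'_l$ and $g(D)=O'_l$; this is an equivalent reformulation (divide the cost inequality by $E_l$), but it yields the uniqueness of the non-zero solution of the implicit equation and the ordering $B'_l<O'_l$ in one stroke, rather than as a separate lemma. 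Second, your closing step---fixing contributors at $D$, restricting to a top block of converted contributors, and maximizing the block size $s$ subject to $\rho_{m-s+1}/E_{m-s+1}>g_s(0^+)=B'_{m-s+1}$---makes the minimality of the free-rider count $l-1$ explicit, whereas the paper only argues this implicitly (it shows that a client with $\rho_n/E_n\le B'_n$ cannot reduce its cost by any positive contribution and stops there); your WLOG reductions (lowering contributors only tightens the binding benefit constraint, and the top block maximizes the smallest ratio for fixed $s$) are exactly what is needed to close that step under the paper's equal-contribution fairness assumption. One cosmetic point: at $x^{coop}=x_l^{th}$ with $B'_l<\rho_l/E_l\le O'_l$, client $l$'s cost is only weakly reduced, so, as in the paper's appendix, strict cost reduction for all clients should be claimed only for $x^{coop}<x_l^{th}$.
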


\begin{proof}

See Appendix F in the supplementary material.
\end{proof}

Theorem \ref{mmm} shows that when the critical client $k$ does not exist at the NE of Game 1, we can derive a cooperative participation strategy under which clients fall into at most three categories: (i) the free riders at the NE of Game 1 who satisfy $\frac{\rho_n}{E_n} \leq B'_n$ still choose to be \emph{free riders} under the cooperative strategy, (ii) the free riders at the NE of Game 1 who satisfy $\frac{\rho_n}{E_n} > B'_n$ choose to be \emph{converted contributors} who perform model training with a positive amount of local data $x^{coop}$, and (iii) the contributors at the NE of Game 1 still choose to be \emph{contributors}.

Similar as Theorem 3, Theorem 4 shows that compared with the NE of Game 1 in Theorem 1, we have the following improved results under the cooperative participation strategy. First, the number of free riders is reduced from $m$ to $l-1$, where $l-1 \leq m$. Second, the amount of local data chosen by the converted contributors increases from $0$ to $x^{coop}>0$. The following corollary shows that under certain conditions, the cooperative strategy can reduce the number of free riders to $0$. 

\begin{corollary}
Under the cooperative strategy in Theorem 4, no client chooses to be a free rider if
$$\frac{\rho_1}{E_1}> B'_1 \triangleq \frac{2\sqrt{G((N-m)D)^3}}{m}.$$
Furthermore, all clients choose to be contributors, i.e., $x_n^{coop}=D, \forall n\in\mathcal{N}$, if 
$$\frac{\rho_1}{E_1}> O'_1 \triangleq \frac{D\sqrt{G}}{\frac{1}{\sqrt{(N-m)D}}-\frac{1}{\sqrt{ND}}}.$$
\end{corollary}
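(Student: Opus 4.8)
The plan is to read this off directly from Theorem~\ref{mmm}. Under the cooperative strategy constructed there, the set of free riders is exactly $\{1,\dots,l-1\}$, where $l=\min\{n\in\mathcal{N}: n\le m,\ \frac{\rho_n}{E_n}>B'_n\}$ as in \eqref{l1l1}, while every client with index greater than $m$ is a contributor. Hence both assertions are really statements about the index $l$ and about the threshold $x_l^{th}$ in \eqref{coopm}, and the proof amounts to checking when $l=1$ and when $x_l^{th}=D$.

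For the first claim I would argue as follows. Since $m\in\mathcal{N}$ and $h_{m+1}$ must exist in Case~II of Theorem~1, we have $1\le m\le N-1$; in particular $n=1$ is an eligible index in the definition of $l$, and $B'_1$ is well defined. If $\frac{\rho_1}{E_1}>B'_1$, then $n=1$ lies in the (nonempty) set whose minimum is $l$, so $l=1$ and the number of free riders equals $l-1=0$.

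For the second claim, the one computation to carry out is $O'_1>B'_1$. Setting $a\triangleq(N-m)D$ and $b\triangleq ND$, we have $0<a<b$ and $b-a=mD$, and a short simplification gives $\frac{O'_1}{B'_1}=\frac{b+\sqrt{ab}}{2a}$, which exceeds $1$ because $b>a$ and $\sqrt{ab}>a$. Therefore $\frac{\rho_1}{E_1}>O'_1$ implies $\frac{\rho_1}{E_1}>B'_1$, so by the first part $l=1$ and there are no free riders. With $l=1$, the hypothesis $\frac{\rho_1}{E_1}>O'_1=O'_l$ places us in the second branch of \eqref{coopm}, so $x_l^{th}=D$. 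Since the cooperative participation level only has to satisfy $x^{coop}\le x_l^{th}=D$, we may take $x^{coop}=D$ (equivalently, this is the choice made by the cooperative strategy that maximizes the training data); combined with $x_n^{coop}=D$ for all $n>m$ from Theorem~\ref{mmm} and the absence of free riders, this yields $x_n^{coop}=D$ for every $n\in\mathcal{N}$.

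I do not expect a genuine obstacle here: the corollary is essentially bookkeeping on top of Theorem~\ref{mmm}. The only points needing care are confirming $1\le m\le N-1$ so that $B'_1$ and $O'_1$ are well defined and $a<b$, verifying the inequality $O'_1>B'_1$ so that the stronger hypothesis subsumes the weaker one, and reading ``all clients are contributors'' correctly: Theorem~\ref{mmm} leaves $x^{coop}\in(0,x_l^{th}]$ free, so the statement means the feasible choice $x^{coop}=D$ produces that outcome.
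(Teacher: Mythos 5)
Your proposal is correct and follows essentially the route the paper intends: the corollary is read off directly from Theorem~4, with the first claim following from $l=1$ whenever $\frac{\rho_1}{E_1}>B'_1$, and the second from the branch $x_l^{th}=D$ of \eqref{coopm} once one knows $O'_1 > B'_1$. Your direct computation $\frac{O'_1}{B'_1}=\frac{b+\sqrt{ab}}{2a}>1$ with $a=(N-m)D$, $b=ND$ is a clean standalone verification of the inequality that the paper obtains more generally (as $B'_n \le O'_n$) via the auxiliary-function argument in its Lemma~2 in the supplementary material, so nothing is missing.
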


Similar as Corollary 2, Corollary 3 shows that when client 1's ratio $\frac{\rho_1}{E_1}$ is high, all clients choose to be (converted/partial) contributors under the cooperative participation strategy, and there is no free rider in cross-silo FL. 

In summary, Theorem 3 and Theorem 4 characterize the cooperative participation strategies that can minimize the number of free riders while increasing the amount of local data for model training in cross-silo FL, for both cases of the NE of Game 1 in Theorem 1. We next discuss how to enforce the cooperative participation strategy as the SPNE of Game 2 by a punishment strategy. 

\subsection{Subgame Perfect Nash Equilibrium}\label{4c}

In this section, we aim to derive the condition under which clients behave according to the cooperative participation strategy $\boldsymbol{x}^{coop}$ at the SPNE of Game 2. 

In the infinitely repeated game, we can enforce the cooperative participation strategy $\boldsymbol{x}^{coop}$ as the SPNE by a punishment strategy $\boldsymbol{x}^{pun}$. In this paper, we adopt Friedman punishment \cite{fri}, where clients play the NE of the stage game SPFL in Theorem 1 if any client deviates from the cooperative participation strategy (i.e., $\boldsymbol{x}^{pun}=\boldsymbol{x}^\ast$). Note that although the cooperative participation strategy can reduce the cost of each client in each time slot, i.e., $F_n(\boldsymbol{x}^{coop,t}) < F_n(\boldsymbol{x}^{\ast,t}), \forall n\in\mathcal{N}$, it is not an equilibrium strategy of the selfish participation game SPFL. In other words, given other clients' strategies under $\boldsymbol{x}^{coop}$, a client who is a (converted/partial) contributor has the incentive to decrease its chosen amount of local data for model training to reduce its cost. To punish the client who deviates from the cooperative participation strategy, other clients resort to the NE of the stage game SPFL. 

We use the one-stage deviation principle \cite{fri2} to characterize clients' behaviors at the SPNE. Specifically, each client $n\in\mathcal{N}$ plays the cooperative participation strategy at the SPNE, if its long-term discounted total cost under cooperation is no larger than that under deviation, i.e., 
\begin{equation}
\sum\limits_{t=0}^{\infty}\delta_n^tF_n(\boldsymbol{x}^{coop})\leq F_n(\boldsymbol{x}_n^{least})+\sum\limits_{t=1}^{\infty}\delta_n^tF_n(\boldsymbol{x}^{pun}).
\end{equation}
Here $\boldsymbol{x}_n^{least}$ is the participation strategy profile under which other clients keep cooperation while client $n$ deviates to minimize its own cost in one time slot. Hence we have 
\begin{equation}\label{eq19}
\delta_n \geq \delta_n^{th}(\boldsymbol{x}^{coop}) \triangleq \frac{F_n(\boldsymbol{x}^{coop})-F_n(\boldsymbol{x}_n^{least})}{F_n(\boldsymbol{x}^{pun})-F_n(\boldsymbol{x}_n^{least})},
\end{equation}
where $\delta_n^{th}(\boldsymbol{x}^{coop})$ is the threshold discount factor of client $n$ to play the cooperative strategy profile $\boldsymbol{x}^{coop}$. If each client $n\in\mathcal{N}$ is patient enough, i.e., the discount factor satisfies $\delta_n \geq \delta_n^{th}(\boldsymbol{x}^{coop})$, all clients will play the cooperative participation strategy at the SPNE. For simplicity, we assume that all clients have the same discount factor, i.e., $\delta_n=\delta, \forall n \in\mathcal{N}$, as in \cite{book}.

Next we show how to enforce the cooperative strategies derived in Theorem 3 and Theorem 4 as the SPNE in the repeated game. 

\begin{theorem}\label{sss}
Consider the following strategy profile: all clients choose the cooperative participation strategy $\boldsymbol{x}^{coop}$ (calculated in Theorem \ref{kkk} or Theorem \ref{mmm}) until a client deviates, and then all clients play the NE $\boldsymbol{x}^\ast$ of the stage game SPFL (calculated in Theorem 1) in all future time slots. Such a strategy profile is the SPNE of Game 2 if 
\begin{equation}\label{eq20}
\delta \geq \delta^{th}(\boldsymbol{x}^{coop}) \triangleq \max \{\delta_n^{th}(\boldsymbol{x}^{coop}): \forall n\in \mathcal{N}\}.
\end{equation}
\end{theorem}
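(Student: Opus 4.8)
The plan is to apply the one-stage deviation principle (\cite{fri2}) to the candidate strategy profile: cooperation via $\boldsymbol{x}^{coop}$ along the equilibrium path, reversion to the stage-game NE $\boldsymbol{x}^\ast$ (Friedman punishment) after any deviation. Since the punishment phase consists of playing the stage-game NE forever, and by Theorem~1 the profile $\boldsymbol{x}^\ast$ is a Nash equilibrium of SPFL, no client can profitably deviate in any subgame that lies in the punishment phase — this is immediate from the definition of NE applied stage-by-stage, using that the per-slot game is the same (up to the irrelevant relabeling of data samples) in every time slot. So the only subgames where the deviation constraint is nontrivial are those on the cooperative path.

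On the cooperative path, I would fix a client $n$ and a time slot and compare its continuation cost under compliance versus under a one-shot deviation. Under compliance, the continuation cost is $\sum_{t=0}^{\infty}\delta^t F_n(\boldsymbol{x}^{coop}) = \frac{1}{1-\delta}F_n(\boldsymbol{x}^{coop})$. Under the best one-shot deviation, client $n$ picks the cost-minimizing response $\boldsymbol{x}_n^{least}$ in the current slot (other clients still play $\boldsymbol{x}^{coop}$), incurring $F_n(\boldsymbol{x}_n^{least})$, and thereafter all clients revert to $\boldsymbol{x}^\ast$, incurring $\sum_{t=1}^{\infty}\delta^t F_n(\boldsymbol{x}^\ast) = \frac{\delta}{1-\delta}F_n(\boldsymbol{x}^\ast)$. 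Compliance is (weakly) optimal for client $n$ iff
\[
\frac{1}{1-\delta}F_n(\boldsymbol{x}^{coop}) \;\le\; F_n(\boldsymbol{x}_n^{least}) + \frac{\delta}{1-\delta}F_n(\boldsymbol{x}^\ast),
\]
which, since $F_n(\boldsymbol{x}^{coop}) < F_n(\boldsymbol{x}^\ast)$ (Theorem~3/4) and $F_n(\boldsymbol{x}_n^{least}) \le F_n(\boldsymbol{x}^{coop})$ (because deviating to the myopic best response in one slot cannot increase that slot's cost), rearranges to exactly $\delta \ge \delta_n^{th}(\boldsymbol{x}^{coop})$ as defined in \eqref{eq19}. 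One must check that the denominator $F_n(\boldsymbol{x}^{pun}) - F_n(\boldsymbol{x}_n^{least}) = F_n(\boldsymbol{x}^\ast) - F_n(\boldsymbol{x}_n^{least})$ is strictly positive so that the threshold is well-defined and the inequality direction is preserved; this holds because $F_n(\boldsymbol{x}_n^{least}) \le F_n(\boldsymbol{x}^{coop}) < F_n(\boldsymbol{x}^\ast)$. Taking $\delta \ge \delta^{th}(\boldsymbol{x}^{coop}) = \max_n \delta_n^{th}(\boldsymbol{x}^{coop})$ ensures the no-profitable-deviation condition holds simultaneously for every client.

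The remaining step is to invoke the one-stage deviation principle to conclude that ruling out profitable one-shot deviations in every subgame suffices for subgame perfection; this requires the game to be "continuous at infinity," which holds here because each $F_n$ is bounded (the accuracy loss term $\rho_n A(\cdot)$ is bounded away from $+\infty$ precisely because, on both the cooperative path and the punishment path, some client always trains a positive amount of data — Corollary~1 for the NE, and the construction of $\boldsymbol{x}^{coop}$ for cooperation) and $\delta < 1$, so continuation payoffs are well-defined and tail contributions vanish. I expect the main subtlety to be the careful justification that the one-shot deviation on the cooperative path is fully captured by $\boldsymbol{x}_n^{least}$ and that no multi-slot deviation can do better — which is exactly what the one-stage deviation principle delivers once boundedness/discounting is in place — together with verifying the sign of the denominator in $\delta_n^{th}$ so that the algebraic rearrangement of the incentive inequality is valid.
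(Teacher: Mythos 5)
Your proposal is correct: the incentive inequality you write is exactly the paper's condition (18), your rearrangement to $\delta \geq \delta_n^{th}(\boldsymbol{x}^{coop})$ matches (19), the punishment phase is handled correctly (one-shot deviations from repeated play of the stage NE $\boldsymbol{x}^\ast$ are unprofitable because $\boldsymbol{x}^\ast$ is an NE of SPFL), and taking the maximum over $n$ gives the stated threshold. Where you and the paper differ is in what the written proof actually contains. The paper sets up the one-stage deviation framework and the definition of $\delta_n^{th}$ in the main text, and its Appendix G proof is almost entirely the concrete computation you skip: it derives $\boldsymbol{x}_n^{least}$ in closed form for each class of client (showing the stationary point for a deviating converted contributor is negative so its best one-shot deviation is $0$, and giving the piecewise expressions for the critical client $k$ and for contributors $k+1,\ldots,N$), writes out the resulting $\delta_n^{th}$ explicitly, and observes that among converted contributors the threshold decreases in $\rho_n/E_n$ so client $l$ is the binding one; these formulas are what feed into the optimization problems (22) and (24) and Algorithm 2. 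Your abstract verification buys a cleaner logical argument for the theorem as stated, and you supply checks the paper leaves implicit --- positivity of the denominator $F_n(\boldsymbol{x}^{pun})-F_n(\boldsymbol{x}_n^{least})$ via $F_n(\boldsymbol{x}_n^{least}) \leq F_n(\boldsymbol{x}^{coop}) < F_n(\boldsymbol{x}^\ast)$, sequential rationality in the punishment subgames, and the boundedness/continuity-at-infinity hypothesis behind the one-stage deviation principle (with the correct observation that unbounded cost can only arise when the total trained data is zero, which only hurts a deviator). What it does not deliver is the explicit characterization of the deviation strategies and thresholds, which is the substantive content of the paper's appendix and is needed downstream; as a proof of Theorem 5 alone, however, your argument is complete and sound.
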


\begin{proof}
See Appendix G in the supplementary material.
\end{proof}

Theorem 5 shows that when clients are patient enough, i.e., $\delta \geq \delta^{th}(\boldsymbol{x}^{coop})$, clients play the cooperative strategy $\boldsymbol{x}^{coop}$ at the SPNE, and no client has the incentive to decrease its chosen amount of local data.

Note that the cooperative strategy $\boldsymbol{x}^{coop}$ characterized in Theorem 3 and Theorem 4 is not unique, since converted contributors can choose any cooperative participation level $x^{coop} \leq x_l^{th}$. Therefore, the SPNE in Theorem 5 is not unique. Next we design an algorithm to calculate the optimal SPNE that can minimize the number of free riders while maximizing the amount of local data for model training.

\subsection{Optimal SPNE}

In this section, we aim to derive the optimal SPNE that can minimize the number of free riders while maximizing the amount of local data for model training. In the following, we first define the optimal SPNE of Game 2, and then present the procedures to compute the optimal SPNE. 

We first analyze the optimal SPNE $\boldsymbol{x}^{os}=\{x_n^{os}: \forall n\in\mathcal{N}\}$ for the case where the critical client $k$ exists at the NE of Game 1 (as shown in Case I in Theorem 1). Since the optimal SPNE minimizes the number of free riders, it follows the solution structure $\boldsymbol{x}^{coop}$ shown in (11) in Theorem 3, and can be written as:
\begin{equation}\label{eq21}
x_n^{os}=
\left\{
\begin{aligned}
& 0, &\mbox{ if } &n < l;\\
& x_{cc}^{os}, &\mbox{ if } &l \leq n < k;\\
& x_k^{*}, &\mbox{ if } &n=k;\\
& D, &\mbox{ if } &n>k.
\end{aligned}
\right.
\end{equation}
Here $x_{cc}^{os}$ is converted contributors' strategy under the optimal SPNE which satisfies $0 \leq x_{cc}^{os} \leq x_l^{th}$, and $x_l^{th}$ is defined in \eqref{coopk}, \eqref{imp}. We can see from \eqref{eq21} that only converted contributors' strategy $x_{cc}^{os}$ in the optimal SPNE $\boldsymbol{x}^{os}$ is unknown.

Now we derive $x_{cc}^{os}$. Since $\boldsymbol{x}^{os}$ is the SPNE of Game 2, it satisfies the one-stage deviation principle described in Theorem 5, i.e., $\delta \geq \delta^{th}(\boldsymbol{x}^{os})$. Furthermore, the optimal SPNE maximizes the amount of local data for model training, i.e., $(k-l) x_{cc}^{os} + x_k^\ast +(N-k)D$. In summary, $x_{cc}^{os}$ is the optimal solution to the following optimization problem with respect to $x_{cc}$: 
\begin{equation}\label{prob:osk}
\begin{aligned}
\mbox{max}~ & ~~  (k-l) x_{cc} + x_k^\ast +(N-k)D \\
\mbox{s.t.}~ & ~~  0 \leq x_{cc} \leq x_l^{th}, \\
~ & ~~ \delta \geq \delta^{th}(\boldsymbol{x}^{os}). 
\end{aligned}
\end{equation}

Although the above optimization problem has a simple linear objective with respect to $x_{cc}$, we cannot derive the closed-form expression for the optimal solution $x_{cc}^{os}$ since the constraint $\delta \geq \delta^{th}(\boldsymbol{x}^{os})$ is complicated due to the following two reasons. First, similar to \eqref{eq20}, $\delta^{th}(\boldsymbol{x}^{os})$ takes the maximum value among $\delta_n^{th}(\boldsymbol{x}^{os}), \forall n\in\mathcal{N}$, and the max operator is a non-differentiable function. Second, similar to \eqref{eq19}, $\delta_n^{th}(\boldsymbol{x}^{os})$ involves not only $\boldsymbol{x}^{os}$ but also $\boldsymbol{x}_n^{least}$, and hence the function $\delta_n^{th}(\boldsymbol{x}^{os})$ changes with $x_{cc}$ in a highly non-linear manner (as shown in Appendix G in the supplementary material). In order to find the optimal solution $x_{cc}^{os}$, we can use greedy algorithms to solve the univariate optimization problem \eqref{prob:osk}. 

\begin{algorithm}[t]
\LinesNumbered
\SetAlgoLined
\begin{small}
\KwIn{$\rho_n, E_n, C_n, \forall n\in\mathcal{N}$, $G, N, D, p$}
\KwOut{The optimal SPNE $\boldsymbol{x}^{os}=\{x_n^{os}: \forall n\in\mathcal{N}\}$}
\uIf{$\exists k\in\mathcal{N}$ such that $(N-k)D \leq h_k \leq (N+1-k)D$}{
Find the client $l$ where $l=\min\{n\in\mathcal{N}: 1 \leq n \leq k-1, \frac{\rho_n}{E_n} > B_n\}$\;
Compute the maximum amount of local data $x_l^{th}$ that converted contributors can choose by solving (13)\;
Solve the optimization problem \eqref{prob:osk}\;
Announce to each client $n\in\mathcal{N}$ the optimal SPNE:
$
x_n^{os}=
\left\{
\begin{aligned}
& 0, &\mbox{ if } &n < l;\\
& x_{cc}^{os}, &\mbox{ if } &l \leq n < k;\\
& x_k^{*}, &\mbox{ if } &n=k;\\
& D, &\mbox{ if } &n>k.
\end{aligned}
\right.
$
}\ElseIf{
$\exists m\in\mathcal{N}$ such that $h_m < (N-m)D < h_{m+1}$}{
Find the client $l$ where $l=\min\{n\in\mathcal{N}: 1 \leq n \leq m, \frac{\rho_n}{E_n} > B_n^{'}\}$\; 
Compute the maximum amount of local data $x_l^{th}$ that converted contributors can choose by solving (17)\;
Solve the optimization problem \eqref{prob:osm}\;
Announce to each client $n\in\mathcal{N}$ the optimal SPNE:
$
x_n^{os}=
\left\{
\begin{aligned}
& 0, &\mbox{ if } &n < l;\\
& x_{cc}^{os}, &\mbox{ if } &l \leq n \leq m;\\
& D, &\mbox{ if } &n>m.
\end{aligned}
\right.
$

}
\end{small}
\caption{An Algorithm to Calculate the Optimal SPNE of Game 2}\label{al2}
\end{algorithm}

Next we analyze the optimal SPNE $\boldsymbol{x}^{os}=\{x_n^{os}: \forall n\in\mathcal{N}\}$ for the case where the critical client $k$ does not exist at the NE of Game 1 (as shown in Case II in Theorem 1). Similarly, the optimal SPNE follows the solution structure $\boldsymbol{x}^{coop}$ shown in (15) in Theorem 4, and can be written as:
\begin{equation}\label{eq22}
x_n^{os}=
\left\{
\begin{aligned}
& 0, &\mbox{ if } &n < l;\\
& x_{cc}^{os}, &\mbox{ if } &l \leq n \leq m;\\
& D, &\mbox{ if } &n>m.
\end{aligned}
\right.
\end{equation}
Here $x_{cc}^{os}$ is converted contributors' strategy under the optimal SPNE which satisfies $0 \leq x_{cc}^{os} \leq x_l^{th}$, and $x_l^{th}$ is defined in \eqref{coopm}, \eqref{fun}. We can see from \eqref{eq22} that only converted contributors' strategy $x_{cc}^{os}$ in the optimal SPNE $\boldsymbol{x}^{os}$ is unknown. 
We can derive $x_{cc}^{os}$ by solving the following optimization problem with respect to $x_{cc}$: 
\begin{equation}\label{prob:osm}
\begin{aligned}
\mbox{max}~ & ~~  (m-l+1) x_{cc}  +(N-m)D \\
\mbox{s.t.}~ & ~~  0 \leq x_{cc} \leq x_l^{th}, \\
~ & ~~ \delta \geq \delta^{th}(\boldsymbol{x}^{os}). 
\end{aligned}
\end{equation}
Similarly, we can use greedy algorithms to solve the above univariate optimization problem.

Algorithm 2 shows the procedures for the central server to calculate and announce the optimal SPNE of Game 2.\footnote{Note that the change in the number of clients participating in the federated learning process due to battery power and occasional disconnections in a mobile computing environment will not affect Algorithm 2.} To calculate the optimal SPNE of Game 2, the central server first needs to check whether the critical client $k$ exists (Line 1 and Line 6 in Algorithm 2). Then the central server calculates the minimum number of free riders in cross-silo FL under the optimal SPNE, which depends on each client's valuation-computation ratio $\frac{\rho_n}{E_n}$ and the value of $B_n$ or $B'_n$ (Line 2 and Line 7 in Algorithm 2). 
The central server then calculates the maximum amount of local data $x_l^{th}$ that the converted contributors can choose under the optimal SPNE, by solving (13) or (17) (Line 3 and Line 8 in Algorithm 2). Finally the central server solves the optimization problem \eqref{prob:osk} or \eqref{prob:osm} and announces the optimal SPNE strategy $x_n^{os}$ to each client $n\in\mathcal{N}$ (Lines 4-5 and Lines 9-10 in Algorithm 2). 

Algorithm 2 has a linear complexity of $\mathcal{O}(D)$. More specifically, checking whether the critical client $k$ exists (Line 1 and Line 6) has a complexity of $\mathcal{O}(N)$. Calculating the number of free riders (Line 2 and Line 7) has a complexity of $\mathcal{O}(N)$. Calculating the maximum amount of local data $x_l^{th}$ (Line 3 and Line 8) has a complexity of $\mathcal{O}(\log D)$. Solving the optimization problem (Line 4 and Line 9) has a complexity of $\mathcal{O}(D)$. Since the number of clients in cross-silo FL is always small, we usually have $N < D$ \cite{div}. Therefore, the overall complexity of Algorithm 2 is $\mathcal{O}(D)$. 


\section{Simulation Results}\label{iiiii}

In this section, we first evaluate the performance of our proposed optimal SPNE and show its scalability. We then compare the optimal SPNE with two related methods.

We conduct simulations on the dataset MNIST using FedAvg \cite{first}. The dataset MNIST contains $60000$ figures of handwritten digits, which has been widely used in many existing works on FL, e.g., \cite{res-c1}. We assume that there are $N = 100$ clients since the number of clients in cross-silo FL is always small \cite{div}. Because clients in cross-silo FL are usually companies or organizations which have plenty of data, we assume that each client has a local data set with $D = 10000$ local data samples randomly chosen from MNIST \cite{res-c2}. We assume that the number of iterations in one cross-silo FL process is $G=50$ under which the FedAvg algorithm can converge \cite{res-c3}. For simplicity, we assume that the computation cost coefficient of each client is $E = 0.985 \times 10^{-8}$ CNY.\footnote{According to Amazon Lambda Pricing \cite{res-c4}, each request is charged $1.36 \times 10^{-6}$ CNY, and the price per millisecond is $1.42 \times 10^{-8}$ CNY. Our assumption is roughly consistent with \cite{res-c4}.} We assume that the valuation for global model accuracy of client $n$ is $\rho_n = n \times \frac{100}{N}$, and the discount factor of each client is $\delta=0.8$.

\subsection{Performance and Scalability}

In this subsection, we evaluate the performance of our proposed optimal SPNE in terms of two important metrics: the number of reduced free riders $N_f$ and the total data contribution ratio $R_d$. The number of reduced free riders is the difference between the number of free riders at the NE of the stage game and the number of free riders at the proposed optimal SPNE. The total data contribution ratio is the ratio between the total amount of local data for model training at the optimal SPNE and that at the NE of the stage game, i.e., $R_d = \sum_{n\in\mathcal{N}}x_n^{os} / \sum_{n\in\mathcal{N}}x_n^*$. To show the scalability of our proposed SPNE, we will show how the above two metrics change with the number of clients $N$ and the size of each client's local data set $D$. 

\begin{figure}[t]
 \centering
 \begin{minipage}[h]{0.48\linewidth}\label{figN1}
 \centering
 \includegraphics[width=1.02\textwidth]{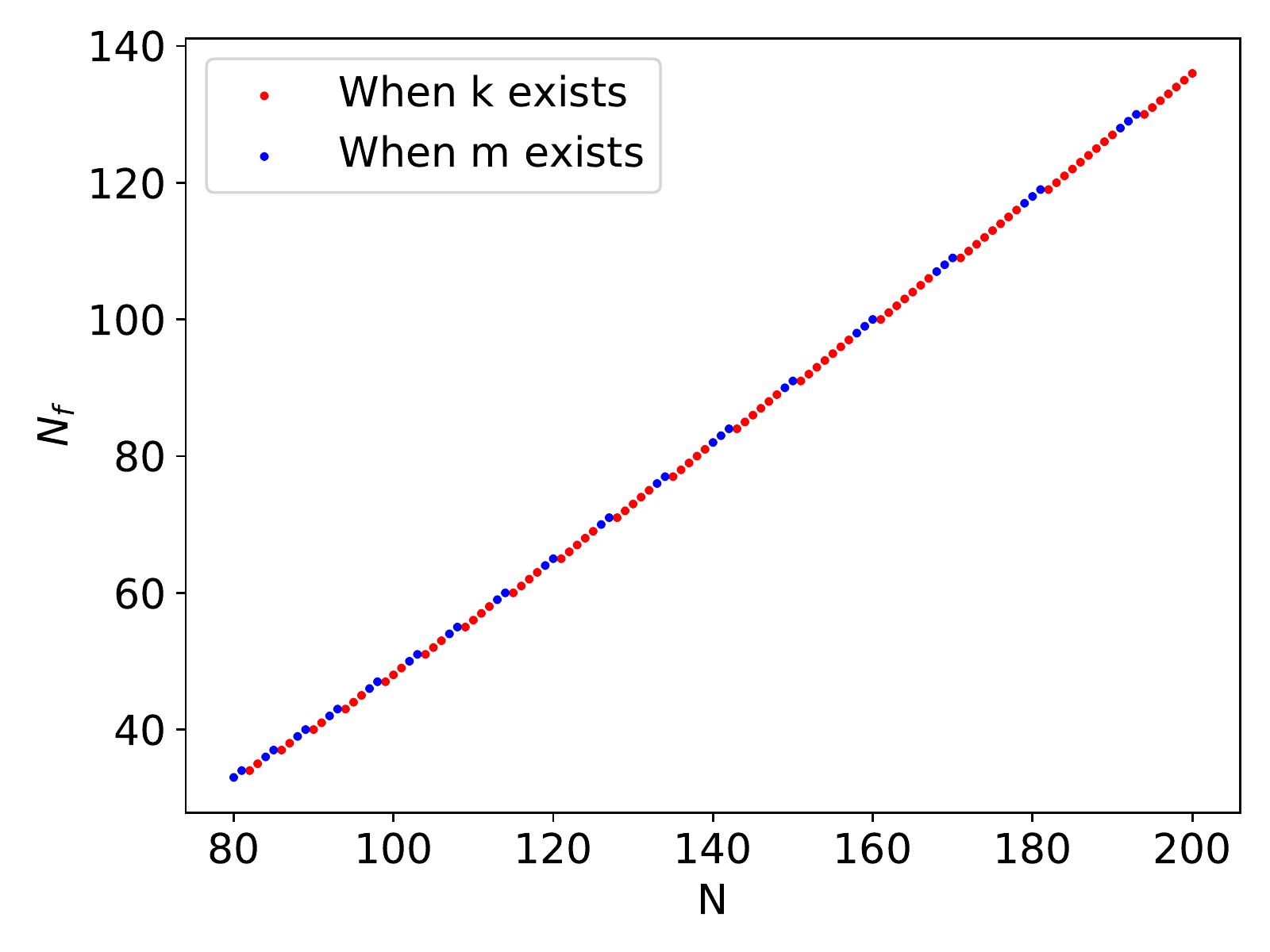}
 \caption{$N_f$ under different $N$}
 \end{minipage}%
 \begin{minipage}[h]{0.48\linewidth}\label{figN2}
 \centering
 \includegraphics[width=1.02\textwidth]{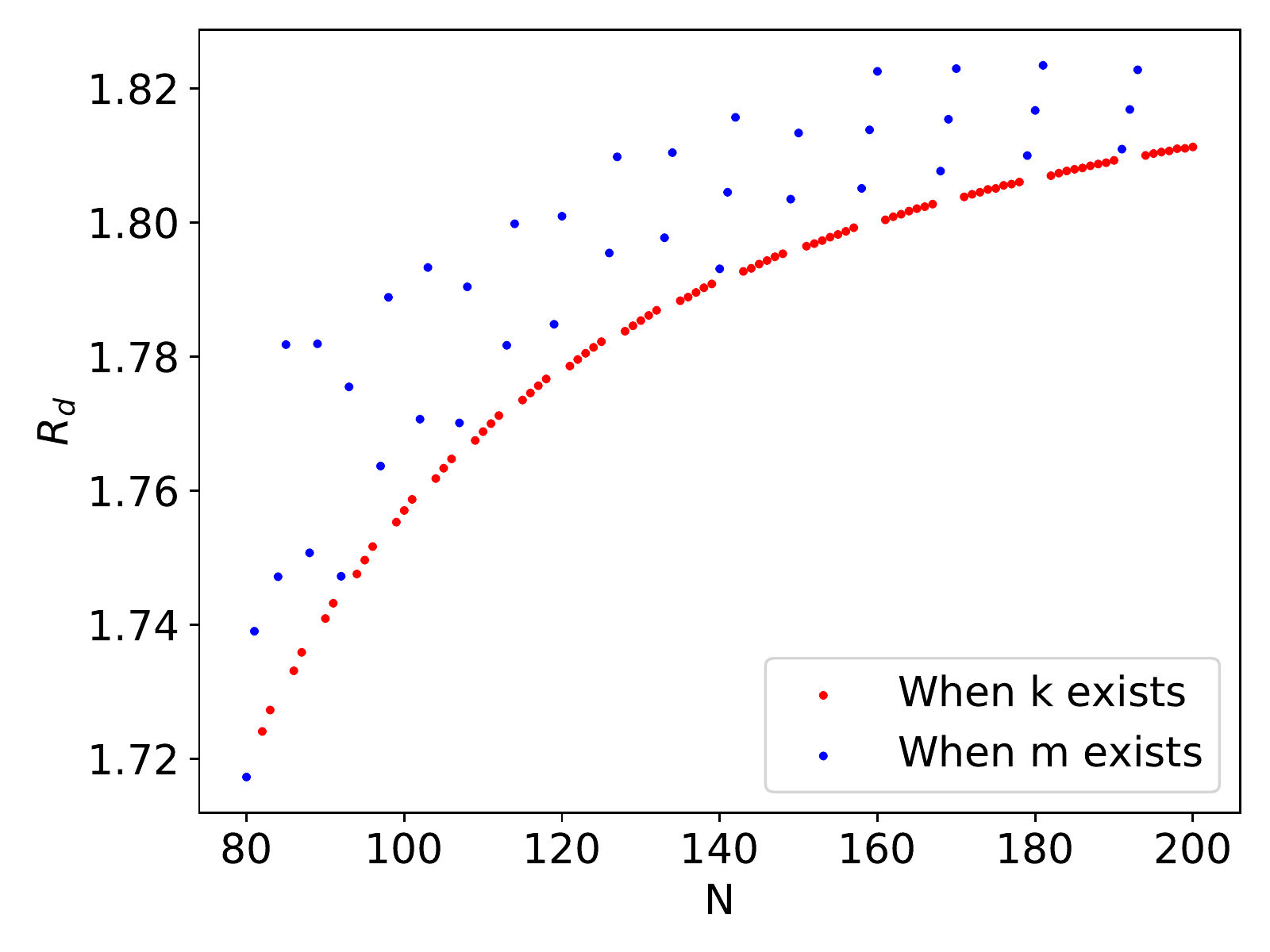}
 \caption{$R_d$ under different $N$}
 \end{minipage}%
\end{figure}

\textbf{The impact of the number of clients $N$.} We show how the number of clients $N$ affects the number of reduced free riders $N_f$ and the data contribution ratio $R_d$ in Fig. 3 and Fig. 4 respectively.

Fig. 3 shows that the number of reduced free riders increases with $N$. In Fig. 3, the red dots represent the case where the critical client $k$ exists, and the blue dots represent the case where the critical client $k$ does not exist. As $N$ increases, at the NE of the stage game SPFL, the global model trained by the contributors with high valuation-computation ratios is good enough, and hence the contributors with low valuation-computation ratios may become free riders to reduce their total costs. Thus more clients will choose to be free riders with the increase of $N$. Our proposed optimal SPNE can convert most free riders into \emph{converted contributors} and thus solves the free-rider problem effectively even when $N$ is large.

Fig. 4 shows how the data contribution ratio $R_d$ changes with $N$. Specifically, when the critical client $k$ exists, the data contribution ratio $R_d$ generally increases with $N$. As $N$ increases, the optimal SPNE can make the the majority of free riders choose to be converted contributors and thus increase the data contribution ratio. When $k$ does not exist, under the same number of free riders, the data contribution ratio decreases with $N$ (the blue dots). The reason is that under the same number of free riders, a larger $N$ indicates more contributors, and hence a larger amount of local data for model training, which leads to a global model with a higher model accuracy at the NE of stage game. Therefore, the converted contributors under the optimal SPNE choose a smaller amount of local data for model training to reduce the computation cost. Our proposed optimal SPNE can increase the amount of total data for model training compared with the NE of the stage game, even when $N$ is large.

\emph{In summary, when more clients participate in cross-silo FL, the selfish participation behavior leads to more free riders. Our proposed optimal SPNE can effectively reduce the number of free riders by up to $99.3\%$,\footnote{In Fig. 3, when $N=200$, we can calculate that $k=138$ and $l=2$. In other words, there are 137 free riders at the NE of the stage game while there is only one free rider at the optimal SPNE. We reduce the number of free riders by $99.3\%$, i.e., $\frac{137-1}{137}=99.3 \%$.} and increase the amount of local data chosen by clients to perform model training by up to $82.3\%$.\footnote{In Fig. 4, when $N=181$, we can calculate that $m=120$. Furthermore, the total amount of local data for model training at the optimal SPNE is $1112299$, and the total amount of local data for model training at the NE of the stage game is $610000$. In this case, the total data contribution ratio is $\frac{1112299}{610000}=1.823$. Equivalently, the amount of local data chosen by clients to perform model training is increased by $82.3\%$.}
Our proposed optimal SPNE performs well even when $N$ is large, which shows the scalability of our work.}

\begin{figure}[t]
 \centering
 \begin{minipage}[h]{0.48\linewidth}\label{figD1}
 \centering
 \includegraphics[width=1.028\textwidth]{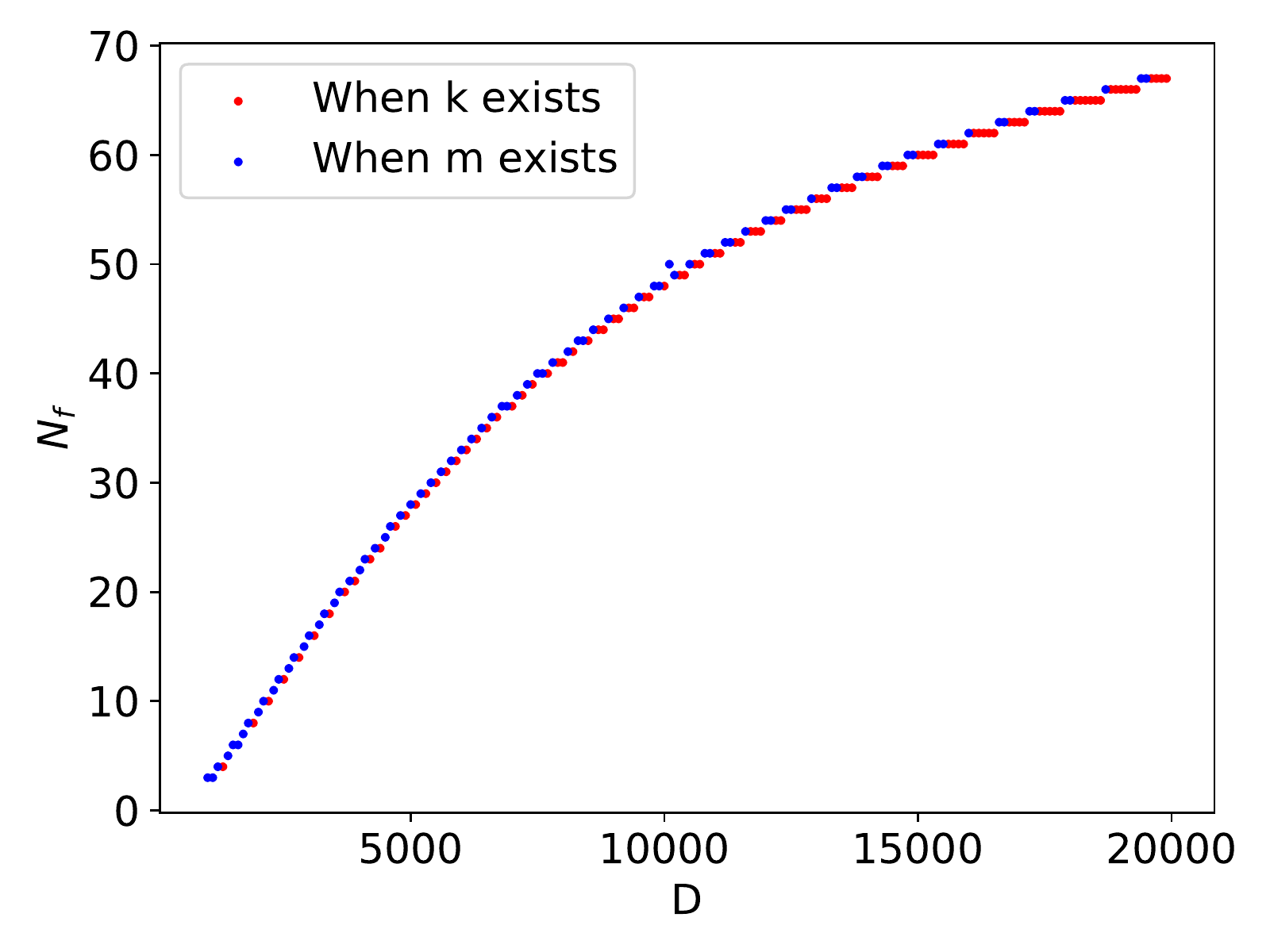}
 \caption{$N_f$ under different $D$}
 \end{minipage}%
 \begin{minipage}[h]{0.48\linewidth}\label{figD2}
 \centering
 \includegraphics[width=1.028\textwidth]{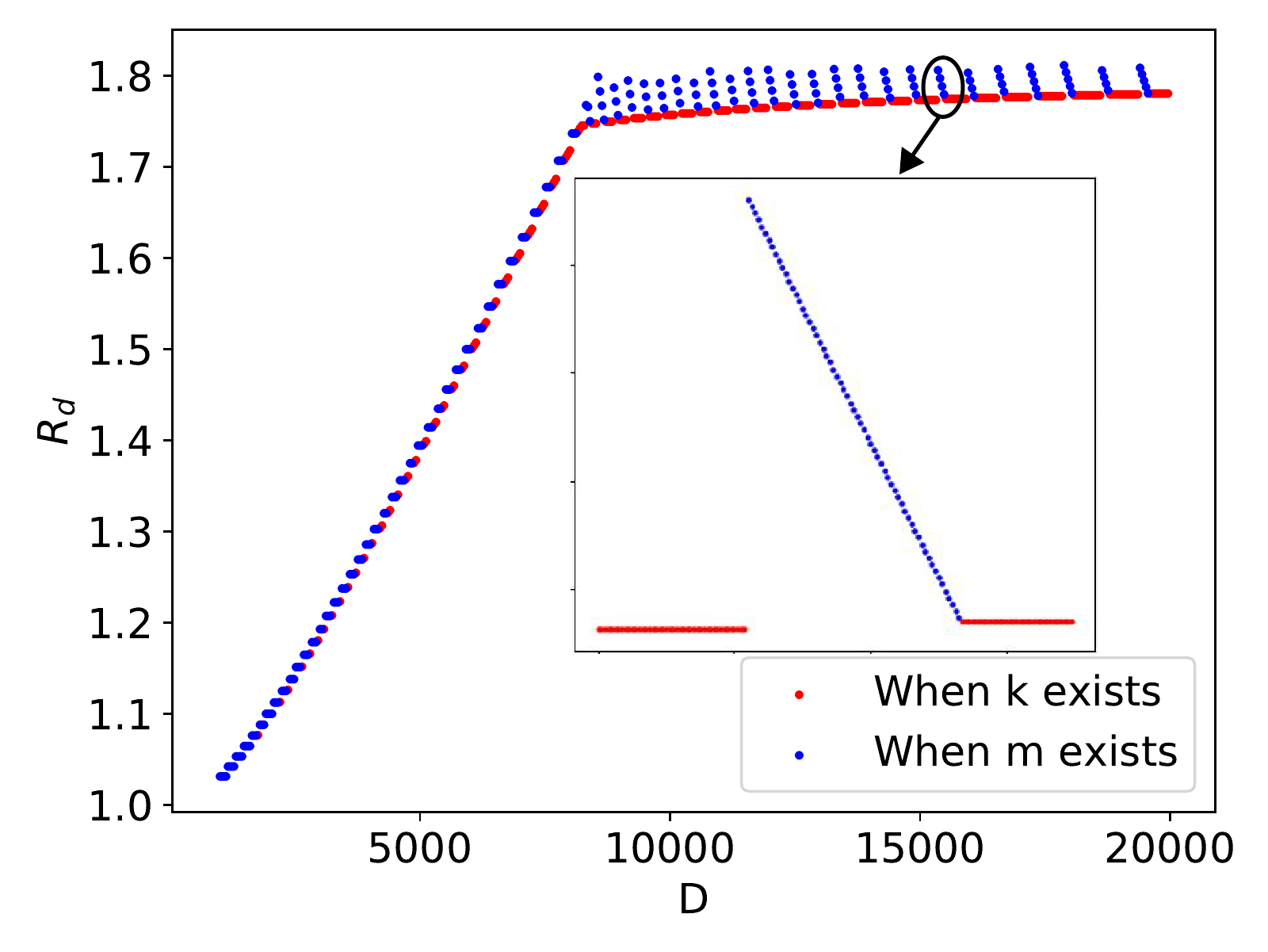}
 \caption{$R_d$ under different $D$}
 \end{minipage}%
\end{figure}

\textbf{The impact of the size of each client's local data set $D$.} We show how the parameter $D$ affects the number of reduced free riders $N_f$ and the data contribution ratio $R_d$ in Fig. 5 and Fig. 6 respectively.

Fig. 5 shows that the number of reduced free riders increases with $D$. When $D$ is small, to achieve a good global model with a high model accuracy requires many clients to choose to be contributors to perform model training, and hence there are few free riders at the NE of the stage game. When $D$ is large, a small number of contributors can train a good global model since each client has a large amount of local data to perform model training, and hence there are many free riders at the NE of the stage game. Our proposed optimal SPNE can convert most free riders into \emph{converted contributors} and thus solves the free-rider problem effectively, even when $D$ is large.

Fig. 6 shows how the data contribution ratio $R_d$ changes with $D$. Specifically, the data contribution ratio $R_d$ generally increases with $D$. When $D$ is small, the curve increases quickly with $D$, and when $D$ is large, $R_d$ increases slowly in a zigzag pattern. When $D$ is small, there are few free riders at the NE of the stage game, and hence there is no much room to increase the amount of local data for model training. As $D$ increases, under the optimal SPNE, converted contributors choose a positive amount of local data to perform model training, which increases the amount of chosen data by up to $81.2\%$. The subfigure in Fig. 6 shows the zigzag shape of the curve under the same number of free riders. Note that the cooperative strategy of the critical client $k$ is $x_k^{*}$ which decreases with $D$ as calculated in Theorem 1, while the cooperative strategy $x^{coop}$ of the converted contributors increases with $D$. Hence the total amount of local data remains unchanged at the beginning of the curve. When $k$ does not exist, under the same number of free riders, the data contribution ratio decreases with $D$. The reason is that a larger $D$ leads to less accuracy loss, so the converted contributors will use less data for training to reduce the computation cost. Our proposed optimal SPNE can increase the amount of total data for model training compared with the NE of the stage game, even when $D$ is large.

\emph{In summary, when each client in cross-silo FL has more local data samples, the selfish participation behavior leads to more free riders. Our proposed optimal SPNE can effectively reduce the number of free riders by up to $98.5\%$, and increase the data contribution by up to $81.2\%$. Our proposed optimal SPNE performs well even when $D$ is large, which shows its scalability.}

We also evaluate the performances of our proposed optimal SPNE under other system parameters such as the number of iterations $G$, the computation cost coefficient $E$, and the discount factor $\delta$. Due to space limit, we put detailed discussions in Appendix H.

\subsection{Comparisons with Related Methods}

In this subsection, we compare our proposed solution with two related methods. One is a free-rider detection method called DAGMM proposed in \cite{DAGMM}, which can effectively detect free riders and only aggregate the updates of (partial) contributors at the server. The other is a contract-based incentive mechanism for FL proposed in \cite{incen1}. In the following, we show the comparisons between our method and the above two methods under different system parameters (i.e., the number of iterations $G$ and the discount factor $\delta$).\footnote{We also show the comparisons under different values of $N, D$ and $E$ in Appendix H.}

\begin{figure}[t]
\centering
\begin{minipage}[h]{0.48 \linewidth}
\centering
\includegraphics[width=1.04\textwidth]{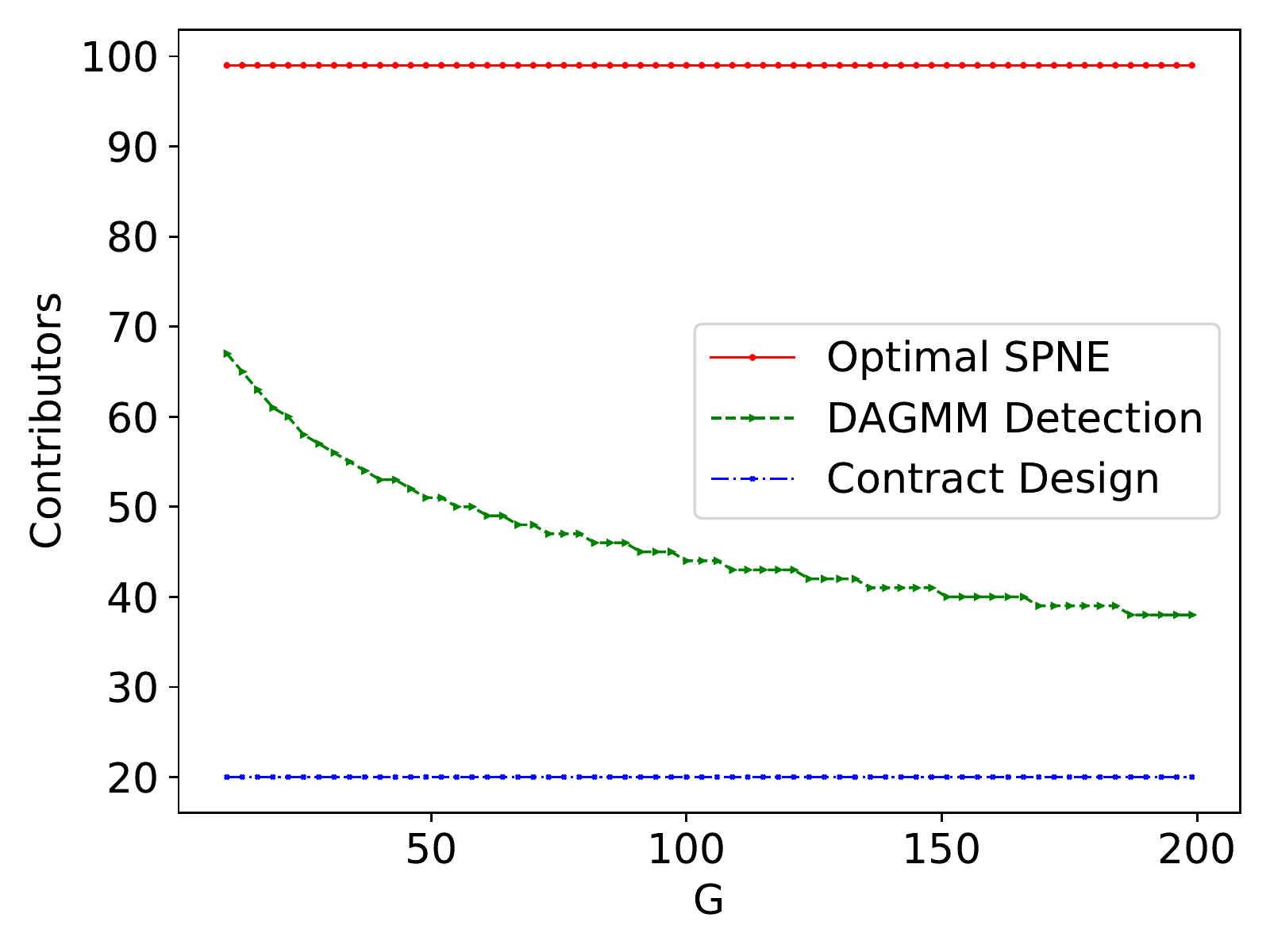}
  \caption{The number of contributors under different $G$}\label{CG}
\end{minipage}
\begin{minipage}[h]{0.48 \linewidth}
\centering
\includegraphics[width=1.04\textwidth]{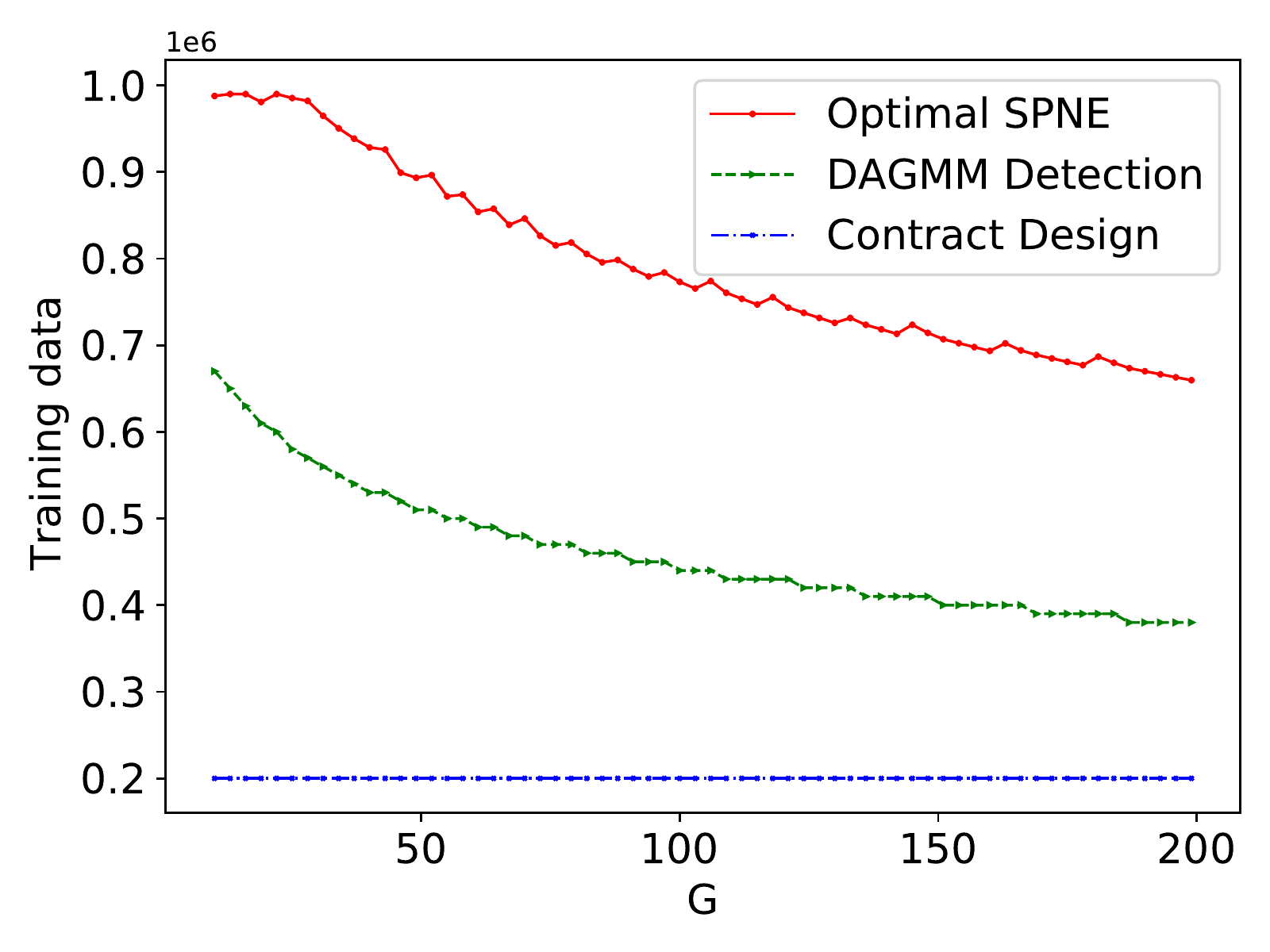}
  \caption{The total amount of training data under different $G$}\label{TG}
\end{minipage}
\end{figure}

Fig. \ref{CG} and Fig. \ref{TG} show the comparisons of the three methods under different values of $G$. A larger $G$ means more iterations in one cross-silo FL process, in which case clients can achieve a global model with a higher model accuracy. Fig. \ref{CG} shows that among the three methods, our method achieves the largest number of contributors under different values of $G$. Specifically, under our method, the number of contributors remains unchanged and almost all clients participate in the training. Under the DAGMM detection method, the number of contributors decreases with $G$. Under the contract design method, only $20\%$ clients participate in the training. The reason is that under a larger $G$, the global model trained by contributors with high valuation-computation ratios is good enough, and hence contributors with low valuation-computation ratios for the global model accuracy may change to be free riders. Therefore, the number of contributors under the DAGMM detection method decreases with $G$. Under the contract design method, since the change in $G$ does not affect the number of clients that are provided with a positive contract item, the number of contributors remains unchanged. Our proposed method can effectively motivate almost all free riders to become converted contributors. 

Fig. \ref{TG} shows that our method has a larger amount of training data compared with the other two methods. Specifically, under our method, the total amount of training data first remains unchanged, and then gradually decreases. The reason is that a smaller $G$ leads to a smaller threshold discount factor of clients. So when $G$ is small, converted contributors will participate in training with all local data. As $G$ increases, converted contributors can use only part of local data in training. Under the DAGMM detection method, since the number of contributors decreases with $G$, the total amount of training data also decreases with $G$. Under the contract design method, since the change in $G$ does not affect the number of contributors, the total amount of training data remains unchanged.

\begin{figure}[t]
\centering
\begin{minipage}[h]{0.49 \linewidth}
\centering
\includegraphics[width=1.05\textwidth]{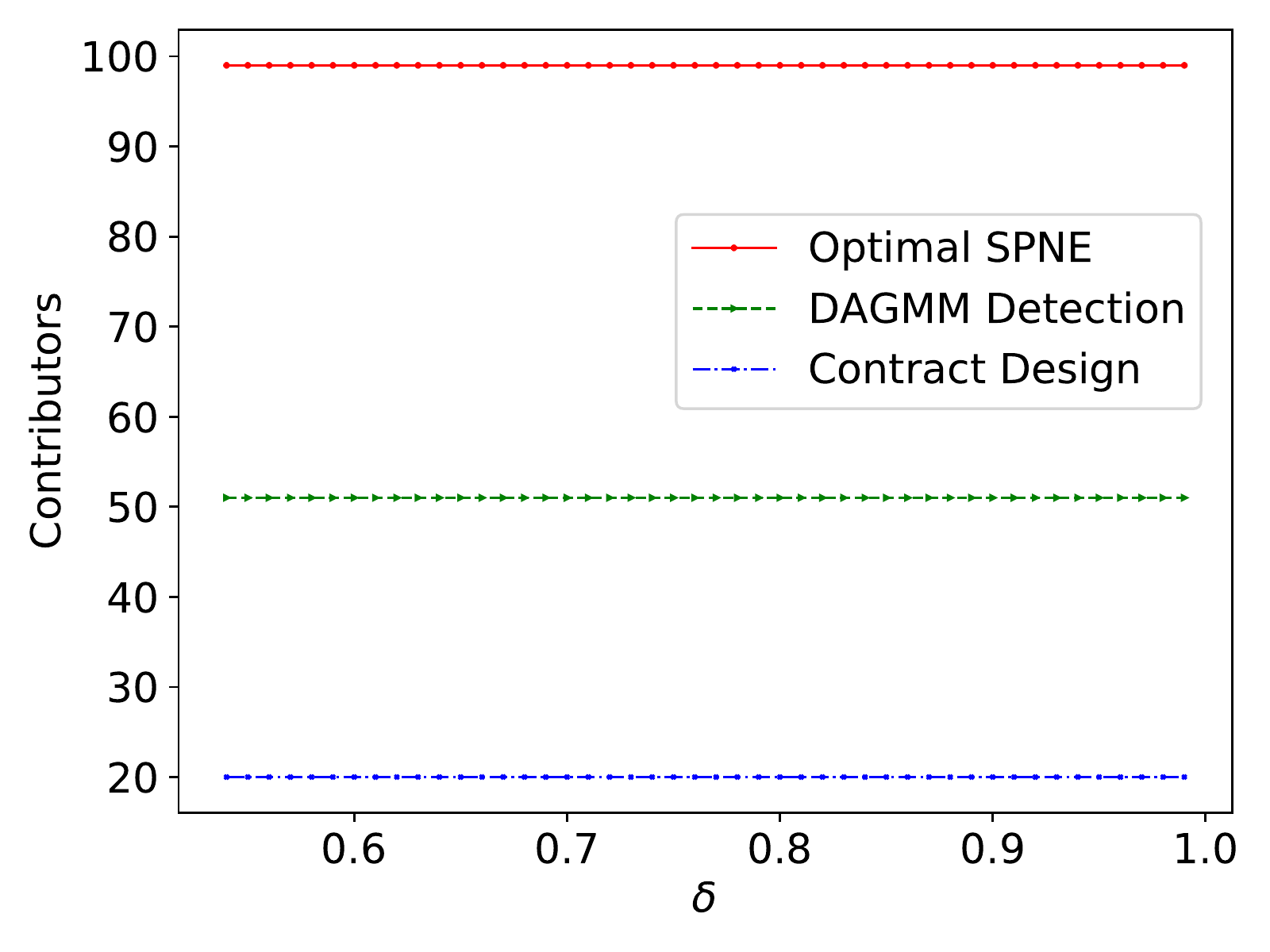}
  \caption{The number of contributors under different $\delta$}\label{Cd}
\end{minipage}
\begin{minipage}[h]{0.49 \linewidth}
\centering
\includegraphics[width=1.05\textwidth]{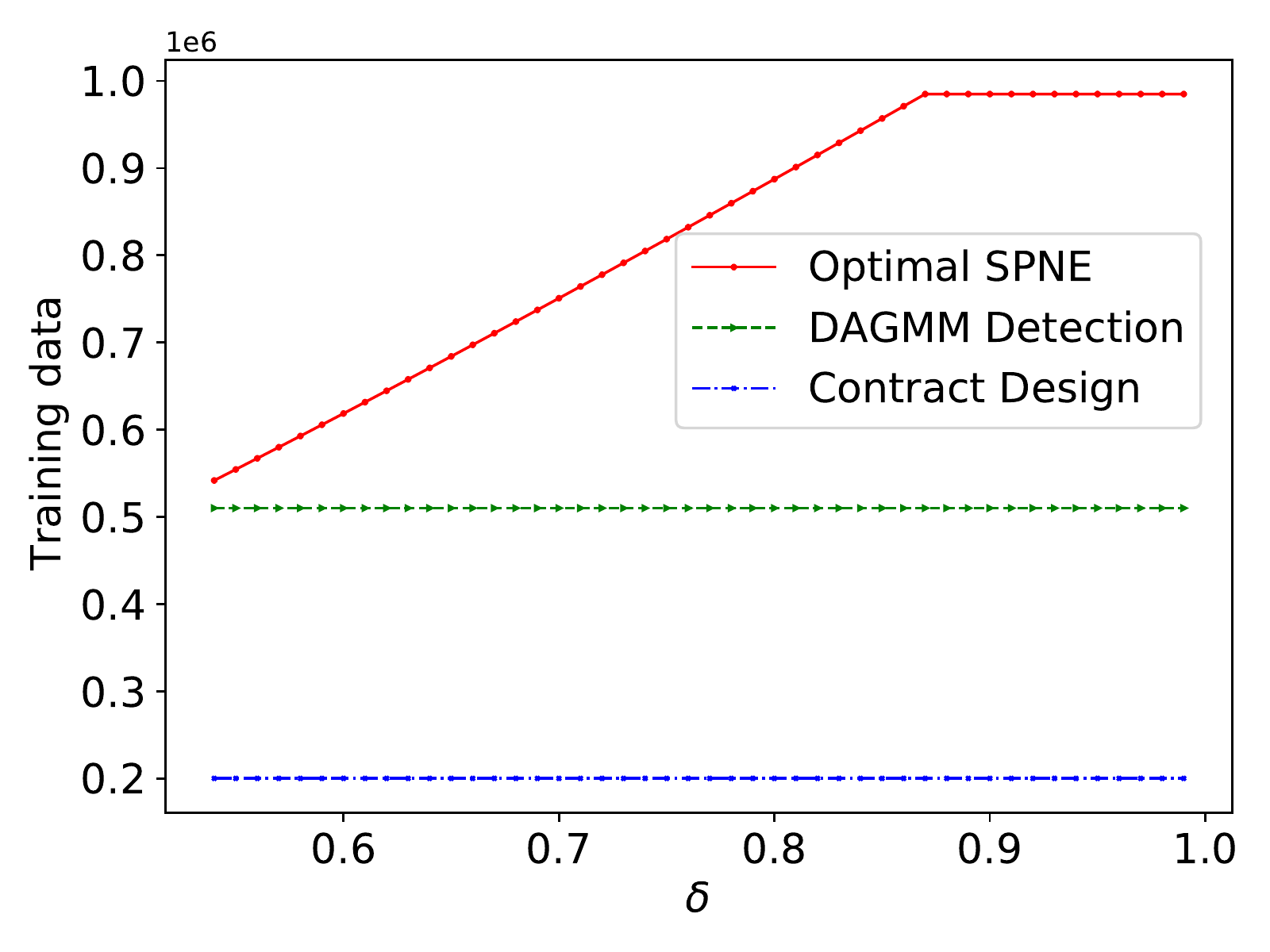}
  \caption{The total amount of training data under different $\delta$}\label{Td}
\end{minipage}
\end{figure}

Fig. \ref{Cd} and Fig. \ref{Td} show the comparisons of the three methods under different values of $\delta$. Fig. \ref{Cd} shows that the number of contributors under the three methods does not change with the discount factor $\delta$. The reason is that the number of free riders depends on the valuation-computation ratios $\{\frac{\rho_n}{E_n}: \forall n\in\mathcal{N}\}$, the bounds $\{B_n: \forall n\in\mathcal{N}\}$, and the local data set sizes $\{D_n: \forall n\in\mathcal{N}\}$, and is independent of $\delta$. Our method can motivate almost all free riders to become converted contributors, and hence our method has distinct advantage over the other two methods.

Fig. \ref{Td} shows that our method has a larger amount of training data compared with the other two methods. Specifically, under our method, the total amount of training data first increases with $\delta$, and then remains to be a constant when $\delta$ is larger than $0.87$. A larger discount factor $\delta$ indicates that converted contributors are more patient, and hence they are more willing to choose a larger amount of local data to reduce the long-term discounted total cost. When converted contributors are patient enough, i.e., $\delta \geq 0.87$, they will perform model training with all their local data. Under the DAGMM detection method and the contract design method, the numbers of contributors remain unchanged, and the contributors under these two methods participate in training with all their local data. Thus the amounts of training data of these two methods also remain unchanged.

\section{Conclusion}\label{iiiiii}

In this work, we analyze clients' selfish participation behaviors in cross-silo FL. We model the interactions among clients in a single cross-silo FL process as a selfish participation game and derive its unique Nash equilibrium. We show that clients' selfish participation behaviors lead to free riders at equilibrium. The existence of free riders hampers clients' long-term participation in cross-silo FL. We model clients' interactions in the long-term cross-silo FL processes as a repeated game. We derive the optimal SPNE that can minimum the number of free riders while maximizing the amount of local data for model training. Simulation results show that our derived optimal SPNE can effectively reduce the number of free riders by up to $99.3\%$ and increase the amount of data for training by up to $82.3\%$. For future work, it is interesting to analyze the interactions among clients in the incomplete information scenario where clients do not know the participation strategies of other clients or in the asymmetric information scenario where the valuation for global model accuracy and the computation cost coefficient are the private information of each client.

\ifCLASSOPTIONcaptionsoff
  \newpage
\fi

\bibliographystyle{IEEEtran}
\bibliography{reference}

\begin{IEEEbiography}[{\includegraphics[width=1in,height=1.25in]{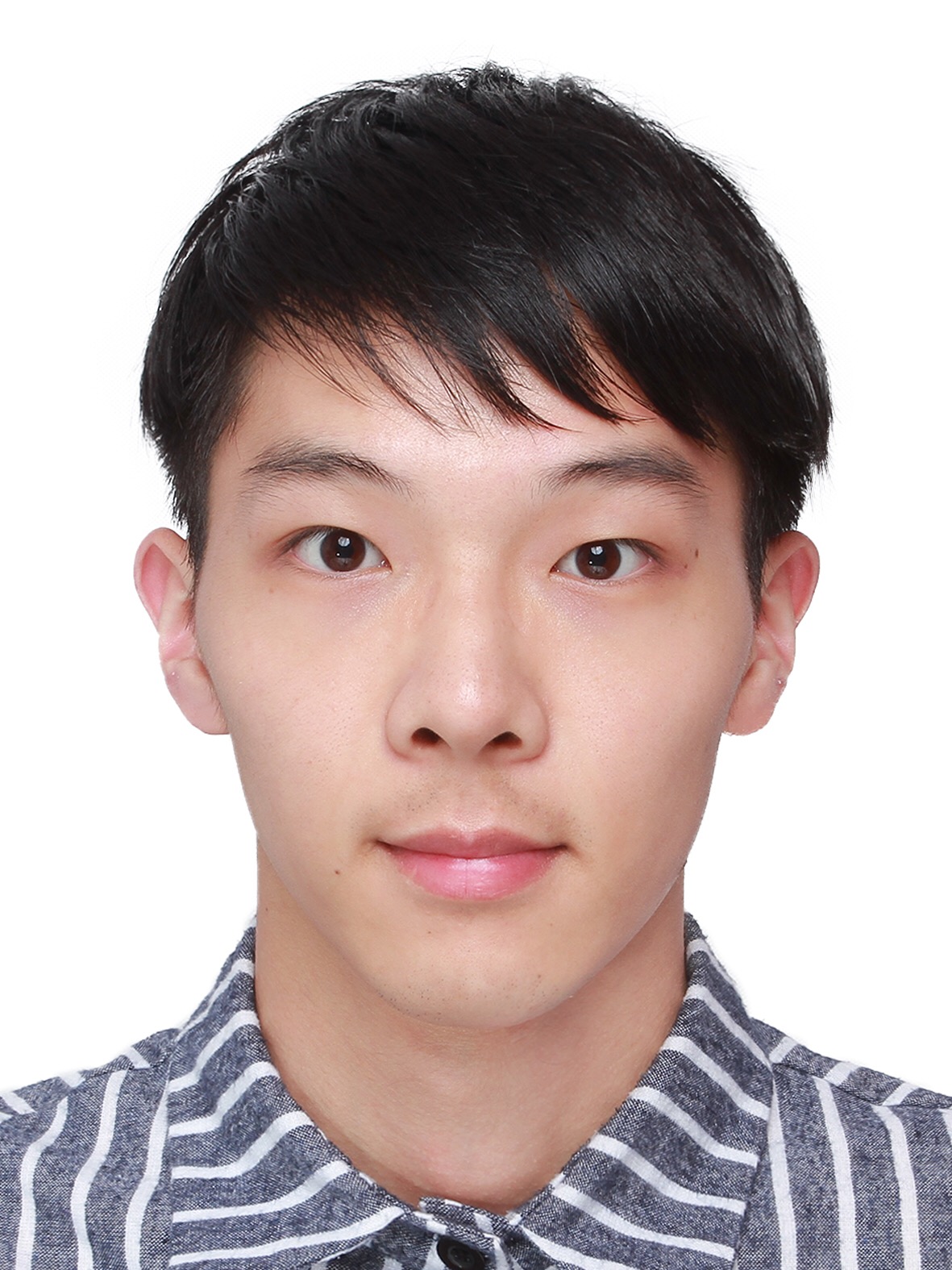}}]{Ning Zhang}
 received the B.E. degree from the School of Intelligent Systems Engineering, Sun Yat-sen University (SYSU), Guangzhou, China in 2020. He is currently pursuing his M.S. degree in the School of Intelligent Systems Engineering, SYSU. His primary research interests include game theory, incentive mechanism and federated learning.
\end{IEEEbiography}

\begin{IEEEbiography}[{\includegraphics[width=1in,height=1.25in]{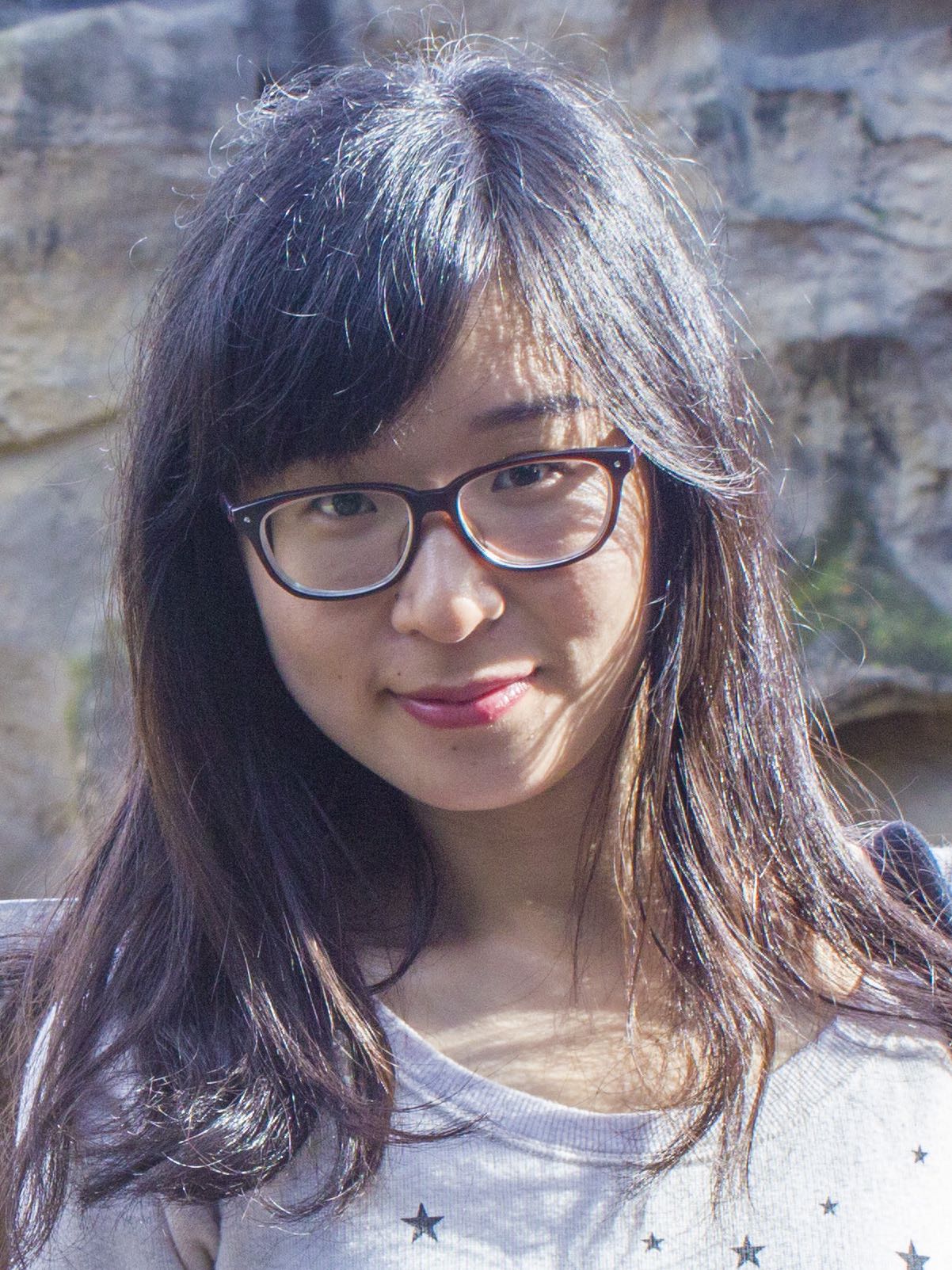}}]{Qian Ma}
 is an Associate Professor of School of Intelligent Systems Engineering, Sun Yat-sen University. She worked as a Postdoc Research Associate at Northeastern University during 2018-2019. She received the Ph.D. degree in the Department of Information Engineering from the Chinese University of Hong Kong in 2017, and the B.S. degree from Beijing University of Posts and Telecommunications (China) in 2012. Her research interests lie in the field of network optimziation and economics. She is the recipient of the Best Paper Award from the IEEE International Symposium on Modeling and Optimization in Mobile, Ad Hoc and Wireless Networks (WiOpt) in 2021 and the recipient of the Best Student Paper Award from the IEEE International Symposium on Modeling and Optimization in Mobile, Ad Hoc and Wireless Networks (WiOpt) in 2015.
\end{IEEEbiography}

\begin{IEEEbiography}[{\includegraphics[width=1in,height=1.25in]{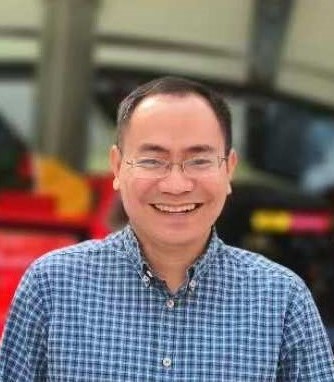}}]{Xu Chen}
 is a Full Professor with Sun Yat-sen University, Guangzhou, China, and the Vice Director of National and Local Joint Engineering Laboratory. He received the Ph.D. degree in information engineering from the Chinese University of Hong Kong in 2012, and worked as a Postdoctoral Research Associate at Arizona State University, Tempe, USA, from 2012 to 2014, and a Humboldt Scholar Fellow at the Institute of Computer Science of the University of Goettingen, Germany from 2014 to 2016. He received the prestigious Humboldt research fellowship awarded by the Alexander von Humboldt Foundation of Germany, 2014 Hong Kong Young Scientist Runner-up Award,  2017 IEEE Communication Society Asia-Pacific Outstanding Young Researcher Award, 2017 IEEE ComSoc Young Professional Best Paper Award, Honorable Mention Award of 2010 IEEE international conference on Intelligence and Security Informatics (ISI), Best Paper Runner-up Award of 2014 IEEE International Conference on Computer Communications (INFOCOM), and Best Paper Award of 2017 IEEE Intranational Conference on Communications (ICC). He is currently an Area Editor of the IEEE OPEN JOURNAL OF THE Communications Society, an Associate Editor of the IEEE TRANSACTIONS WIRELESS COMMUNICATIONS, IEEE TRANSACTIONS VEHICULAR TECHNOLOGY, and IEEE INTERNET OF THINGS JOURNAL.
\end{IEEEbiography}

\clearpage

\appendices

\section{Proof of Lemma 1}\label{lem11}

Take the first-order derivative of (5) and let it to be 0, that is, 
\begin{equation}\label{fo1}
E_n-\frac{\rho_n}{2\sqrt{G(x_n+\sum_{n'\in\mathcal{N},n'\neq n}x_{n'})^3}}=0.
\end{equation}

So we can get,
\begin{equation}
x_n=\sqrt[3]{\frac{\rho_n^2}{4GE_n^2}}-\sum_{n'\in\mathcal{N},n'\neq n}x_{n'}.
\end{equation}

Since the second-order derivative of (5) satisfies 

\begin{equation}
\frac{3\rho_n}{4\sqrt{G(x_n+\sum_{n'\in\mathcal{N},n'\neq n}x_{n'})^5}}>0,
\end{equation}

the total cost of each client $n$, calculated in (5), decreases monotonically in the interval $(-\infty, \sqrt[3]{\frac{\rho_n^2}{4GE_n^2}}-\sum_{n'\in\mathcal{N},n'\neq n}x_{n'})$ and increases monotonically in the interval $(\sqrt[3]{\frac{\rho_n^2}{4GE_n^2}}-\sum_{n'\in\mathcal{N},n'\neq n}x_{n'}, +\infty)$.

Since the strategy of client $n$ is in the range $[0, D_n]$, we know that the best response of each client $n$ to minimize its total cost is:
\begin{equation}
\begin{aligned}
&x_n^{BR}(\boldsymbol{x}_{-n}) = \min \\
&\left\{  D_n, \max \left\{ \sqrt[3]{\frac{\rho_n^2}{4GE_n^2}} - \sum_{n'\in\mathcal{N},n'\neq n} x_{n'}, 0 \right\}  \right\}.
\end{aligned}
\end{equation}

\section{Proof of Theorem 1}\label{tho11}

We will prove that under the Nash equilibrium, no client can get a lower cost by changing its strategy while keeping other clients' strategies unchanged. Here we denote $\sqrt[3]{\frac{\rho_n^2}{4GE_n^2}}-\sum_{n'\in\mathcal{N},n'\neq n}x_{n'})$ as $x_n^{opt}$ for convenience and $x_n^{opt}$ is the stationary point of the cost function of client $n$.

When the critical client $k$ exists, for client $k$, the stationary point of client $k$'s cost function is $x_k^{opt} = \sqrt[3]{\frac{\rho_k^2}{4GE_k^2}} - (N - k)D$, because $(N-k)D \leq \sqrt[3]{\frac{\rho_k^2}{4GE_k^2}} \leq (N-k+1)D$, we can get $0 \leq x_k^{opt} \leq D$, the best response of client $k$ is $x_k^{BR}(\boldsymbol{x}_{-k}) = \sqrt[3]{\frac{\rho_k^2}{4GE_k^2}} - (N - k)D$. 

Similarly for client $k - 1$, $x_{k-1}^{opt} = \sqrt[3]{\frac{\rho_{k-1}^2}{4GE_{k-1}^2}} - (N - k)D - x_k^{BR}(\boldsymbol{x}_{-k}) = \sqrt[3]{\frac{\rho_{k-1}^2}{4GE_{k-1}^2}} - \sqrt[3]{\frac{\rho_k^2}{4GE_k^2}} < 0$, the best response of client $k-1$ is $x_{k-1}^{BR}(\boldsymbol{x}_{-(k-1)}) = 0$. For client $k - 2, k - 3, \cdots, 1$, similarly we can get $x_1^{BR}(\boldsymbol{x}_{-1}) = x_2^{BR}(\boldsymbol{x}_{-2}) = \cdots = x_{k-2}^{BR}(\boldsymbol{x}_{-(k-2)}) = 0$. 

For client $k + 1$, $x_{k+1}^{opt} = \sqrt[3]{\frac{\rho_{k+1}^2}{4GE_{k+1}^2}} - (N - k - 1)D - x_k^{BR}(\boldsymbol{x}_{-k}) = \sqrt[3]{\frac{\rho_{k+1}^2}{4GE_{k+1}^2}} - \sqrt[3]{\frac{\rho_k^2}{4GE_k^2}} + D > D$, the best response of client $k + 1$ is $x_{k+1}^{BR}(\boldsymbol{x}_{-(k+1)}) = D$. For client $k + 2, k + 3, \cdots, N$, similarly we can get $x_{k+2}^{BR}(\boldsymbol{x}_{-(k+2)}) = x_{k+3}^{BR}(\boldsymbol{x}_{-(k+3)}) = \cdots = x_N^{BR}(\boldsymbol{x}_{-N}) = D$. 

When the critical client $k$ does not exist. There must exist a client $m$ as mentioned in Case 2 in Theorem 1. 

For client $m$, $x_m^{opt} = \sqrt[3]{\frac{\rho_m^2}{4GE_m^2}} - (N - m)D$. Because $\sqrt[3]{\frac{\rho_m^2}{4GE_m^2}} < (N-m)D < \sqrt[3]{\frac{\rho_{m+1}^2}{4GE_{m+1}^2}}$, $x_m^{opt} < 0$, so the best response of client $m$ is $x_m^{BR}(\boldsymbol{x}_{-m}) = 0$. 

For client $m-1$, $x_{m-1}^{opt} = \sqrt[3]{\frac{\rho_{m-1}^2}{4GE_{m-1}^2}} - (N - m)D < \sqrt[3]{\frac{\rho_m^2}{4GE_m^2}} - (N - m)D < 0$, the best response of client $m-1$ is $x_{m-1}^{BR}(\boldsymbol{x}_{-(m-1)}) = 0$. Similarly, $x_1^{BR}(\boldsymbol{x}_{-1}) = x_2^{BR}(\boldsymbol{x}_{-2}) = \cdots = x_m^{BR}(\boldsymbol{x}_{-m}) = 0$. 

For client $m+1$, $x_{m+1}^{opt} = \sqrt[3]{\frac{\rho_{m+1}^2}{4GE_{m+1}^2}} - (N - m -1)D > D$, the best response of client $m+1$ is $x_{m+1}^{BR}(\boldsymbol{x}_{-(m+1)}) = D$. 

For client $m+2$, $x_{m+2}^{opt} = \sqrt[3]{\frac{\rho_{m+2}^2}{4GE_{m+2}^2}} - (N - m - 2)D - x_{m+1}^{BR}(\boldsymbol{x}_{-(m+1)}) = \sqrt[3]{\frac{\rho_{m+2}^2}{4GE_{m+2}^2}} - \sqrt[3]{\frac{\rho_{m+1}^2}{4GE_{m+1}^2}} + D > D$, the best response of client $m+2$ is $x_{m+2}^{BR}(\boldsymbol{x}_{-(m+2)}) = D$. Similarly, $x_{m+1}^{BR}(\boldsymbol{x}_{-(m+1)}) = x_{m+2}^{BR}(\boldsymbol{x}_{-(m+2)}) = \cdots = x_N^{BR}(\boldsymbol{x}_{-N}) = D$.

We can see that under this strategy, each client's strategy is the best response to other clients' strategies, so this strategy is the Nash equilibrium of Game 1.

\begin{table}[t]
\newcommand{\tabincell}[2]{\begin{tabular}{@{}#1@{}}#2\end{tabular}}
\centering
\caption{Key Notations}
\begin{tabular}{|c||p{6.5cm}|}
\hline
\textbf{Symbol}  &  \textbf{Physical Meaning} \\
\hline
$\mathcal{N}$  & The set of clients participating in cross-silo FL  \\
\hline
$\mathcal{D}_n$ & The set of local data of client $n \in \mathcal{N}$   \\
\hline
$D_n$ & The number of local data samples in set $\mathcal{D}_n$  \\
\hline
$\mathcal{X}_n$ & The subset of local data that client $n$ chooses to \\
& perform model training  \\ 
\hline
$x_n$ & The size of the chosen subset $\mathcal{X}_n$  \\
\hline
$\boldsymbol{x}$ &  The vector of clients' chosen strategies \\
\hline
$B$ & The total amount of chosen local data, i.e., \\
& $B=\sum_{n\in\mathcal{N}}x_n$ \\ 
\hline
$\boldsymbol{d}_{ni}$ & The $i$-th data sample in set $\mathcal{X}_n$  \\
\hline
$\boldsymbol{w}$ & The parameter vector of the global model  \\
\hline
$L(\boldsymbol{w})$ & The global loss function under $\boldsymbol{w}$ \\
\hline
$L_n(\boldsymbol{w})$ & The local loss function of client $n$ under $\boldsymbol{w}$  \\
\hline
$l(\boldsymbol{w};\boldsymbol{d}_{ni})$ & The loss function for data sample $\boldsymbol{d}_{ni}$ under $\boldsymbol{w}$ \\
\hline
$G$ & The number of iterations in a cross-silo FL process \\
\hline
$A(\boldsymbol{x})$ & The model accuracy loss under $\boldsymbol{x}$ \\
\hline
$\mathcal{E}(x_n)$ & The computation cost of client $n$ \\
\hline
$C_n$ & The communication cost of client $n$\\
\hline
$p$ & The payment that each client pays the central server \\
\hline
$\rho_n$ & Client $n$'s valuation for the model accuracy \\
\hline
$F_n(x_n, \boldsymbol{x}_{-n})$ & The total cost of each client $n\in \mathcal{N}$ \\
\hline
\end{tabular}
\label{table:Notation}
\end{table}

\section{Proof of Theorem 2}\label{tho22}

We will prove Theorem 2 by mathematical induction. For convenience, we denote $\sqrt[3]{\frac{\rho_n^2}{4GE_n^2}}$ as $h_n$. We will prove that no matter what initial strategies clients choose, they strategies will eventually converge to the Nash equilibrium.

(1) There are two clients $y_1, y_2$, $y_1 < y_2$, whose initial strategy is $a$ and $b$ respectively, $a, b$ can be any value in $[0, D]$. For client $y_1$, he will calculate the stationary point of his cost function $x_{y_1}^{opt} = h_{y_1} - b$, now we suppose $h_{y_1} - b \in [0, D]$, so the best response of client $y_1$ is $x_{y_1}^{BR}(\boldsymbol{x}_{-{y_1}}) = h_{y_1} - b$. Then client $y_2$ will calculate the stationary point of his cost function $x_{y_2}^{opt} = h_{y_2} - h_{y_1} + b$, here we still assume $h_{y_2} - h_{y_1} + b \in [0, D]$, so the best response of client $y_2$ is $x_{y_2}^{BR}(\boldsymbol{x}_{-{y_2}}) = h_{y_2} -h_{y_1} + b$. Then client $y_1$ will update his best response $x_{y_1}^{BR}(\boldsymbol{x}_{-{y_1}}) = h_{y_1} - h_{y_2} + h_{y_1} - b$. Client $y_2$ will update his best response $x_{y_2}^{BR}(\boldsymbol{x}_{-{y_2}}) = h_{y_2} - h_{y_1} + h_{y_2} - h_{y_1} + b$. As this process goes on, we can see that $x_{y_1}^{BR}(\boldsymbol{x}_{-{y_1}})$ continues to decrease from $h_{y_1} - b$ and $x_{y_2}^{BR}(\boldsymbol{x}_{-{y_2}})$ continues to increase from $b$ until one client reaches the boundary, i.e., $0$ or $D$. Let's assume that client $y_1$ reaches the boundary $0$ first, i.e., $x_{y_1}^{BR}(\boldsymbol{x}_{-{y_1}}) = 0, x_{y_2}^{BR}(\boldsymbol{x}_{-{y_2}}) < D$. So $x_{y_2}^{opt} = h_{y_2} - 0 = h_{y_2} = x_{y_2}^{BR}(\boldsymbol{x}_{-{y_2}}), x_{y_1}^{opt} = h_{y_1} - h_{y_2} < 0, x_{y_1}^{BR}(\boldsymbol{x}_{-{y_1}}) = 0$. It eventually converges to $x_{y_1}^* = 0, x_{y_2}^* = h_{y_2}$. Similarly, if $y_2$ reaches the boundary $D$ first, i.e., $x_{y_1}^{BR}(\boldsymbol{x}_{-{y_1}}) > 0, x_{y_2}^{BR}(\boldsymbol{x}_{-{y_2}}) = D$). So $x_{y_1}^{opt} = h_{y_1} - D > 0$, $x_{y_1}^{BR}(\boldsymbol{x}_{-{y_1}}) = h_{y_1} - D$, $x_{y_2}^{opt} = h_{y_2} - h_{y_1} + D > D$, $x_{y_2}^{BR}(\boldsymbol{x}_{-{y_2}}) = D$. It eventually converges to $x_{y_1}^* = h_{y_1} - D, x_{y_2}^* = D$. It illustrates that no matter which client choose which value, it will always converge to the Nash equilibrium under the condition $\rho_1 < \rho_2$. 

(2) Suppose when there are $n$ clients who satisfies $\frac{\rho_1}{E_1} < \frac{\rho_2}{E_2} < \cdots < \frac{\rho_N}{E_N}$, there initial strategies are $a_1, a_2, \cdots, a_N \in [0, D]$ respectively. Their strategy will eventually converge to the Nash equilibrium:

$x_1^* = x_2^* = \cdots = x_{k-1}^* = 0, x_k^* = h_k - (N - k)D, x_{k+1}^* = x_{k+2}^* = \cdots = x_N^* = D$
(Similarly if $k$ don't exist).

(3) When there are $n + 1$ clients who satisfies $\frac{\rho_1}{E_1} < \frac{\rho_2}{E_2} < \cdots < \frac{\rho_N}{E_N} < \frac{\rho_{N+1}}{E_{N+1}}$, their initial strategies are $a_1, a_2, \cdots, a_N, a_{N+1} \in [0, D]$. According to (2), the last $N$ clients will eventually converge to the Nash equilibrium beacuse they satisfy the condition $\frac{\rho_2}{E_2} < \cdots < \frac{\rho_N}{E_N} < \frac{\rho_{N+1}}{E_{N+1}}$. When client $y_{1}$ adds to the list, let's see what will happens from the converge point of the last $N$ clients.

We have discussed that there may have two situations. We first consider that there exists a critical client $k + 1$ in the interval. The last $N$ clients will eventually converge to the form 

\begin{equation}
x_n^\ast=
\left\{
\begin{aligned}
& 0, &\mbox{ if } &y_2 \leq n < y_{k+1};\\
& h_{k+1} - (N-k)D, &\mbox{ if } &n = y_{k+1};\\
& D, &\mbox{ if } &y_{k+1} < n \leq N.
\end{aligned}
\right.
\end{equation}

When clients $y_1$ adds to the list, $x_{y_1}^{opt} = h_{y_1} - (N-k)D - x_{y_{k+1}}^*$. According to (2), we know that $x_{y_2}^{opt} = h_{y_2} - (N-k)D - x_{y_{k+1}}^* < 0$. We know that $h_{y_2} > h_{y_1}$, so $x_{y_1}^{opt} < 0, x_{y_1}^{BR}(\boldsymbol{x}_{-{y_1}}) = 0$. Then calculate the best response of the other $n$ clients, $x_{y_2}^{opt} = h_{y_2} - (N-k)D - x_{y_{k+1}}^* < 0$, $x_{y_2}^{BR}(\boldsymbol{x}_{-{y_2}}) = 0, \cdots$, $x_{y_{k+1}}^{opt} = h_{y_{k+1}} - (N-k)D \in [0, D]$, $x_{y_{k+1}}^{BR}(\boldsymbol{x}_{-{y_{k+1}}}) = h_{y_{k+1}} - (N-k)D$, $x_{y_{k+2}}^{opt} = h_{y_{k+2}} - (N-k)D - x_{y_{k+1}}^* + D > D$, $x_{y_{k+2}}^{BR}(\boldsymbol{x}_{-{y_{k+2}}}) = D, \cdots$, $x_{y_{N+1}}^{opt} = h_{y_{N+1}} - (N-k)D - x_{y_{k+1}}^* + D > D$, $x_{y_{N+1}}^{BR}(\boldsymbol{x}_{-{y_{N+1}}}) = D$, we eventually get 

\begin{equation}
x_n^\ast=
\left\{
\begin{aligned}
& 0, &\mbox{ if } &y_1 \leq n < y_{k+1};\\
& h_{k+1} - (N-k)D, &\mbox{ if } &n = y_{k+1};\\
& D, &\mbox{ if } &y_{k+1} < n \leq N.
\end{aligned}
\right.
\end{equation}

Similarly, when we consider that there does not exist the critical client $k + 1$ in the interval. The game will eventually converge to 

\begin{equation}
x_n^\ast=
\left\{
\begin{aligned}
& 0, &\mbox{ if } &y_1 \leq n \leq y_{m+1};\\
& D, &\mbox{ if } &y_{m+1} < n \leq N.
\end{aligned}
\right.
\end{equation}

According to (3), when there are $n + 1$ clients who satisfy $\frac{\rho_1}{N_1} < \frac{\rho_2}{E_2} < \cdots < \frac{\rho_N}{E_N} < \frac{\rho_{N+1}}{E_{N+1}}$, no matter no matter what initial strategies clients choose, their strategies will eventually converge to the unique Nash equilibrium. So far, we have proved the uniqueness of Nash equilibrium.

\section{Proof of concave game and convergence}\label{res-b2prove}

We will discuss the convergence of the best response update process. we first prove that the stage game SPFL is a concave game.

\begin{definition}[Concave Game]\label{res-pp}
A game with a finite number of players and continuous action sets is a concave game if (i) every player's action set is compact and convex, and (ii) every player's payoff (or utility) function is concave in its own strategy \cite{res-b1}.
\end{definition}

We next show that the stage game SPFL is a concave game.

\begin{lemma}\label{res-p}
Game 1 (i.e., the stage game SPFL) is a concave game. 
\end{lemma}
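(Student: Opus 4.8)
The plan is to verify the two defining conditions of a concave game (Definition \ref{res-pp}) directly for Game 1. First I would dispatch condition (i): the action set of each client $n\in\mathcal{N}$ is the interval $[0,D_n]\subset\mathbb{R}$, which is closed and bounded, hence compact, and trivially convex; so condition (i) holds for every player.

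The substance is condition (ii). Since each client $n$ \emph{minimizes} its cost $F_n$, its payoff (utility) function is $-F_n$, and I must show that $F_n(x_n,\boldsymbol{x}_{-n})$ is convex in $x_n$ on $[0,D_n]$ for every fixed $\boldsymbol{x}_{-n}$. Here I would reuse the curvature computation already carried out in the proof of Lemma 1: in the cost \eqref{Fn}, the computation term $\mathcal{E}(x_n)=E_nx_n$ is linear in $x_n$ and $C_n+p$ is constant, so the only nonlinear dependence on $x_n$ comes through $\rho_n A(x_n,\boldsymbol{x}_{-n})$ as defined in (3). Differentiating twice with respect to $x_n$ gives
$$\frac{\partial^2 F_n}{\partial x_n^2} = \frac{3\rho_n}{4\sqrt{G\left(x_n+\sum_{n'\in\mathcal{N},n'\neq n}x_{n'}\right)^5}} > 0,$$
which is strictly positive because $\rho_n>0$, $G>0$, and $x_n+\sum_{n'\neq n}x_{n'}>0$ wherever $F_n$ is finite. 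Hence $F_n$ is (strictly) convex in $x_n$, so the payoff $-F_n$ is concave in $x_n$, establishing condition (ii). Combining (i) and (ii) with Definition \ref{res-pp} yields that Game 1 is a concave game.

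I do not expect any genuine obstacle here, since the argument is essentially a restatement of the second-order computation underlying Lemma 1. The only point worth a brief remark is the degenerate profile in which $x_n=0$ and every other client also chooses $0$, where $A$ and thus $F_n$ diverges; this does not affect the argument, as convexity of each player's cost is checked on the region where the cost is finite, and (by Corollary 1) the all-zero profile never arises in the equilibrium analysis. This lemma then feeds directly into the convergence claim for the best-response update process, since concave games with these regularity properties admit convergence of best-response dynamics to the (here unique, by Theorem 2) Nash equilibrium.
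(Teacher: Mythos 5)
Your proposal is correct and follows essentially the same route as the paper: verify compactness and convexity of each strategy set $[0,D_n]$, then show convexity of $F_n$ in $x_n$ via the same second-order derivative $\frac{3\rho_n}{4\sqrt{G(x_n+\sum_{n'\neq n}x_{n'})^5}}$ already used for Lemma 1. Your extra remark about the degenerate all-zero profile where the accuracy loss diverges is a small point of added care not present in the paper, but it does not change the argument.
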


\begin{proof}
To prove that Game 1 is a concave game, we prove that Game 1 satisfies the conditions in Definition \ref{res-pp}. Specifically, the strategy space of client $n$ is $[0, D_n], \forall n \in \mathcal{N}$, which is compact and convex. The cost function that client $n$ aims to minimize is $F_n(x_n, \boldsymbol{x}_{-n}) = \rho_nA(x_n, \boldsymbol{x}_{-n})+\mathcal{E}_n(x_n)+C_n+p$. Taking the second-order derivative of the cost function $F_n(x_n, \boldsymbol{x}_{-n})$, we get $F_n^{''}(x_n, \boldsymbol{x}_{-n})= \frac{3\rho_n}{4\sqrt{G(x_n+ \boldsymbol{x}_{-n})^5}} \geq 0$. Thus each client's cost function is convex in its own strategy. This completes our proof. 
\end{proof}

Concave games have a nice property that when the NE is unique, the best response update process is globally asymptotically stable (see Theorem 8 and Theorem 9 in \cite{res-b1}). Furthermore, starting from any feasible strategy profile, the best response update process always converges to the unique NE of the concave game \cite{res-b1}. We have proven that the stage game SPFL admits a unique NE if $\frac{\rho_1}{E_1} < \frac{\rho_2}{E_2} < \cdots < \frac{\rho_N}{E_N}$, and we show the best response update process for clients in Algorithm \ref{res-alo} in the appendix. Specifically, all clients begin with a random participation strategy (Line 1). Then each client updates its participation strategy (i.e., best response update) to minimize its total cost given other clients' strategies (Line 4) until convergence. 

\begin{proposition}
The best response update process in Algorithm \ref{res-alo} in the appendix converges to the Nash equilibrium of Game 1. 
\end{proposition}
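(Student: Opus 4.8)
The plan is to obtain the Proposition as a direct corollary of results already in hand: the concavity of the stage game (Lemma~\ref{res-p}), the uniqueness of the Nash equilibrium under the strict ordering $\frac{\rho_1}{E_1} < \frac{\rho_2}{E_2} < \cdots < \frac{\rho_N}{E_N}$ (Theorem~\ref{tho2}), and the general stability property of concave games cited from \cite{res-b1}. First I would write out the best-response update map explicitly from Lemma~\ref{lem1}: in each update step client $n$ replaces its strategy by $x_n^{\rm BR}(\boldsymbol{x}_{-n}) = \min\{D_n,\ \max\{\sqrt[3]{\rho_n^2/(4GE_n^2)} - \sum_{n'\neq n} x_{n'},\ 0\}\}$. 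Because each action set $[0,D_n]$ is compact and convex and each cost $F_n$ is (strictly) convex in $x_n$, as shown in the proof of Lemma~\ref{res-p}, this best response is single-valued and continuous, so the update process is a well-defined continuous self-map on the compact convex set $\prod_{n\in\mathcal{N}}[0,D_n]$.

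Next I would invoke the two key ingredients. By Theorem~\ref{tho2}, the NE of Game~1 is unique. By Lemma~\ref{res-p}, Game~1 is a concave game, so the property quoted from \cite{res-b1} (Theorems~8 and 9 there) applies: in a concave game with a unique equilibrium, the best-response update process is globally asymptotically stable, and starting from any feasible strategy profile the iterates converge to that unique equilibrium. Combining these with the explicit characterization in Theorem~\ref{tho1} gives exactly the claim: the best-response update process converges to the NE of Game~1.

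The delicate point—and the step I expect to require the most care—is justifying that the hypotheses of the convergence theorem in \cite{res-b1} really are met, since the min/max truncations in $x_n^{\rm BR}$ make the dynamics a projected rather than an unconstrained best-response scheme, and one must rule out spurious fixed points or oscillations introduced by the boundaries. The cleanest way to close this gap, should one prefer a self-contained argument, is to exploit the structure that $x_n^{\rm BR}(\boldsymbol{x}_{-n})$ depends on the other players only through the scalar aggregate $\sum_{n'\neq n}x_{n'}$ and is nonincreasing in it; this reduces the coupled update to a monotone one-dimensional recursion in the total contribution $\sum_{n\in\mathcal{N}}x_n$, whose monotone convergence is immediate, and whose limit must be the fixed point identified in Theorem~\ref{tho1}. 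This is essentially the monotonicity/induction argument already used in the proof of Theorem~\ref{tho2} (Appendix~\ref{tho22}), adapted from "equilibrium uniqueness" to "dynamic convergence," so I would cross-reference that argument rather than reprove it in full.

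Finally I would remark that, because the equilibrium is reached in finitely many sweeps once the ordering of clients is fixed (the high-ratio clients saturate at $D$, the low-ratio clients collapse to $0$, and the critical client settles at $h_k-(N-k)D$), the convergence is in fact exact and fast in practice, which is consistent with the $\mathcal{O}(N)$-type behavior already discussed for Algorithm~1.
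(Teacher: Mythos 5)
Your proposal follows essentially the same route as the paper: establish that Game~1 is a concave game (compact convex strategy sets, cost convex in own strategy), combine this with the uniqueness of the NE under the strict ordering $\frac{\rho_1}{E_1} < \cdots < \frac{\rho_N}{E_N}$, and then invoke Theorems~8 and~9 of \cite{res-b1} to conclude that the best-response update is globally asymptotically stable and converges to the unique NE from any feasible starting profile. The additional monotone-aggregate argument you sketch is an optional supplement; the paper itself rests the Proposition on exactly the concavity-plus-uniqueness citation you give.
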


\begin{algorithm}[t]
\LinesNumbered
\SetAlgoLined
\begin{small}
\textbf{Initialization} Set iteration round $r=0$. Each client $n \in \mathcal{N}$ starts with a random participation strategy $x_n(r=0) \in [0, D]$\;
\Repeat{$\boldsymbol{x}$ converges}{
\ForEach{$n \in \mathcal{N}$}{
\textbf{Best response update:} client $n$ updates its participation strategy according to: $x_n(r) = \mathop{\arg\min}\limits_{x_n \in [0, D]} F_n(x_n, \boldsymbol{x}_{-n}(r)) = \min \left\{  D_n, \max \left\{ \sqrt[3]{\frac{\rho_n^2}{4GE_n^2}} - \sum_{n'\in\mathcal{N},n'\neq n} x_{n'}(r), 0 \right\}  \right\}$.
}
Update the strategy profile and the iteration round: $\boldsymbol{x}(r+1) = \boldsymbol{x}(r)$ and $r=r+1$.
}

\end{small}
\caption{Best Response Update Algorithm}\label{res-alo}
\label{code:recentEnd}
\end{algorithm}

\section{Proof of Theorem 3.}\label{kkk1}

First, we prove a lemma that guarantees $B_n \leq O_n$ and $B_n^{'} \leq O_n^{'}$.

\begin{lemma}\label{B}
For $k \in \mathcal{N}$, we have $B_n \leq O_n$, and for $m \in \mathcal{N}$, we have $B_n^{'} \leq O_n^{'}$.
\end{lemma}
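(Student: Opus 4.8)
The plan is to reduce both claimed inequalities to a single elementary algebraic inequality via a change of variables, and then dispatch that inequality by rationalizing a difference of square roots.

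First I would treat $B_n \le O_n$. The quantity $B_n$ is defined (in Theorem~\ref{kkk}) only for clients $n < k$, so fix such an $n$ and set $a \triangleq (N-k)D + x_k^\ast$ and $j \triangleq k-n \ge 1$; note that $a = h_k > 0$ by the definition of $x_k^\ast$ in Case~I of Theorem~\ref{tho1}. Then $(N-n)D + x_k^\ast = a + jD$, so after cancelling the common factor $\sqrt{G}$ the inequality $B_n \le O_n$ is equivalent to
\[
\frac{2a^{3/2}}{j}\left(\frac{1}{\sqrt{a}} - \frac{1}{\sqrt{a+jD}}\right) \le D .
\]

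Next I would simplify the left-hand side. Writing $\frac{1}{\sqrt a} - \frac{1}{\sqrt{a+jD}} = \frac{\sqrt{a+jD}-\sqrt a}{\sqrt a\,\sqrt{a+jD}}$ and rationalizing $\sqrt{a+jD}-\sqrt a = \frac{jD}{\sqrt{a+jD}+\sqrt a}$, the factors $j$ and $\sqrt a$ cancel and the left-hand side collapses to $\frac{2aD}{\sqrt{a+jD}\,(\sqrt{a+jD}+\sqrt a)}$. Hence the inequality becomes $2a \le (a+jD) + \sqrt{a}\,\sqrt{a+jD}$, which holds because $jD \ge 0$ gives $a+jD \ge a$ while $\sqrt a\,\sqrt{a+jD} \ge \sqrt a\,\sqrt a = a$; summing these two bounds proves the claim (strictly, in fact, since $j\ge 1$ and $D>0$).

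Finally, the primed inequality $B'_n \le O'_n$ for $n \le m$ follows verbatim from the same computation under the substitution $a \triangleq (N-m)D$ and $j \triangleq m-n+1 \ge 1$: then $(N-n+1)D = a+jD$ and $(N-m)^3 D^3 = a^3$, so $B'_n$ and $O'_n$ have exactly the forms above, and $a>0$ because $m<N$ in Case~II of Theorem~\ref{tho1}. The one point requiring care is the bookkeeping that makes the denominator $j$ of $B_n$ cancel cleanly against the numerator $jD$ produced by $\sqrt{a+jD}-\sqrt a$; beyond that there is no real obstacle, and the decisive step is merely the trivial bound $\sqrt a\,\sqrt{a+jD}\ge a$.
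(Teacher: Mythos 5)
Your proof is correct, and it takes a genuinely different route from the paper's. The paper introduces an auxiliary function $f(k) = 3Dk + 2\sqrt{\tfrac{((N-k)D+x_k^*)^3}{(N-n)D+x_k^*}}$ of the (continuously treated) client index, shows $f'(k)\ge 0$ so that $f(k)\ge f(n) = (2N+n)D+2x_k^*$, and then rearranges this bound into $B_n \le O_n$; the primed case is asserted to be analogous. You instead cross-multiply directly, substitute $a=(N-k)D+x_k^*$, $j=k-n$, and rationalize $\frac{1}{\sqrt a}-\frac{1}{\sqrt{a+jD}}$ so the inequality collapses to $2a \le (a+jD)+\sqrt{a}\sqrt{a+jD}$, which is immediate. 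Your version buys several things: it is purely algebraic (no calculus, no need to treat the discrete index $k$ as a continuous variable while simultaneously holding $x_k^*$ fixed, a point the paper's notation blurs); it handles both the unprimed and primed cases uniformly through the single substitution $(a,j)$, with the positivity of $a$ checked explicitly ($a=h_k>0$ in Case I, $a=(N-m)D>0$ in Case II); and it yields the inequality strictly. You also correctly restrict to $n<k$ (resp.\ $n\le m$), which is the range where $B_n$ (resp.\ $B'_n$) is actually defined and used in Theorems 3 and 4, whereas the lemma's statement ``for $k\in\mathcal{N}$'' glosses over the degenerate $n=k$ case. The paper's monotonicity argument is not wrong, but yours is shorter and more transparent.
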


\begin{proof}
Set the function $f(k)$:
\begin{equation}
f(k) = 3Dk + 2\sqrt{\frac{((N-k)D+x_k^*)^3}{(N-n)D + x_k^*}}.
\end{equation}

Take the first derivative of $f(k)$:
\begin{equation}
f{'}(k) = 3D -3D \sqrt{\frac{(N-k)D +x_k^*}{(N-n)D + x_k^*}} \geq 0 \ k \in \mathcal{N}.
\end{equation}

The minimum value of $f(k)$ in the domain $[n, N]$ is $f(n)$:
\begin{equation}
f(n) = (2N+n)D +2x_k^*.
\end{equation}

Then we know that $f(k) \geq f(n)$, so
\begin{equation}\label{yixiang}
3Dk + 2\sqrt{\frac{((N-k)D+x_k^*)^3}{(N-n)D + x_k^*}} \geq (2N+n)D +2x_k^*.
\end{equation}

Deal with the inequality \eqref{yixiang}, we can get
\begin{equation}
\frac{D\sqrt{G}}{\frac{1}{\sqrt{(N-k)D+x_k^*}}-\frac{1}{\sqrt{(N-n)D+x_k^*}}}  \geq \frac{2\sqrt{G((N-k)D+x_k^*)^3}}{k-n}.
\end{equation}
\end{proof}
When the critical client $k$ does not exist, the process is similarly to the above, and we will not repeat it.

Then we will prove that under the cooperative strategy, each client's cost is less than that at the NE of Game 1.

We denote $F_l(\boldsymbol{x}^{coop})-F_l(\boldsymbol{x}^*)$ as $H_l(x^{coop})$, which is the difference between the cost under cooperative strategy and the cost at the NE of Game 1. If $H_l(x^{coop}) <0$, the cost of each client under cooperative strategy is less than that at the NE of Game 1. Specifically, 
$H_l(x^{coop}) = \frac{\rho_l}{\sqrt{G}}(\frac{1}{\sqrt{(k-l)x^{coop} + x_k^* + (N-k)D}} - \frac{1}{\sqrt{(N-k)D+x_k^*}})+E_lx^{coop}$.

Take the first derivative of $H_l(x^{coop})$:

\noindent$H_l^{'} (x^{coop}) = E_l - \frac{\rho_l(k-l)}{2\sqrt{G((k-l)x^{coop}+x_k^*+(N-k)D)^3}}$.

When $\frac{\rho_l}{E_l} > B_l$, $H_l^{'}(0) < 0, H_l(0) = 0, H_l^{'}(+\infty) >0$. It's easy to know that $H_l^{'}(x^{coop})$ is a monotonically increasing function of $x^{coop}$, so the curve of $H_l(x^{coop})$ is a curve that monotonically decreases and then monotonically increases, like a parabola with an upper opening. When $\frac{\rho_l}{E_l} > O_l, H_l(D) > 0$, the unique intersection of $H_l(x^{coop})$ and the $x^{coop}$ axis at $(0, + \infty)$ is larger than $D$, so the maximum amount of local data that client $l$ can choose under the cooperative strategy is $x_l^{th}=D$. The $k-l$ converted contributors can choose any same strategy on $(0, D]$ as their cooperative strategy. This strategy can reduce their cost. When $B_l < \frac{\rho_l}{E_l} \leq O_l$, $H_l(D) \geq 0$, the unique intersection of $H(x^{coop})$ and the $x^{coop}$ axis at $[0, + \infty)$ is the solution of the implicit function (13), we set it as $x_l^{th}(\frac{\rho_l}{E_l})$. So $x_l^{th}=x_l^{th}(\frac{\rho_l}{E_l})$, the $k-l$ converted contributors can choose any strategy on $(0, x_l^{th}(\frac{\rho_l}{E_l}))$ as their cooperative strategy. This strategy can reduce their cost.

When $\frac{\rho_l}{E_l} \leq B_l$, it means $H_l^{'}(0) \geq 0, H_l(0) = 0$, $H_l(x^{coop})$ is a monotonically increasing function of $x^{coop}$, so there is no intersection at $(0, + \infty)$, all free riders should choose 0 as their strategy. If $\frac{\rho_1}{E_1} \leq B_1, \frac{\rho_2}{E_2} \leq B_2, \cdots, \frac{\rho_{k-1}}{E_{k-1}} \leq B_{k-1}$, it means that there will not exist a cooperative strategy that can reduce the cost of each client at the same time, so all client will not cooperate and play the NE of the stage game. The reason of this situation is clients has low valuation for the accuracy of the global model. With the increase of training data, the local model is good enough and the global model has small improvement. The reduction of accuracy loss is very small (i.e., the difference of accuracy loss between the NE of stage game and cooperative strategy of repeated game), while the impact of computation cost is very significant. Any increase in data will lead to the increase of total cost.

Now we will illustrate why each client can reduce their cost under the cooperative strategy.

According to Theorem 3, clients $1, 2, \cdots, l-1$ will choose 0 as their strategy. Because $(k-l)x^{coop} + x_k^{*} + (N-k)D > (N-k)D+x_k^*$, we can get
\begin{equation}
\begin{split}
&H_1(x^{coop}) = \frac{\rho_1}{\sqrt{G}}(\frac{1}{\sqrt{(k-l)x^{coop} + x_k^* + (N-k)D}}\\
& - \frac{1}{\sqrt{(N-k)D+x_k^*}}) < 0.
\end{split}
\end{equation}
Similarly, $H_2(x^{coop})<0, \cdots, H_{l-1}^{coop} < 0$.

Because $\frac{\rho_l}{E_l} > B_l$, we can get 
\begin{equation}
\begin{split}
&H_l(x^{coop}) = \frac{\rho_l}{\sqrt{G}}(\frac{1}{\sqrt{(k-l)x^{coop} + x_k^* + (N-k)D}} \\
&- \frac{1}{\sqrt{(N-k)D+x_k^*}})+E_lx^{coop} \leq 0.
\end{split}
\end{equation}

By transplanting the inequality, we can get

\begin{equation}
\frac{\rho_l}{E_l} \geq \frac{\sqrt{G}x^{coop}}{\frac{1}{\sqrt{(N-k)D+x_k^*}}-\frac{1}{\sqrt{(k-l)x^{coop} + x_k^* + (N-k)D}}}.
\end{equation}

Because $\frac{\rho_l}{E_l} < \frac{\rho_{l+1}}{E_{l+1}} < \cdots < \frac{\rho_{k-1}}{E_{k-1}}$, we can get

\begin{equation}
\frac{\rho_{l+1}}{E_{l+1}} > \frac{\rho_l}{E_l} \geq \frac{\sqrt{G}x^{coop}}{\frac{1}{\sqrt{(N-k)D+x_k^*}}-\frac{1}{\sqrt{(k-l)x^{coop} + x_k^* + (N-k)D}}}.
\end{equation}

By transplanting the inequality, we can get

\begin{equation}
\begin{split}
&\frac{\rho_{l+1}}{\sqrt{G}}(\frac{1}{\sqrt{(k-l)x^{coop} + x_k^* + (N-k)D}} \\
&- \frac{1}{\sqrt{(N-k)D+x_k^*}})+E_{l+1}x^{coop} = H_{l+1}(x^{coop}) < 0.
\end{split}
\end{equation}

Note that the difference between $H_l(x^{coop})$ and $H_{l+1}(x^{coop})$ lies in $\rho_l$ and $E_l$. $k-l$ remains unchanged because it represents the number of converted contributors, which is unrelated with the subscript of function $H_l(x^{coop})$. 

Similarly, we can get $H_{l+2}(x^{coop}) < 0, \cdots, H_{k-1}(x^{coop}) < 0$.

For client $k$, his cooperative strategy is $x_k^*$, we can get 
\begin{equation}
\begin{split}
H_k(x^{coop}) = &\frac{\rho_k}{\sqrt{G}}(\frac{1}{\sqrt{(k-l)x^{coop} + x_k^* + (N-k)D}} - \\
& \frac{1}{\sqrt{(N-k)D+x_k^*}}) <0.
\end{split}
\end{equation}

For client $k+1$, his cooperative strategy is $D$, we can get
\begin{equation}
\begin{split}
&H_{k+1}(x^{coop}) = \frac{\rho_{k+1}}{\sqrt{G}}(\frac{1}{\sqrt{(k-l)x^{coop} + x_k^* + (N-k)D}} \\
&- \frac{1}{\sqrt{(N-k)D+x_k^*}})< 0.
\end{split}
\end{equation}
Similarly, $H_{k+2}(x^{coop})<0, \cdots, H_N(x^{coop})< 0$.

\section{Proof of Theorem 4.}\label{mmm1}

We will prove that under the cooperative strategy, each client can reduce their cost under the cooperative strategy.

Similarly to the proof of Theorem 3, $F_l(\boldsymbol{x}^{coop})-F_l(\boldsymbol{x}^*) = H_l(x^{coop})$ which is the difference between the cost under cooperative strategy and the cost at the NE of the Game 1. Specifically, $H_l(x^{coop}) = \frac{\rho_l}{\sqrt{G}}(\frac{1}{\sqrt{(m-l+1)x^{coop} +(N-m)D}}-\frac{1}{\sqrt{(N-m)D}})+E_lx^{coop}$.

Take the first derivative of $H_l(x^{coop})$:

\noindent$H_l^{'} (x^{coop}) = E_l - \frac{(m-l+1)\rho_l}{2\sqrt{G((N-m)D+(m-l+1)x_1^{coop})^3}}$.

When $\frac{\rho_l}{E_l} > B_l^{'}$, $H_l^{'}(0) < 0, H_l(0) = 0, H_l^{'}(+\infty) >0$. It's easy to know that $H_l^{'}(x^{coop})$ is a monotonically increasing function of $x^{coop}$, so the curve of $H_l(x^{coop})$ is a curve that monotonically decreases and then monotonically increases, like a parabola with an upper opening. When $\frac{\rho_l}{E_l} > O_l^{'}, H_l(D) < 0$, the unique intersection of $H_l(x^{coop})$ and the $x^{coop}$ axis at $(0, + \infty)$ is larger than $D$, so the maximum amount of local data that client $l$ can choose under the cooperative strategy is $x_l^{th}=D$. The $m-l+1$ converted contributors can choose any same strategy on $(0, D]$ as their cooperative strategy. This strategy can reduce their cost. When $B_l^{'} < \frac{\rho_l}{E_l} \leq O_l^{'}$, $H_l(D) \geq 0$, the unique intersection of $H_l(x^{coop})$ and the $x^{coop}$ axis at $(0, + \infty)$ is the solution of the implicit function (17), we set it as $x_l^{th}(\frac{\rho_l}{E_l})$, so $x_l^{th} = x_l^{th}(\frac{\rho_l}{E_l})$. The $m-l+1$ converted contributors can choose any strategy on $(0, x_l^{th}(\frac{\rho_l}{E_l}))$ as their cooperative strategy. This strategy can reduce their cost.

When $\frac{\rho_l}{E_l} \leq B_l^{'}$, it means $H_l^{'}(0) \geq 0, H_l(0) = 0$, $H_l(x^{coop})$ is a monotonically increasing function of $x^{coop}$, so there is no intersection at $(0, + \infty)$, all free riders should choose 0 as their strategy. If $\frac{\rho_1}{E_1} \leq B_1^{'}, \frac{\rho_2}{E_2} \leq B_2^{'}, \cdots, \frac{\rho_m}{E_m} \leq B_m^{'}$, it means that there will not exist a cooperative strategy to reduce the cost of each client at the same time, so all client will not cooperate and play the NE of Game 1. The reason is the same as in the proof of Theorem 3.

According to Theorem 4, clients $1, 2, \cdots, l-1$ will choose 0 as their strategy. Because $(m-l+1)x^{coop} + (N-m)D > (N-m)D$, we can get
\begin{equation}
\begin{split}
H_1(x^{coop})= &\frac{\rho_1}{\sqrt{G}}(\frac{1}{\sqrt{(m-l+1)x^{coop} + (N-m)D}} \\
&- \frac{1}{\sqrt{(N-m)D}}) < 0.
\end{split}
\end{equation}
Similarly, $H_2(x^{coop})<0, \cdots, H_{l-1}(x^{coop}) < 0$.

Because $\frac{\rho_l}{E_l} > B_l^{'}$, we can get 
\begin{equation}
\begin{split}
H_l(x^{coop}) = &\frac{\rho_l}{\sqrt{G}}(\frac{1}{\sqrt{(m-l+1)x^{coop} + (N-m)D}} \\
&- \frac{1}{\sqrt{(N-m)D}})+E_lx^{coop} \leq 0.
\end{split}
\end{equation}

By transplanting the inequality, we can get

\begin{equation}
\frac{\rho_l}{E_l} \geq \frac{\sqrt{G}x^{coop}}{\frac{1}{\sqrt{(N-m)D}}-\frac{1}{\sqrt{(m-l+1)x^{coop} + (N-m)D}}}.
\end{equation}

Because $\frac{\rho_l}{E_l} < \frac{\rho_{l+1}}{E_{l+1}} < \cdots < \frac{\rho_m}{E_m}$, we can get

\begin{equation}
\frac{\rho_{l+1}}{E_{l+1}} > \frac{\rho_l}{E_l} \geq \frac{\sqrt{G}x^{coop}}{\frac{1}{\sqrt{(N-m)D}}-\frac{1}{\sqrt{(m-l+1)x^{coop} + (N-m)D}}}.
\end{equation}

By transplanting the inequality, we can get

\begin{equation}
\begin{split}
&\frac{\rho_{l+1}}{\sqrt{G}}(\frac{1}{\sqrt{(m-l+1)x^{coop} + (N-m)D}} \\
&- \frac{1}{\sqrt{(N-m)D}})+E_{l+1}x^{coop} = H_{l+1}(x^{coop}) < 0.
\end{split}
\end{equation}
Similarly, we can get $H_{l+2}(x^{coop}) < 0, \cdots, H_m(x^{coop}) <0$.

For client $m+1$, his cooperative strategy is $D$, we can get
\begin{equation}
\begin{split}
H_{m+1}(x^{coop}) = &\frac{\rho_{m+1}}{\sqrt{G}}(\frac{1}{\sqrt{(m-l+1)x^{coop} + (N-m)D}} \\
&- \frac{1}{\sqrt{(N-m)D}}) < 0 .
\end{split}
\end{equation}
Similarly, $H_{m+2}(x^{coop})<0, \cdots, H_N(x^{coop}) < 0$.

\section{Proof of Theorem 5}\label{lem3}

Here we assume cooperative free riders choose $x^{coop}$ as their cooperative strategy.

For client $n$, when client $n$ utilizes the cooperation of other clients to achieve the least cost at current time slot. The cost function of client $n$ is

\begin{equation}
\begin{split}
&F_n(\boldsymbol{x}_n^{least}) = \\
&\frac{\rho_n}{\sqrt{G}}(\frac{1}{\sqrt{(k-l-1)x^{coop}+x_k^{*}+(N-k)D+x}} \\
&+\frac{1}{\sqrt{G}})+E_nx+C_n+p.
\end{split}
\end{equation}

Set $F_n(\boldsymbol{x}_n^{least})^{'} = 0$, we can get

\begin{equation}
\begin{split}
x &= \sqrt[3]{\frac{\rho_n^2}{4GE_n^2}}-(k-l-1)x^{coop}-(N-k)D-x_k^{*} \\
& \leq \sqrt[3]{\frac{\rho_n^2}{4GE_n^2}}-\sqrt[3]{\frac{\rho_k^2}{4GE_k^2}} -(k-l-1)x^{coop} \\
& <0.
\end{split}
\end{equation}

Similarly, for client $l+1,\cdots, k-1$, the point at which the cost function minimizes is $0$. So we can calculate the threshold discount factor $\delta_l^{th}(\boldsymbol{x}^{coop})$ of client $l$:

\begin{equation}
\begin{aligned}
&\delta_l^{th}(\boldsymbol{x}^{coop}) = \frac{F_l(\boldsymbol{x}^{coop})-F_l(\boldsymbol{x}_n^{least})}{F_l(\boldsymbol{x}^{pun})-F_l(\boldsymbol{x}_n^{least})} \\
&= \frac{\frac{1}{\sqrt{(k-l)x^{coop}+x_k^{*}+(N-k)D}}-\frac{1}{\sqrt{(k-l-1)x^{coop}+x_k^{*}+(N-k)D}}}{\frac{1}{\sqrt{x_k^*+(N-k)D}}-\frac{1}{\sqrt{(k-l-1)x^{coop}+x_k^{*}+(N-k)D}}} \\
&+\frac{E_lx^{coop}\sqrt{G}}{\rho_l (\frac{1}{\sqrt{x_k^*+(N-k)D}}-\frac{1}{\sqrt{(k-l-1)x^{coop}+x_k^{*}+(N-k)D}})}.
\end{aligned}
\end{equation}

It's easy to see that the threshold discount factor of $\delta_l^{th}(\boldsymbol{x}^{coop})$ decreases with the increase of $\frac{\rho_l}{E_l}$, so we can know that the threshold discount factor of client $l$ is larger than the threshold discount factor of other converted contributors, i.e., $\delta_l^{th}(\boldsymbol{x}^{coop}) > \delta_{l+1}^{th}(\boldsymbol{x}^{coop}) > \cdots > \delta_{k-1}^{th}(\boldsymbol{x}^{coop})$.

For client $k$, similar to client $l$, we can get the point at which the cost function of client $k$ minimizes, which we denote as $x_k^{least}$:

\begin{equation}
x_k^{least} = 
\left\{
\begin{aligned}
& 0, & \mbox{ if } & x_k^*-(k-l)x^{coop} \leq 0; \\
& x_k^*-(k-l)x^{coop}, & \mbox{ if } & 0< x_k^*-(k-l)x^{coop}<D.
\end{aligned}
\right.
\end{equation}

The cost function of client $k$ under this strategy profile is: 
\begin{equation}
\begin{split}
&F_k(\boldsymbol{x}_k^{least}) =\\
&\frac{\rho_k}{\sqrt{G}}(\frac{1}{(k-l)x^{coop}+x_k^{least}+(N-k)D} +\frac{1}{\sqrt{G}})+\\
&E_kx_k^{least}+C_k+p.
\end{split}
\end{equation} 

Then we calculate the threshold discount factor of client $k$:

\begin{equation}
\delta_k^{th}(\boldsymbol{x}^{coop}) = \frac{F_k(\boldsymbol{x}^{coop})-F_k(\boldsymbol{x}_k^{least})}{F_k(\boldsymbol{x}^{pun})-F_k(\boldsymbol{x}_k^{least})}.
\end{equation}

For client $k+1$, we can get the point at which the cost function of client $k+1$ minimizes, which we denote as $x_{k+1}^{least}$, here we denote $\sqrt[3]{\frac{\rho_{k+1}^2}{4GE_{k+1}^2}}-(k-l)x^{coop} -x_k^{*}-(N-k-1)D$ as $x_{k+1}^{min}$ for convenience, 

\begin{equation}
x_{k+1}^{least} = 
\left\{
\begin{aligned}
& 0,  & \mbox{ if } &x_{k+1}^{min} \leq 0; \\
& x_{k+1}^{min},  & \mbox{ if } &0<x_{k+1}^{min}<D; \\
& D, & \mbox{ if } &x_{k+1}^{min} \geq D.
\end{aligned}
\right.
\end{equation}

The cost function of client $k+1$ under this strategy profile is: 
\begin{equation}
\begin{split}
&F_{k+1}(\boldsymbol{x}^{least}) =\\
&\frac{\rho_{k+1}}{\sqrt{G}}(\frac{1}{(k-l)x^{coop}+x_k^{*}+(N-k-1)D+x_{k+1}^{least}} +\frac{1}{\sqrt{G}})+\\
&E_{k+1}x_{k+1}^{least}+C_{k+1}+p.
\end{split}
\end{equation} 

Then we calculate the threshold discount factor of client $k+1$:

\begin{equation}
\delta_{k+1}^{th}(\boldsymbol{x}^{coop}) = \frac{F_{k+1}(\boldsymbol{x}^{coop})-F_{k+1}(\boldsymbol{x}_{k+1}^{least})}{F_{k+1}(\boldsymbol{x}^{pun})-F_{k+1}(\boldsymbol{x}_{k+1}^{least})}.
\end{equation}

Similarly, for client $k+2, \cdots, N$, we can get their threshold discount factor $\delta_{k+2}^{th}(\boldsymbol{x}^{coop}), \cdots, \delta_N^{th}(\boldsymbol{x}^{coop})$ respectively.

When the critical client $k$ does not exist, the process is similarly to the above, and we will not repeat it.

\section{Additional performances}\label{addper}

We first show the impact of several system parameters on $N_f$ and $R_d$, including the number of iterations $G$, the computation cost coefficient $E$, and the discount factor $\delta$. We also discuss the impact of the distribution of clients’ valuation parameters on the number of reduced free riders.

\begin{figure}[t]
 \centering
 \begin{minipage}[h]{0.48\linewidth}
 \centering
 \includegraphics[width=1.02\textwidth]{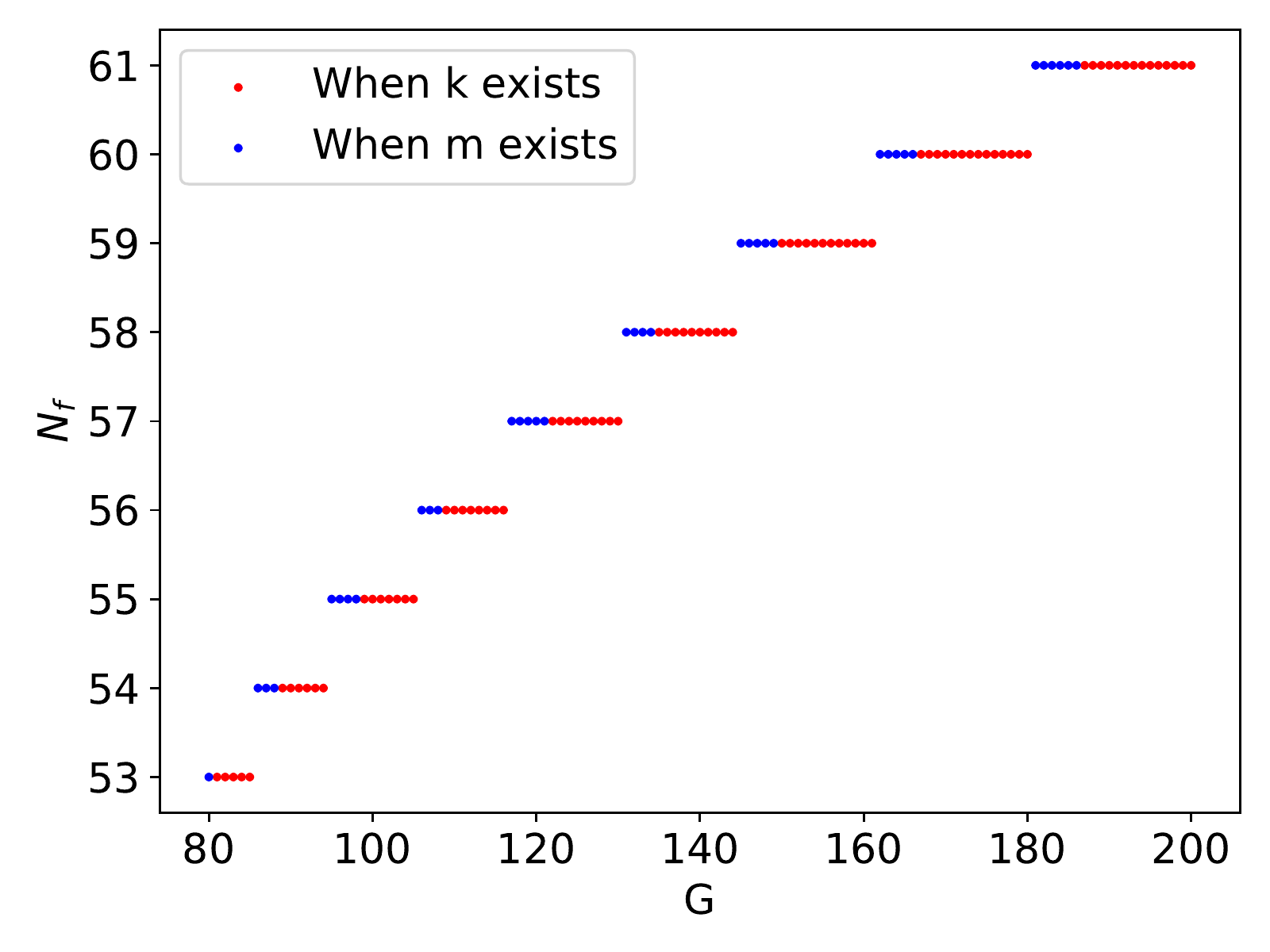}
 \caption{$N_f$ under different $G$}\label{figG1}
 \end{minipage}%
 \begin{minipage}[h]{0.48\linewidth}
 \centering
 \includegraphics[width=1.02\textwidth]{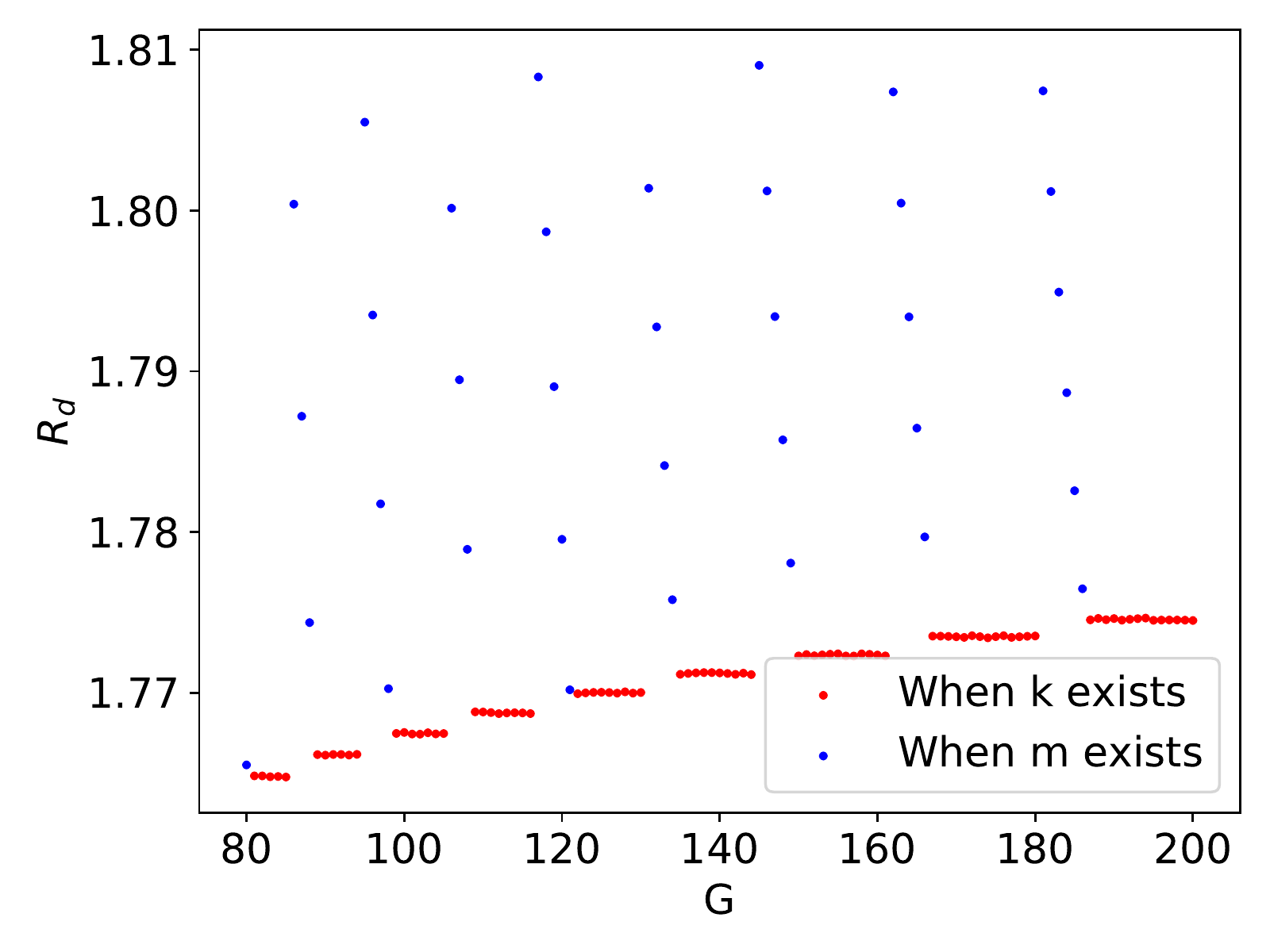}
 \caption{$R_d$ under different $G$}\label{figG2}
 \end{minipage}%
 \centering
\end{figure}

\textbf{The impact of the number of iterations $G$.} We show how the number of iterations $G$ affects the number of reduced free riders $N_f$ and the data contribution ratio $R_d$ in Fig. \ref{figG1} and Fig. \ref{figG2} respectively.

Fig. \ref{figG1} shows that the number of reduced free riders increases with $G$. A larger $G$ means more iterations in one cross-silo FL process, in which case clients can achieve a global model with a higher model accuracy. As $G$ increases, at the NE of the stage game SPFL, contributors with low valuation-computation ratios for the global model accuracy will change to be free riders, because the global model trained by contributors with high valuation-computation ratios under a large $G$ is good enough. So the number of free riders at the NE of the stage game increases with the number of iterations $G$ in cross-silo FL. 
The curve is a stepwise curve, where each step (i.e., the interval of $G$ values) corresponds to a fixed number of free riders at the NE of the stage game SPFL. When the critical client $k$ exists at equilibrium, its equilibrium strategy $x_k^\ast$ (calculated in Theorem 1) decreases with $G$, until $x_k^\ast=0$ in which case, the number of free riders at equilibrium increases by one. When the critical client $k$ does not exist, the number of free riders at equilibrium does not change with $G$, until a contributor changes to be a free rider.
Our proposed optimal SPNE can convert most free riders into converted contributors and thus solves the free-rider problem effectively.

Fig. \ref{figG2} shows how the data contribution ratio $R_d$ changes with $G$. Specifically, when the critical client $k$ exists, the data contribution ratio $R_d$ generally increases with $G$. As we discussed in Fig. \ref{figG1}, the number of free riders at the NE of the stage game increases with $G$, and there will be less local data for model training. Since the equilibrium strategy $x_k^\ast$ of the critical client $k$ decreases with $G$, converted contributors will choose more local data for model training to reduce their costs, and hence the total data chosen by clients increases. When the critical client $k$ does not exist, under the same number of free riders, the data contribution ratio $R_d$ decreases with $G$. The reason is that under a larger $G$, converted contributors will choose less local data for model training to reduce the computation cost while achieving the same global model accuracy. Under the same number of contributors, the total amount of training data chosen by converted contributors when the critical client $k$ exists is less than that when the critical client $k$ does not exist, thus the blue dots are above the red dots under the same number of contributors.
Our proposed optimal SPNE can increase the amount of total data for model training compared with the NE of the stage game.

\begin{figure}[t]
 \centering
 \begin{minipage}[t]{0.48\linewidth}
 \centering
 \includegraphics[width=1.02\textwidth]{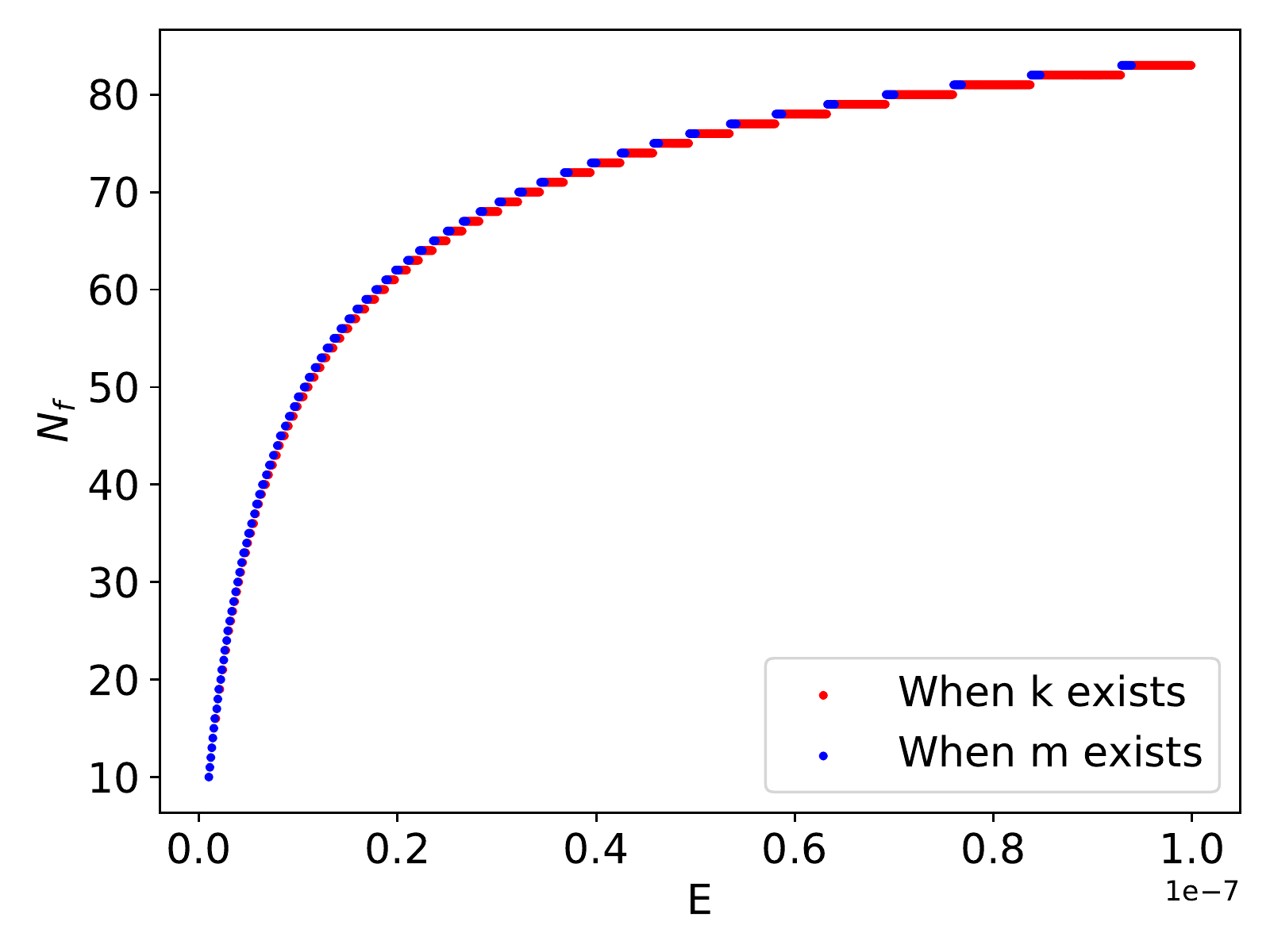}
 \caption{$N_f$ under different $E$}\label{figE1}
 \end{minipage}%
 \begin{minipage}[t]{0.48\linewidth}
 \centering
 \includegraphics[width=1.02\textwidth]{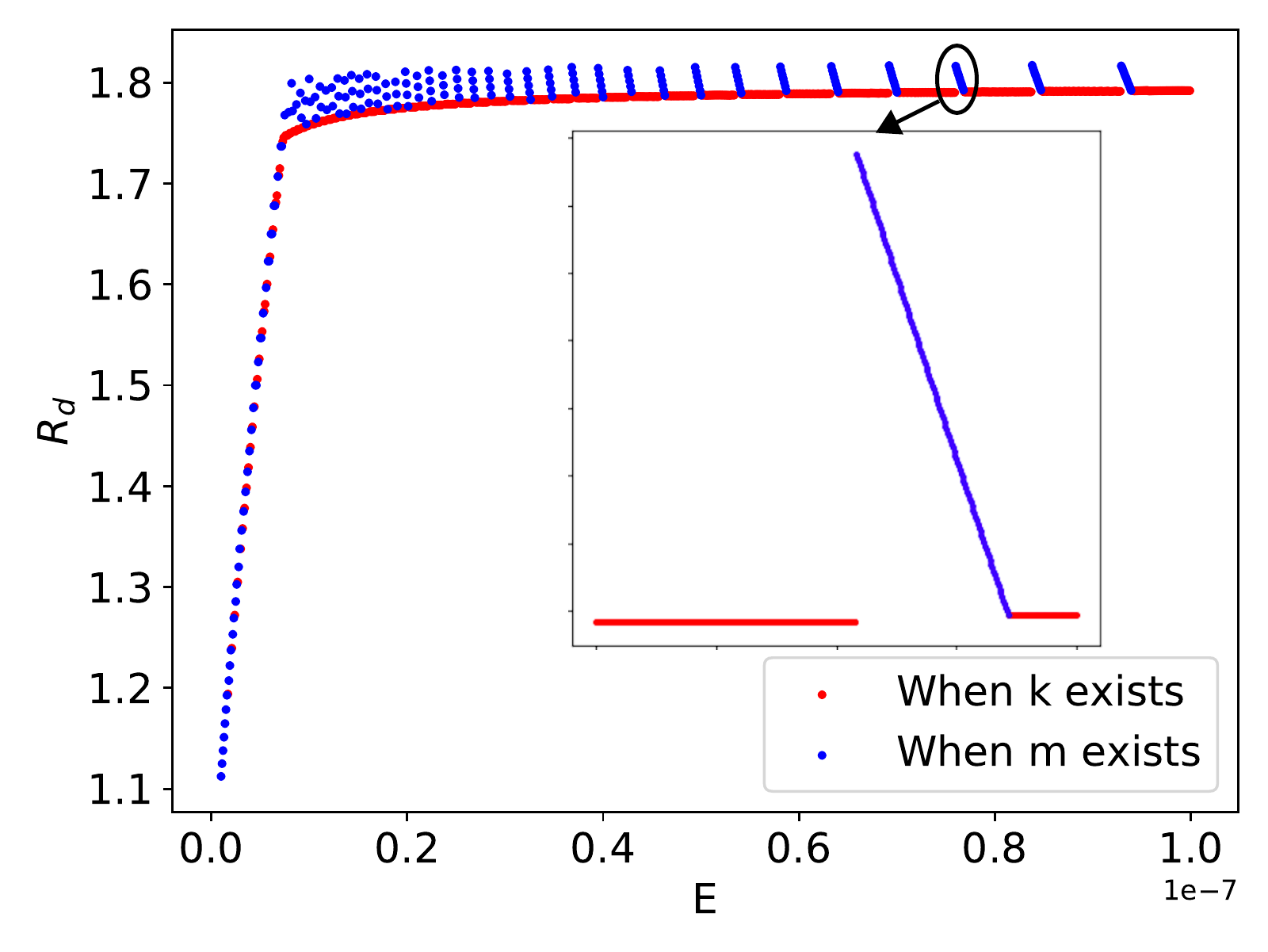}
 \caption{$R_d$ under different $E$}\label{figE2}
 \end{minipage}%
\end{figure}

\emph{In summary, when each client in cross-silo FL has more local data samples, the selfish participation behavior leads to more free riders. Our proposed optimal SPNE can effectively reduce the number of free riders by $98.4\%$, and increase the data contribution by $80.9\%$.}

\textbf{The impact of the computation cost coefficient $E$.} We show how the parameter $E$ affects the number of reduced free riders $N_f$ and the data contribution ratio $R_d$ in Fig. \ref{figE1} and Fig. \ref{figE2} respectively.

Fig. \ref{figE1} shows that the number of reduced free riders increases with $E$ with a diminishing marginal return effect. When $E$ is small, performing local model training incurs a small computation cost, and hence there are few free riders at the NE of the stage game. When $E$ is large, clients experience a huge computation cost when performing model training, and hence many clients with low valuation-computation ratios choose to be free riders at the NE of the stage game to avoid computation cost. Our proposed optimal SPNE can effectively reduce the number of free riders. 

Fig. \ref{figE2} shows how the data contribution ratio $R_d$ changes with $E$, and the trend of the curve is similar to that in Fig. 6. Specifically, when the critical client $k$ exists, the data contribution ratio $R_d$ generally increases with $E$. When $E$ is small, $R_d$ increases quickly with $E$, and when $E$ is large, $R_d$ increases slowly in a zigzag pattern. When $E$ is small, there are few free riders at the NE of the stage game, and hence there is no much room to increase the amount of local data for model training. As $E$ increases, the cooperative strategy allows the converted contributors to choose a positive amount of local data to perform model training, which increases the amount of chosen data by up to $81.6\%$. The subfigure in Fig. \ref{figE2} shows the zigzag shape of the curve under the same number of free riders. The reason is similar to that of the subfigure of Fig. 6.

\begin{figure}[t]
 \centering
 \begin{minipage}[t]{0.48\linewidth}
 \centering
 \includegraphics[width=1.05\textwidth]{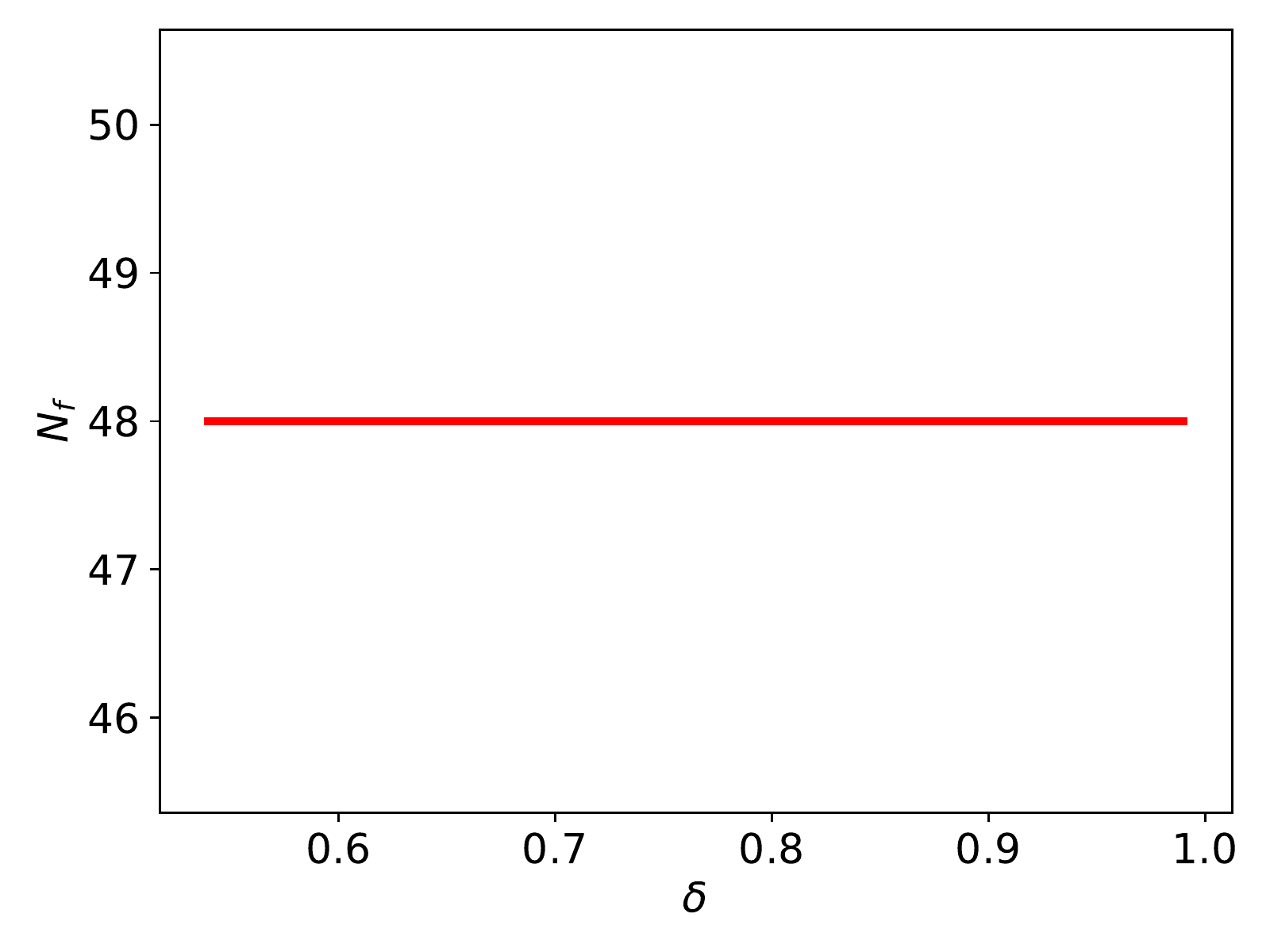}
 \caption{$N_f$ under different $\delta$}\label{figd1}
 \end{minipage}
 \begin{minipage}[t]{0.48\linewidth}
 \centering
 \includegraphics[width=1.05\textwidth]{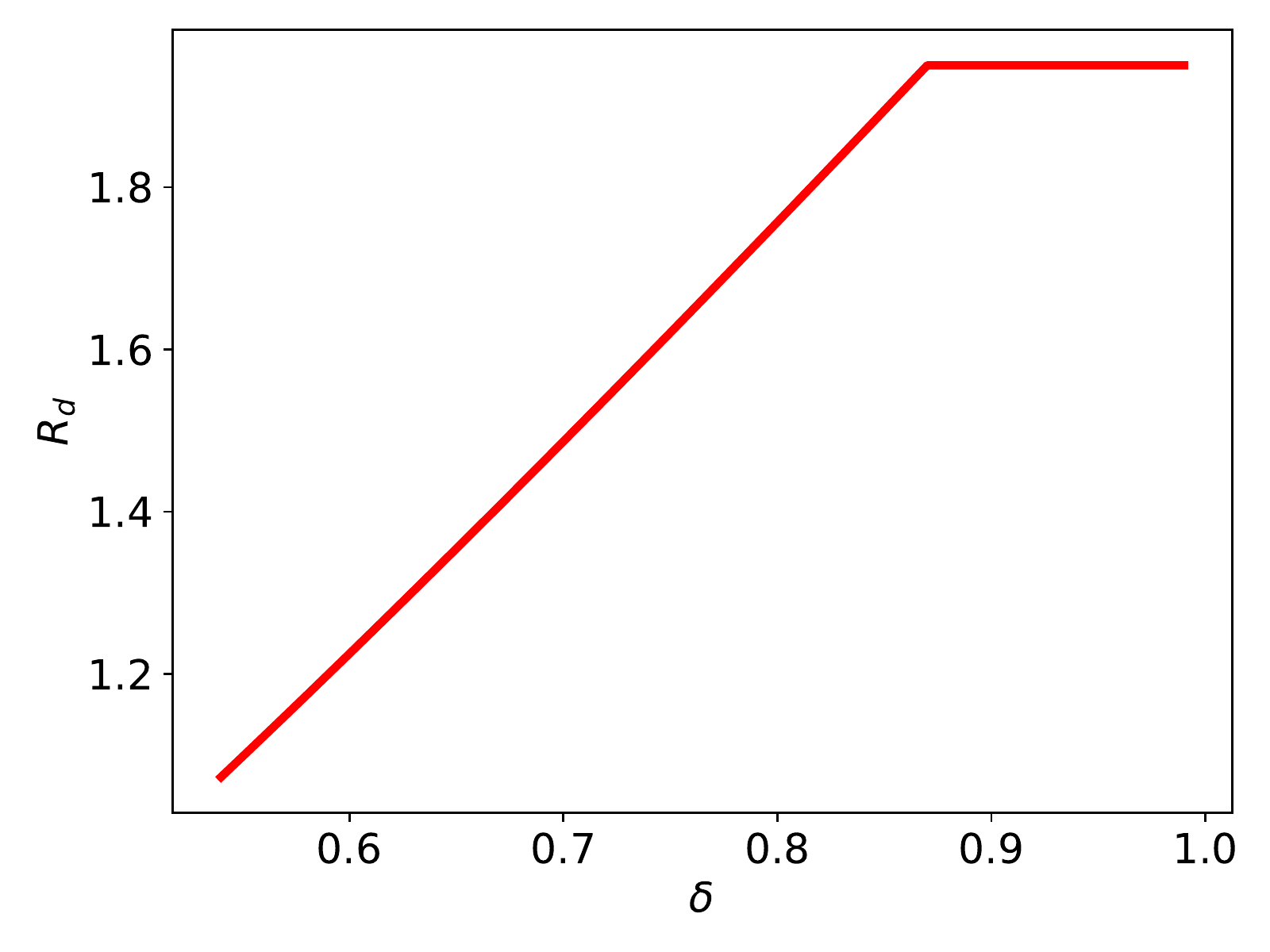}
 \caption{$R_d$ under different $\delta$}\label{figd2}
 \end{minipage}%
\end{figure}

\textbf{The impact of the discount factor $\delta$.} We show how the discount factor $\delta$ affects the number of reduced free riders $N_f$ and the data contribution ratio $R_d$ in Fig. \ref{figd1} and Fig. \ref{figd2} respectively.

Fig. \ref{figd1} shows that the discount factor $\delta$ does not affect the number of reduced free riders. The reason is that the number of free riders depends on the valuation-computation ratios $\{\frac{\rho_n}{E_n}: \forall n\in\mathcal{N}\}$, the bounds $\{B_n: \forall n\in\mathcal{N}\}$, and the local data set sizes $\{D_n: \forall n\in\mathcal{N}\}$, and is independent of $\delta$.

Fig. \ref{figd2} shows that the data contribution ratio first increases with $\delta$, and then remains to be a constant when $\delta$ is larger than $0.87$. A larger discount factor $\delta$ indicates that converted contributors are more patient, and hence they are more willing to choose a larger amount of local data to reduce the long-term discounted total cost. When converted contributors are patient enough, i.e., $\delta \geq 0.87$, they will perform model training with all their local data, in which case the data contribution ratio is $R_d=1.96$.

\textbf{The impact of the distributions of $\rho$.} Fig. \ref{delta3} shows how the number of reduced free riders changes with $G$ under three distributions of $\rho$: $80\%$ low, $20\%$ high; $50\%$ low, $50\%$ high; and $20\%$ low, $80\%$ high. Here $80\%$ low means the valuation parameters of 80 clients are uniformly distributed in $(0,50]$, $20\%$ high means the valuation parameters of 20 clients are uniformly distributed in $(50, 100]$. We can see that there will be more free riders at the NE of the stage game when more clients have low valuations for global model accuracy. Our proposed optimal SPNE can convert most free riders into \emph{converted contributors} and thus solves the free-rider problem effectively.

\begin{figure}[t]
\centering
\includegraphics[width=2in]{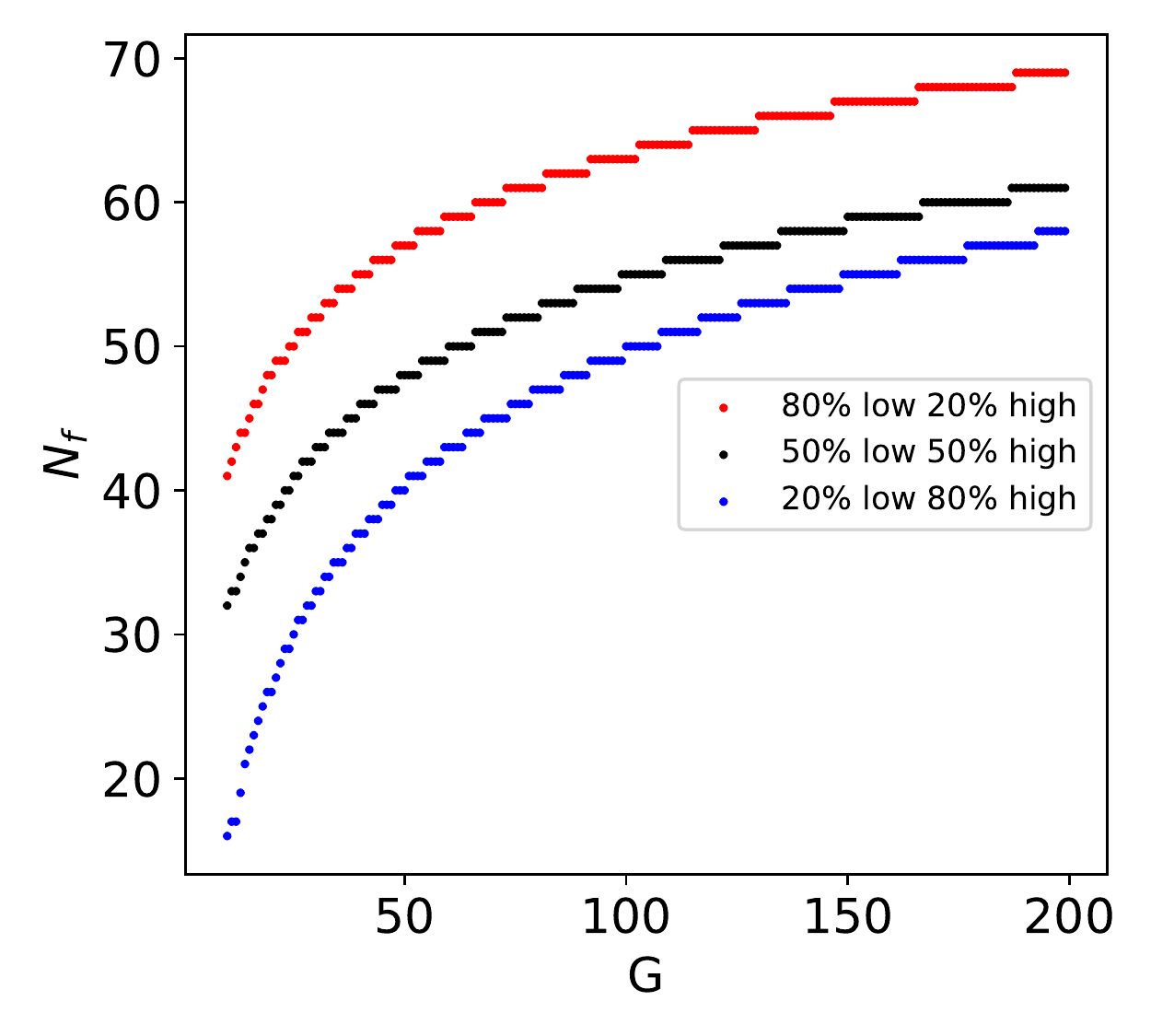}
\centering
\caption{$N_f$ under three different distributions of $\rho$ and different $G$}\label{delta3}
\end{figure}

\begin{figure}[t]
\centering
\begin{minipage}[h]{0.48 \linewidth}
\centering
\includegraphics[width=1.04\textwidth]{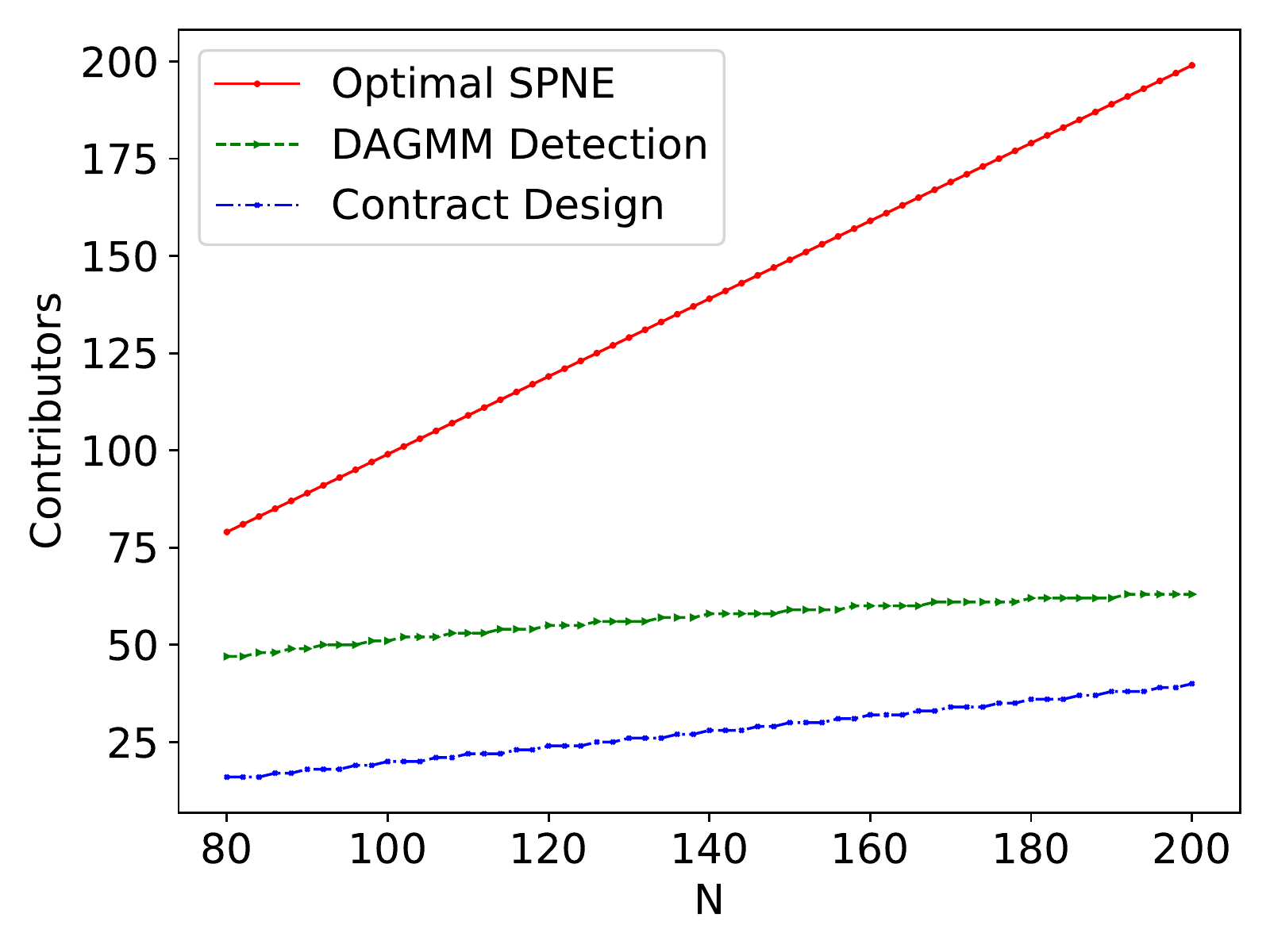}
  \caption{The number of contributors under different $N$}\label{CN}
\end{minipage}
\begin{minipage}[h]{0.48 \linewidth}
\centering
\includegraphics[width=1.04\textwidth]{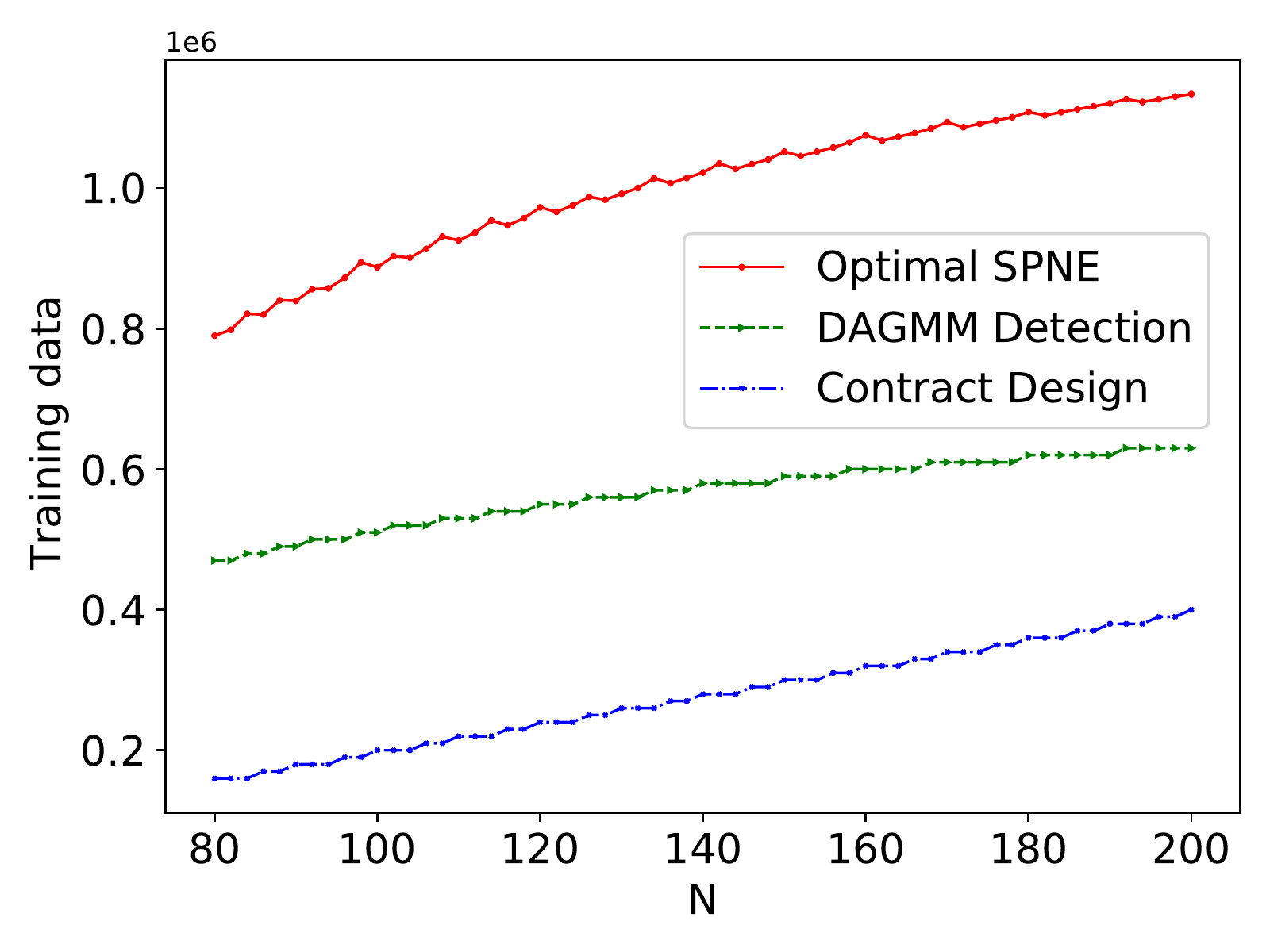}
  \caption{The total amount of training data under different $N$}\label{TN}
\end{minipage}
\end{figure}

In the following, we compare our method with the DAGMM detection method and the contract design method under different system parameters (i.e., the number of clients $N$, the number of local data samples $D$ and the computation cost coefficient $E$). 

Fig. \ref{CN} and Fig. \ref{TN} show the comparisons of the three methods under different values of $N$. Fig. \ref{CN} shows that among the three methods, our method achieves the largest number of contributors under different values of $N$. Under the DAGMM detection method, as $N$ increases, the global model trained by the contributors with high valuation-computation ratios is good enough, and hence the contributors with low valuation-computation ratios may become free riders to reduce their total costs. When the global model is good enough, if a new client joins the cross-silo FL process, it is very likely to choose to be a free rider. Therefore, as $N$ increases, the number of contributors under the DAGMM detection method remains almost unchanged, which implies that the number of free riders increases. Similarly, under the contract design method, the number of contributors increases only when the new client is provided with a positive contract item. Our proposed method can effectively motivate almost all free riders to become converted contributors.

Fig. \ref{TN} shows that our method has a larger amount of training data compared with the other two methods. The reason is that almost all free riders at the NE of the stage game choose to be converted contributors at the optimal SPNE of our method and use a positive amount of local data for model training. These free riders are abandoned in the other two methods.

\begin{figure}[t]
\centering
\begin{minipage}[h]{0.48 \linewidth}
\centering
\includegraphics[width=1.03\textwidth]{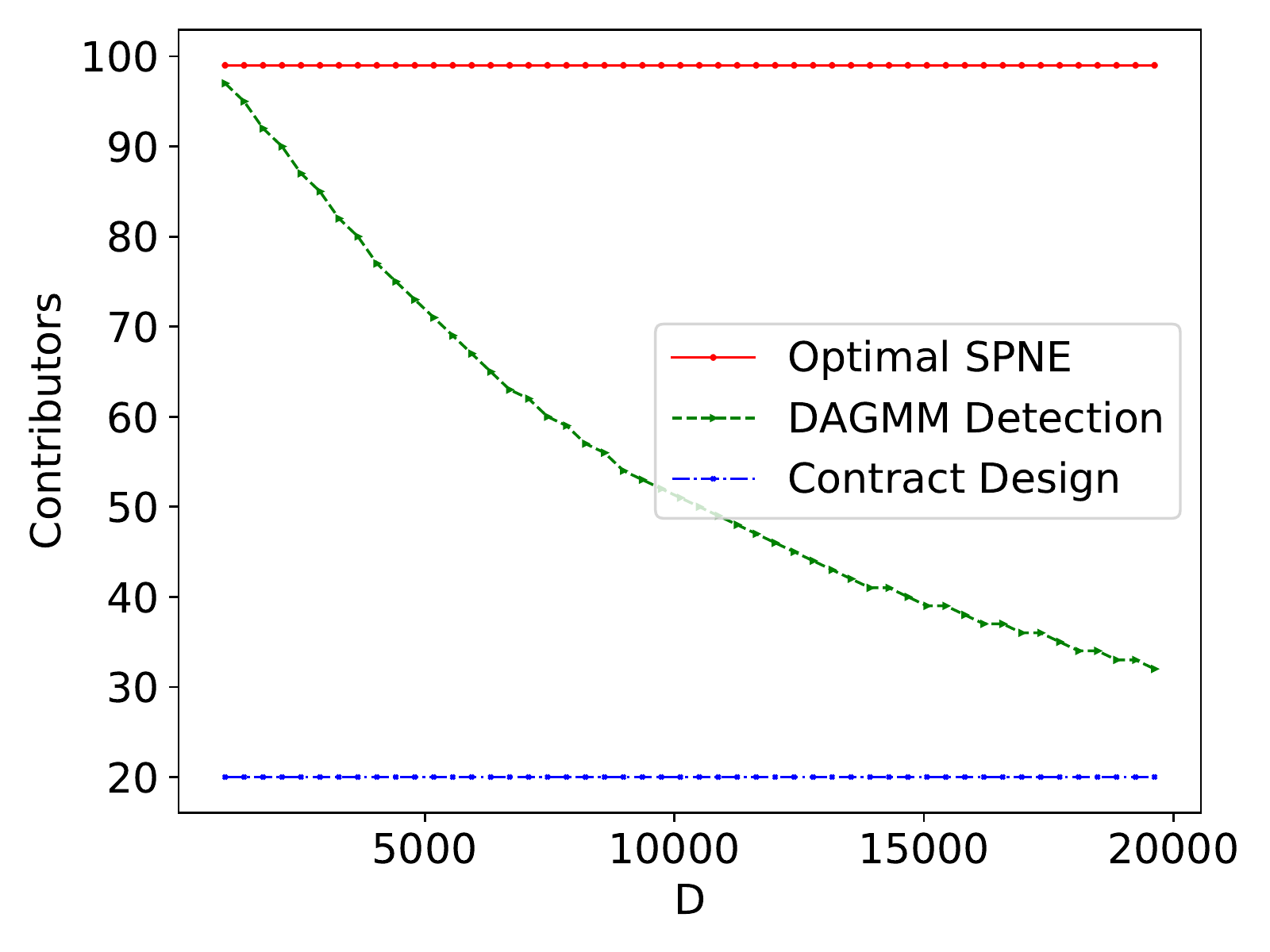}
  \caption{The number of contributors under different $D$}\label{CD}
\end{minipage}
\begin{minipage}[h]{0.48 \linewidth}
\centering
\includegraphics[width=1.03\textwidth]{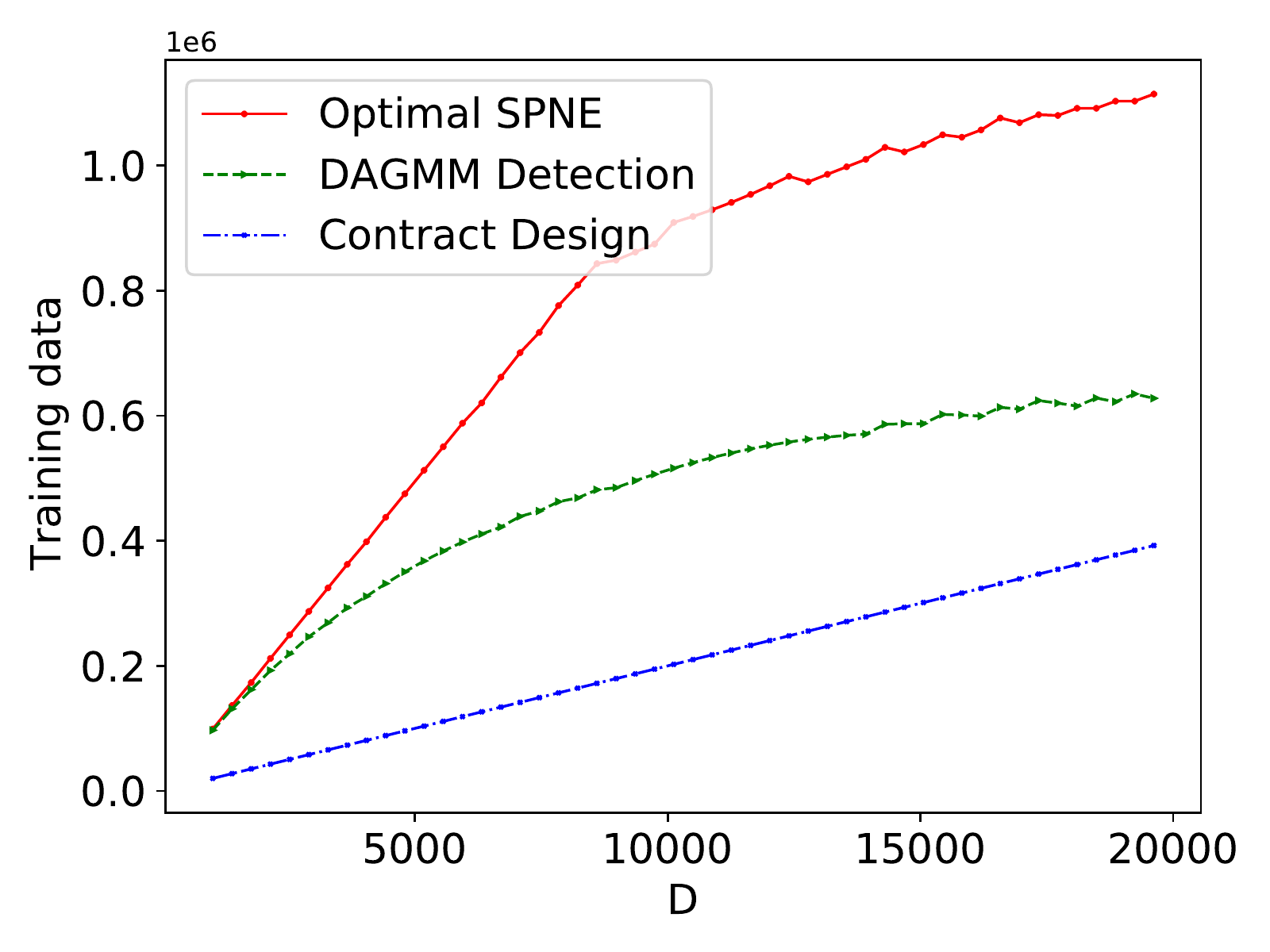}
  \caption{The total amount of training data under different $D$}\label{TD}
\end{minipage}
\end{figure}

Fig. \ref{CD} and Fig. \ref{TD} show the comparisons of the three methods under different values of $D$. Fig. \ref{CD} shows that among the three methods, our method achieves the largest number of contributors under different values of $D$. Specifically, under our method, the number of contributors remains unchanged and almost all clients participate in the training. Under the DAGMM detection method, the number of contributors decreases with $D$. Under the contract design method, only $20\%$ clients participate in the training. The reason is that under the DAGMM detection method, when $D$ is small, to achieve a good global model with a high model accuracy requires many clients to be contributors, and hence there are few free riders. When $D$ is large, a small number of contributors can train a good global model since each client has a large amount of local data to perform model training. Therefore, under the DAGMM detection method, the number of contributors decreases with $D$. Under the contract design method, since the change in $D$ does not affect the number of clients that are provided with a positive contract item, the number of contributors remains unchanged. Our proposed method can effectively motivate almost all free riders to become converted contributors. 

Fig. \ref{TD} shows that our method has a larger amount of training data compared with the other two methods. Specifically, under our method, the total amount of training data first increases linearly with $D$, and then increases slowly with $D$. The reason is that a smaller $D$ leads to a smaller threshold discount factor of clients. So when $D$ is small, converted contributors will participate in training with all local data. As $D$ increases, converted contributors can use only part of local data in training. Under the DAGMM detection method, since the number of contributors decreases with $D$, the total amount of training data increases slowly with $D$. Under the contract design method, since the change in $D$ does not affect the number of contributors, the total amount of training data increases linearly with $D$. 

\begin{figure}[t]
\centering
\begin{minipage}[h]{0.48 \linewidth}
\centering
\includegraphics[width=1.04\textwidth]{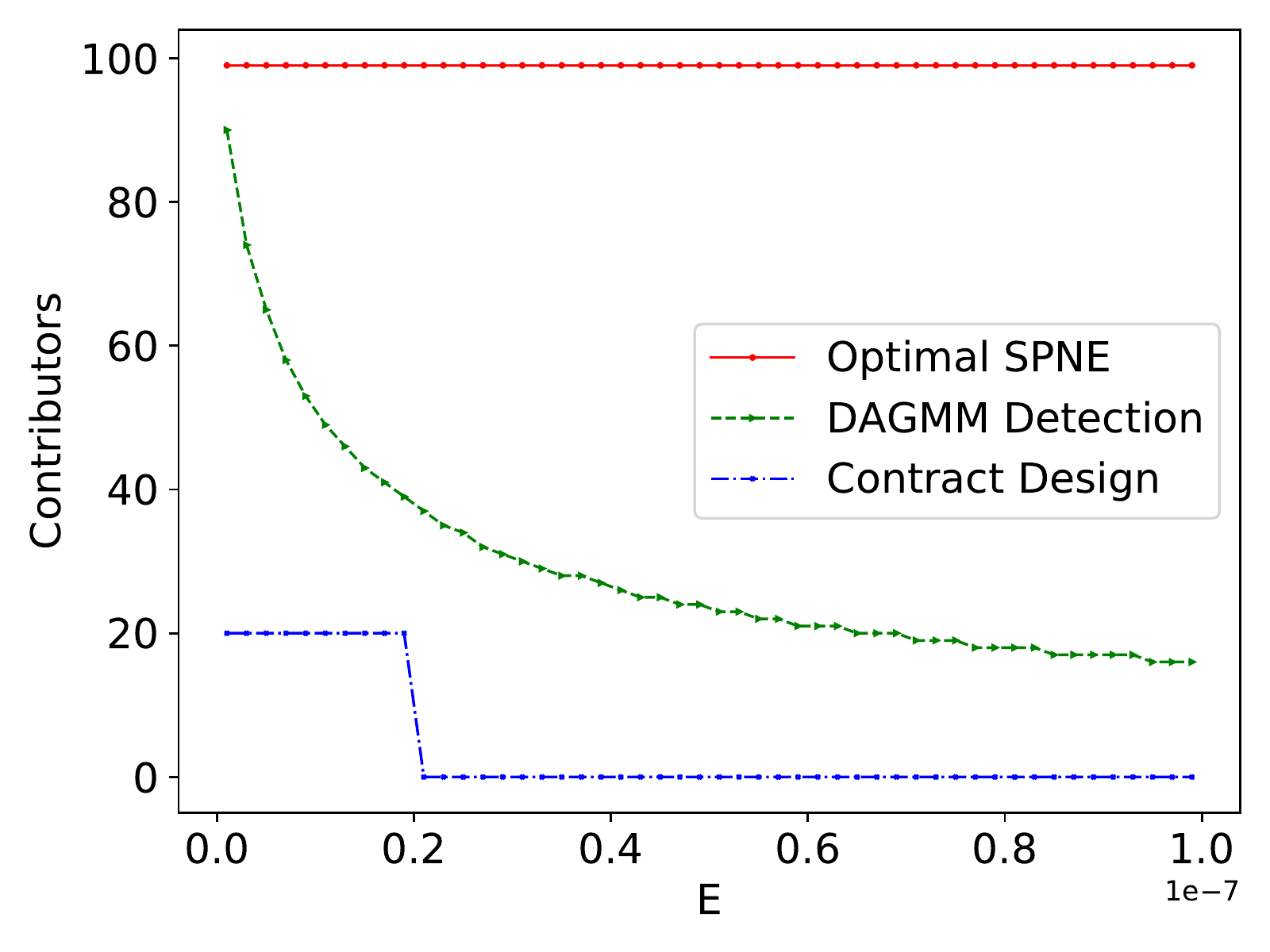}
  \caption{The number of contributors under different $E$}\label{CE}
\end{minipage}
\begin{minipage}[h]{0.48 \linewidth}
\centering
\includegraphics[width=1.04\textwidth]{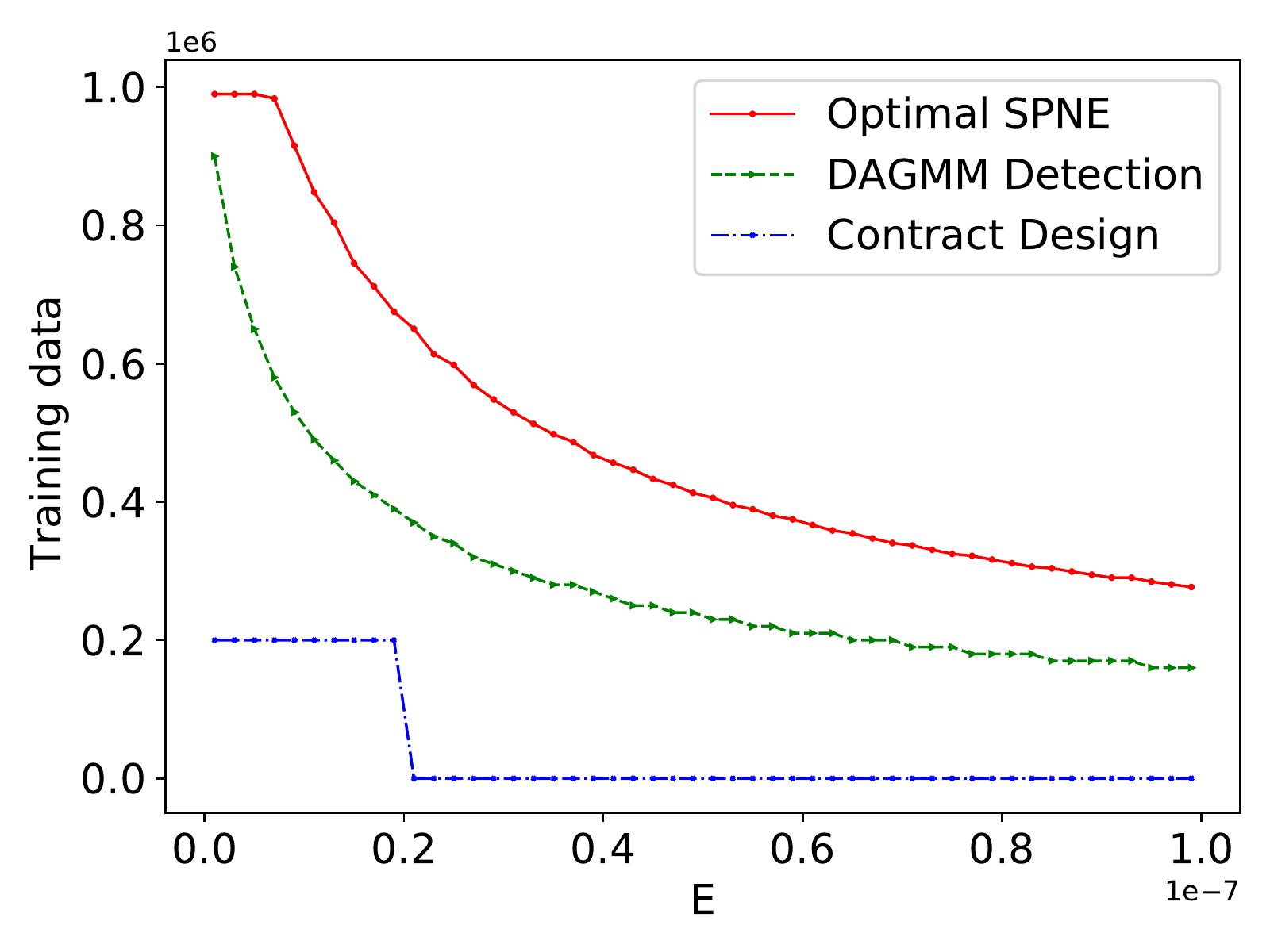}
  \caption{The total amount of training data under different $E$}\label{TE}
\end{minipage}
\end{figure}

Fig. \ref{CE} and Fig. \ref{TE} show the comparisons of the three methods under different values of $E$. Fig. \ref{CE} shows that among the three methods, our method achieves the largest number of contributors under different values of $E$. Specifically, under our method, the number of contributors remains unchanged and almost all clients participate in the training. Under the DAGMM detection method, the number of contributors decreases with $E$. The reason is that when $E$ is small, performing local model training incurs a small computation cost, and hence there are few free riders. When $E$ is large, clients experience a huge computation cost when performing model training, and hence many clients with low valuations for model accuracy choose to be free riders to avoid computation cost. Similarly, under the contract design method, the number of contributors first is a small positive number, and then drops to $0$ when $E$ is large. Our proposed method can effectively motivate almost all free riders to become converted contributors. 

Fig. \ref{TE} shows that our method has a larger amount of training data compared with the other two methods. Specifically, under our method, the total amount of training data first remains unchanged, and then decreases with $E$. The reason is that a smaller $E$ leads to a smaller threshold discount factor of clients. So when $E$ is small, converted contributors will participate in training with all local data. As $E$ increases, converted contributors can use only part of local data in training. Under the DAGMM detection method, since the number of contributors decreases with $E$, the total amount of training data also decreases. Under the contract design method, the change of the amount of training data with $E$ follows the same pattern as the change of the number of contributors in Fig. \ref{CE}.

\end{document}